\documentclass{article}

\PassOptionsToPackage{numbers,compress}{natbib}



\usepackage[final]{neurips_2023}


\usepackage{amsmath,amssymb,amsthm}

\usepackage[utf8]{inputenc} 
\usepackage[T1]{fontenc}    
\usepackage{hyperref}       
\usepackage{url}            
\usepackage{amsfonts}       
\usepackage{nicefrac}       
\usepackage{microtype}      
\usepackage{xcolor}         
\usepackage{algorithm}
\usepackage{algpseudocode}

\newtheorem{theorem}{Theorem}

\newtheorem{lemma}[theorem]{Lemma}

\usepackage{todonotes} 
\usepackage{soul} 
\usepackage[bibliography=separate]{apxproof} 

\newtheoremrep{theorem}{Theorem}
\newtheoremrep{corollary}{Corollary}[theorem]
\newtheoremrep{assumption}{Assumption}
\newtheoremrep{lemma}[theorem]{Lemma}
\newtheoremrep{definition}{Definition}
\usepackage{wrapfig}
\usepackage{multirow}
\newcommand\Tstrut{\rule{0pt}{2.6ex}}         
\newcommand\Bstrut{\rule[-0.9ex]{0pt}{0pt}}   
\usepackage{listings} 
\usepackage{adjustbox}

\newcommand{\round}{\mathcal{Q}}
\newcommand{\alglin}{\mathcal{A}}
\newcommand{\Lworst}{\mathcal{L}_{\operatorname{worst}}}
\newcommand{\Lavg}{\mathcal{L}_{\operatorname{avg}}}
\newcommand{\eig}[1]{\operatorname{eig}(#1)}
\newcommand{\diag}[1]{\operatorname{diag}(#1)}
\newcommand{\ldl}{\mathsf{LDLQ}}

\newcommand{\near}{\mathsf{Near}}
\newcommand{\stoch}{\mathsf{Stoch}}

\newcommand{\norm}[1]{\left\|#1\right\|}
\newcommand{\Abs}[1]{\left|#1\right|}
\newcommand{\Exv}[2][]{\mathbf{E}_{#1}\left[#2\right]}

\newcommand{\trace}[1]{\operatorname{tr}\left(#1\right)}
\newcommand{\R}{\mathbb{R}}

\title{QuIP: 2-Bit Quantization of \\ Large Language Models With Guarantees}

%

\author{
  Jerry Chee \\
  Cornell University \\
  \texttt{jerrychee@cs.cornell.edu} \\
  \And
  Yaohui Cai \\
  Cornell University \\
  \texttt{yc2632@cornell.edu} \\
  \AND
  Volodymyr Kuleshov \\
  Cornell University \\
  \texttt{kuleshov@cornell.edu} \\
  \And
  Christopher    De Sa \\
  Cornell University \\
  \texttt{cdesa@cs.cornell.edu} \\
}

\begin{document}

\maketitle

\begin{abstract}
This work studies post-training parameter quantization in large language models (LLMs). 
We introduce quantization with incoherence processing (QuIP), a new method based on the insight that quantization benefits from {\em incoherent} weight and Hessian matrices, i.e., 
from the weights being even in magnitude
and the directions in which it is important to round them accurately being unaligned with the coordinate axes.
QuIP consists of two steps: 
(1)~an adaptive rounding procedure minimizing a quadratic proxy objective;
(2)~efficient pre- and post-processing that ensures weight and Hessian incoherence via multiplication by random orthogonal matrices.
We complement QuIP with the first theoretical analysis for an LLM-scale quantization algorithm, and show that our theory also applies to an existing method, OPTQ.
Empirically, we find that our incoherence preprocessing improves several existing quantization algorithms and yields the first LLM quantization methods that produce viable results using only two bits per weight.
Our code can be found at \url{https://github.com/Cornell-RelaxML/QuIP}.
\end{abstract}

\begin{toappendix}

\section{Checklist}

\subsection{Broader Impacts}
Our work pushes the quantization of large language models into the 2 bits per weight regime.
Our aim is to drive foundational research on theoretical and empirical aspects of quantization.
The ultimate goal is to enable more powerful LLMs to run more efficiently.
However our work is unaware to what ends those LLMs are used.

\subsection{Limitations}
The adaptive rounding~\cite{nagel2020up} proxy objective considers each layer in isolation; it remains to be seen what other computationally tractable proxies could improve quantization.
For example quantization methods do exist which consider interactions between layers, but so far have been too computationally expensive to be applied to the largest open LLMS.

\subsection{Experiments, Reproducibility}
Our code is included in the Supplement.
See the included README for instructions on how to reproduce the various experiments, including random seeds.
The code also downloads all datasets used to quantize or evaluate the models.

\section{Additional Method Clarifications}\label{suppExtraMethod}
\subsection{Subsection~\ref{secAddHeur} (Incoherence-Based Heuristics)}
Line~\ref{alg1rescale_a} diagonally rescales $W$ and $H$ to minimize $\ell(\hat W) \approx \trace{H} \|W\|_F^2$, effectively trading off the spectrum of these matrices to find a minimum.
Note to minimize $\trace{ D^{-1} H D^{-1} } \|W D\|_F^2 = (\sum_{i=1}^n H_{ii} / D_i^2) (\sum_{i=1}^n D_i^2 \|W_i\|^2)$ implies that $D_i = \sqrt{H_{ii} / \|W_i\|}$.
Motivated by the incoherence of $W$, Line~\ref{alg1reducedquant} computes the quantization range depending on the spectrum $\|W\|_F$, instead of the typical $\max_{i,j} |W_{ij}|$.
The parameter $\rho$ controls the quantization range; we tune it and find that a value of 2.4 works well across all our experiments.
We use $\rho=2.4$ consistently across all experiments.
Our full QuIP procedure is described in Algorithm~\ref{algQuIP}, which contains calls to the pre- and post-processing sub-steps in Algorithms~\ref{algQuIPpre} and~\ref{algQuIPpost}.

\subsection{Subsection~\ref{secAddHeur} (Greedy Updates)}

In this subsection, we describe the ``greedy local search'' method mentioned in the main body of the paper in more detail.
The basic idea is to iterate over coordinates of the weights in the same order as the initial quantization method, modifying each weight in turn---but still restricting it to be a representable quantized value---so as to minimize the proxy loss while keeping the other weights fixed.
These greedy updates amount to coordinate descent on the proxy loss, but restricted to the quantization grid.
Greedy updates can be performed after any initial quantization method, or as a standalone method.
When performed after an initial quantization method, greedy local search is a \emph{descent method} because the individual weight updates cannot increase the loss, but when performed alone, these greedy updates are not a descent method because the initial point ($\hat W = W)$ is not feasible because it contains unquantized values that are off the representable quantization grid.
Concretely, a greedy update of weight $(i,j)$ to the grid $\{0,1,\ldots,2^b - 1\}$ does the following, where $\ell$ is the proxy loss:
\[
    \hat W_{ij} \gets \arg \min_{z \in \{0,1,\ldots,2^b - 1\}} \ell(\hat W - e_i e_j^T \hat W_{ij} + e_i e_j^T z).
\]
(Note that $\hat W - e_i e_j^T \hat W_{ij} + e_i e_j^T z$ is the result of setting the $(i,j)$th entry of $\hat W$ to $z$.)
A full pass of greedy updates constitutes $mn$ of these updates performed in the same order as LDLQ.
This algorithm is very simple, since it is just greedy coordinate descent. In the rest of this subsection, we will give a bit more intuition about this method by showing how this greedy algorithm falls within our framework of adaptive rounding with linear feedback.

An application of greedy local search as a single-pass stand-alone method falls under our Adaptive Rounding with Linear Feedback framework, with the linear feedback set to $U = (H \odot M) \diag{H}^{-1}$, where $M$ is the strictly upper triangular mask and $\odot$ denotes the Hadamard (entrywise) product, as we will derive below.
For ease of explanation consider a single (row) weight vector $w \in \R^{1 \times n}$.
When looking only at column $j$, the proxy loss from setting $\hat w_j$ to $z$ is
\begin{align*}
    \ell(\hat w - \hat w e_j e_j^T + z e_j^T) 
    &= 
    (\hat w - w) H (\hat w - w)^T
    +
    2 (z e_j^T - \hat w e_j e_j^T) H (\hat w - w)^T
    \\&\hspace{4em}+
    (z e_j^T - \hat w e_j e_j^T) H (z e_j^T - \hat w e_j e_j^T)^T.
\end{align*}
This is just a quadratic function in $z$, and so its minimum value on the grid $\{0, 1, \ldots, 2^{b} - 1\}$ will just be its minimum value on $\R$ rounded to that grid.
To find this minimum over $\R$, we differentiate to minimize, yielding
\[
    0
    =
    2 e_j^T H (\hat w - w)^T
    +
    2 e_j^T H (z e_j^T - \hat w e_j e_j^T)^T,
\]
and solving for $z$,
\begin{equation}
    \label{eqnGreedyZ}
    z
    =
    -
    \frac{
        (\hat w - \hat w e_j e_j^T - w) H e_j
    }{
        e_j^T H e_j
    }
    =
    \hat w e_j
    -
    \frac{
        (\hat w - w) H e_j
    }{
        e_j^T H e_j
    }.
\end{equation}
Since when we use greedy local search as a stand-alone method, we have not updated $\hat w_j$ yet, at this point $\hat w e_j = w e_j$, and so
this means that a single step of greedy updates looks like
\[
    \hat w e_j \gets \mathcal{Q}\left( w e_j - (\hat w - w) \frac{
        H e_j
    }{
        e_j^T H e_j
    } \right)
\]
for $\mathcal{Q}$ referring to nearest rounding with the necessary clamping. Since $\hat w - w$ is zero for all entries following the $j$th one, this is equivalent to 
\[
    \hat w e_j \gets \mathcal{Q}( w e_j - (\hat w - w) U e_j )
\]
where $U$ is set as $U = (H \odot M) \diag{H}^{-1}$ as above.
This shows how this single-pass version of greedy updates fits into our adaptive rounding with linear feedback framework.

\begin{algorithm}[t]
\caption{Greedy Updates: A Single Pass}\label{algSuppGreedy}
\begin{algorithmic}[1]
\Require $b \in \mathbb{N}$, $H \in \mathbb{R}^{n \times n}$ SPD, weights $W \in \R^{m \times n}$, initial guess $\tilde W$
\State $\hat W \gets \tilde W$
\State $U \gets (H \odot M) \diag{H}^{-1}$ \Comment{$M$ is the strictly upper triangular mask}
\State $V \gets W - (\tilde W - W) (H \odot M^T) \operatorname{diag}(H)^{-1}$ \Comment{can skip if $\tilde W = W$ by setting $V \gets W$}
\State \textbf{for} $k \in \{1, \dots, n\}$ \textbf{do}  
$\hat W_k \gets \operatorname{clamp}(\round_{\operatorname{near}}( V_k + (W - \hat W) U_k ), 0, 2^b-1)$
\State \Return $\hat W$ 
\end{algorithmic}
\end{algorithm}

Analyzing greedy local search as a post-processing pass is a bit more difficult, but we will see that it can also be written as something like adaptive rounding with linear feedback.
Suppose that we do a pass of greedy updates, but our quantized weights start at an initial value $\hat w = \tilde w$ already quantized from some previous method (e.g. LDLQ).
Returning to (\ref{eqnGreedyZ}), since we haven't updated $\hat w_j$ yet, we'll have
\[
    z
    =
    \tilde w e_j
    -
    \frac{
        (\hat w - w) H e_j
    }{
        e_j^T H e_j
    }.
\]
Now, all the entries of $\hat w $ which come \emph{after} $j$ are still the ones from $\tilde w$. This means that we can split this up as
\[
    z
    =
    w e_j
    -
    \frac{
        (\hat w - w)_{:,1:(j-1)} H_{1:(j-1),j}
        +
        (\tilde w - w)_{:,(j+1):n} H_{(j+1):n,j}
    }{
        e_j^T H e_j
    }
\]
where the first part of this sum comes from the entries which we \emph{may} have already updated during this pass, the second comes from the entries which are still equal to their initial values in $\tilde w$, and the case of $w_j$ is handled specially, cancelling it with the $\tilde w e_j$ term.
We can write this more compactly in matrix form as
\[
    z
    =
    w e_j
    -
    \frac{
        (\hat w - w) (H \odot M) e_j
        +
        (\tilde w - w) (H \odot M^T) e_j
    }{
        e_j^T H e_j
    },
\]
where $M$ is the strictly upper triangular mask and $\odot$ is elementwise multiplication. This yields a final quantization step of
\[
    \hat w e_j \gets \mathcal{Q}\left( w e_j - (\tilde w - w) \frac{
        (H \odot M^T) e_j
    }{
        e_j^T H e_j
    }
    - 
    (\hat w - w) \frac{
        H e_j
    }{
        e_j^T H e_j
    } \right).
\]
So, more generally, if we define $U$ as above, and set
\[
    V = W - (\tilde W - W) (H \odot M^T) \operatorname{diag}(H)^{-1},
\]
we can write a single pass of greedy updates in matrix form as
\[
    \tilde W \gets \mathcal{Q}( V + (W - \hat W) U ),
\]
which is very close to our rounding with linear feedback form, albeit with the difference that here $V$ is in place of $W$. 
This is made explicit in the included Greedy Updates Algorithm.

We can use this algorithm both as a whole quantization method (by setting $\tilde W = W$) or as a post-processing step (by setting $\tilde W$ to the output of some other initial quantization algorithm, such as LDLQ). When we do use it as a post-processing step, we typically run multiple passes of greedy updates (e.g. 10 passes): this involves passing the output of the greedy updates algorithm back in as the input guess $\tilde W$ to another run of the greedy updates algorithm, and repeating this multiple times.

\section{Additional Experimental Descriptions and Results}\label{suppAddResults}
\subsection{Subsections~\ref{secLDLopt} and~\ref{secIncoherOpt} (Empirical Properties of $H$ Across OPT-125m to 2.7b)}
\begin{table}[t]
\centering
\begin{tabular}{cc|ccc}
\multirow{2}{*}{Model}\Bstrut & 
\multirow{2}{*}{Processing} & 
Absolute & Approximate &
\multirow{2}{*}{$\trace{D} / \trace{H}$} \\
& & Fractional Rank & Fractional Rank & \\
\hline
\multirow{2}{*}{OPT-125m}\Tstrut & Baseline & 
0.926 ($\pm$0.172) &
0.112 ($\pm$0.127) &
0.540 ($\pm$0.093) \\
& Incoherent\Bstrut & 
0.910 ($\pm$0.196) &
0.124 ($\pm$0.141) &
0.534 ($\pm$0.094) \\
\multirow{2}{*}{OPT-350m} & Baseline\Tstrut &
0.916 ($\pm$0.180) &
0.047 ($\pm$0.032) &
0.445 ($\pm$0.100) \\
& Incoherent\Bstrut &
0.908 ($\pm$0.183) &
0.059 ($\pm$0.062) &
0.440 ($\pm$0.106) \\
\multirow{2}{*}{OPT-1.3b} & Baseline\Tstrut &
0.541 ($\pm$0.404) &
0.020 ($\pm$0.023) &
0.399 ($\pm$0.187) \\
& Incoherent\Bstrut &
0.543 ($\pm$0.405) &
0.028 ($\pm$0.023) &
0.393 ($\pm$0.189) \\
\multirow{2}{*}{OPT-2.7b} & Baseline\Tstrut &
0.426 ($\pm$0.413) &
0.019 ($\pm$0.015) &
0.384 ($\pm$0.206) \\
& Incoherent\Bstrut &
0.427 ($\pm$0.415) &
0.018 ($\pm$0.025) &
0.375 ($\pm$0.205) \\
\Bstrut
\end{tabular}
\caption{
We compute $H$ in each layer of a given model, and compute the following summary statistics.
$\trace{D}/\trace{H}$ decreases as the mode size increases, though the variance also increases.
We compute the fraction of nonzero eigenvalues (i.e. absolute), and the fraction of eigenvalues $> 0.01 \cdot \max(\eig{H})$ (i.e. approximate).
The fractional rank is $k / n$ for a rank-$k$ matrix $H$ with dimension $n$.
Mean and standard deviations are computed across layers in a model.
}
\label{tabsuppHSummary}
\end{table}

\paragraph{Interpreting the exact proxy loss of LDLQ and nearest rounding by empirically comparing $\trace{D}$ vs $\trace{H}$.}
Theorem~\ref{thmLDLopt} gives the average-case proxy loss for LDLQ in terms of $\trace{D}$, where $D$ is from the LDL decomposition of $H$.
Lemma~\ref{lemNearStochProxyH} gives the average-case proxy loss for standard nearest rounding in terms of $\trace{H}$.
We know that LDLQ is better in practice, but comparing these equations is difficult because we need to reason about $\trace{D}$ vs $\trace{H}$.
Our paper resolves this difficulty by deriving bounds on the proxy loss for LDLQ in terms of the spectrum of $H$ (with and without incoherence).
However we also perform a quick empirical check: if $\trace{D} \ll \trace{H}$, then our theory explains the empirical superiority of LDLQ over nearest rounding (at least on these models).
Table~\ref{tabsuppHSummary} gives the ratio $\trace{D} / \trace{H}$ across all layers for OPTQ models 125m to 2.7b; the mean value is always less than $0.55$, and it falls as the model gets larger. 

\paragraph{$H$ is approximately low-rank.}
Subsection~\ref{secIncoherOpt} plotted the normalized eigenvalues of $H$ from 3 randomly chosen layers in OPT-2.7b.
Table~\ref{tabsuppHSummary} gives much more evidence that $H$ is consistently approximately low-rank.
Across each model, we calculate the absolute and approximate fractional rank of $H$ across all layers in OPT models 125m to 2.7b (explanations in the caption).
The approximate fractional rank decreases as model size increases; for OPT-2.7b the fractional rank is $\approx 0.02 (\pm 0.02)$.

\subsection{Subsection~\ref{secOPTQequiv} (Empirical Verification of OPTQ Equivalence)}
We share a python script in the supplementary code which empirically verifies that our implementation of LDLQ produces quantized values exactly matching OPTQ's~\cite{frantar2023gptq} implementation.
While we prove the equivalence between LDLQ and OPTQ's respective algorithm statements, empirically comparing ours and \citet{frantar2023gptq}'s code ensures that the respective implementations are sufficiently close to their algorithmic statements.
Therefore we can be sure that LDLQ and OPTQ are equivalent in their implementation.

\definecolor{codegreen}{rgb}{0,0.6,0}
\definecolor{codegray}{rgb}{0.5,0.5,0.5}
\definecolor{codepurple}{rgb}{0.58,0,0.82}
\definecolor{backcolour}{rgb}{0.95,0.95,0.92}

\lstdefinestyle{mystyle}{
    backgroundcolor=\color{backcolour},   
    commentstyle=\color{codegreen},
    keywordstyle=\color{violet},
    numberstyle=\tiny\color{codegray},
    stringstyle=\color{codepurple},
    basicstyle=\ttfamily\footnotesize,
    breakatwhitespace=false,         
    breaklines=true,                 
    captionpos=b,                    
    keepspaces=true,                 
    numbers=left,                    
    numbersep=5pt,                  
    showspaces=false,                
    showstringspaces=false,
    showtabs=false,                  
    tabsize=2,
    xleftmargin=12pt
}

\lstset{style=mystyle}

\subsection{Subsection~\ref{secFiniteGrid} (Empirical Verification of LDLQ/OPTQ Finite Grid Counterexample)}
The following code constructs a weight matrix $W$ and Hessian matrix $H$ where OPTQ performs worse than nearest when rounding to a finite grid.
\begin{lstlisting}[language=Python]
import torch 
def make_counterexample(n, d, c=0.01):
    H = torch.ones(n,n) + torch.eye(n)
    H[n-1,n-1] = 1.0
    H[0,1:(n-1)] += 2 * c
    H[1:(n-1),0] += 2 * c
    H[0,n-1] += c
    H[n-1,0] += c
    H[0,0] += 4 * c + n * (c**2)
    W = 0.499 * torch.ones(d,n) + 0.002 * (torch.arange(n) % 2)
    return W, H
\end{lstlisting}
The intuition behind this counterexample is as follows: 
we want to quantize many coordinates in $W$ in such a way that OPTQ excepts there to be a very large error correction to quantize the last entry.
However, the finite grid restricts this large error correction.
Note that we can achieve this poor OPTQ behavior with {\texttt c=0}, but here nearest rounding also does poorly.
We make a small perturbation ({\texttt c=0.01}) to make OPTQ round in the wrong direction, but not nearest.

\subsection{Additional Details on the Experimental Setup and Computational Resources}
We run experiments on a university cluster managed by a Slurm workload manager which has GPUs with up to 48GB of memory, though larger GPUs are only required for some methods on larger model sizes.
Note we use the LAMBADA OpenAI version.
When Greedy updates are used, we perform 10 passes over the weights in the same order as LDLQ and OPTQ, except for 5 passes on OPT-30b and OPT-66b.
For the incoherence-based quantization range, we tune the parameter $\rho$ and find that a value of 2.4 works well across all model sizes and quantization methods. We use this value for all our experiments.

\subsection{Section~\ref{secExp} 
(Main Results on Additional Evaluations)
}
\begin{figure}[t]
\centering
\includegraphics[width=1\linewidth]{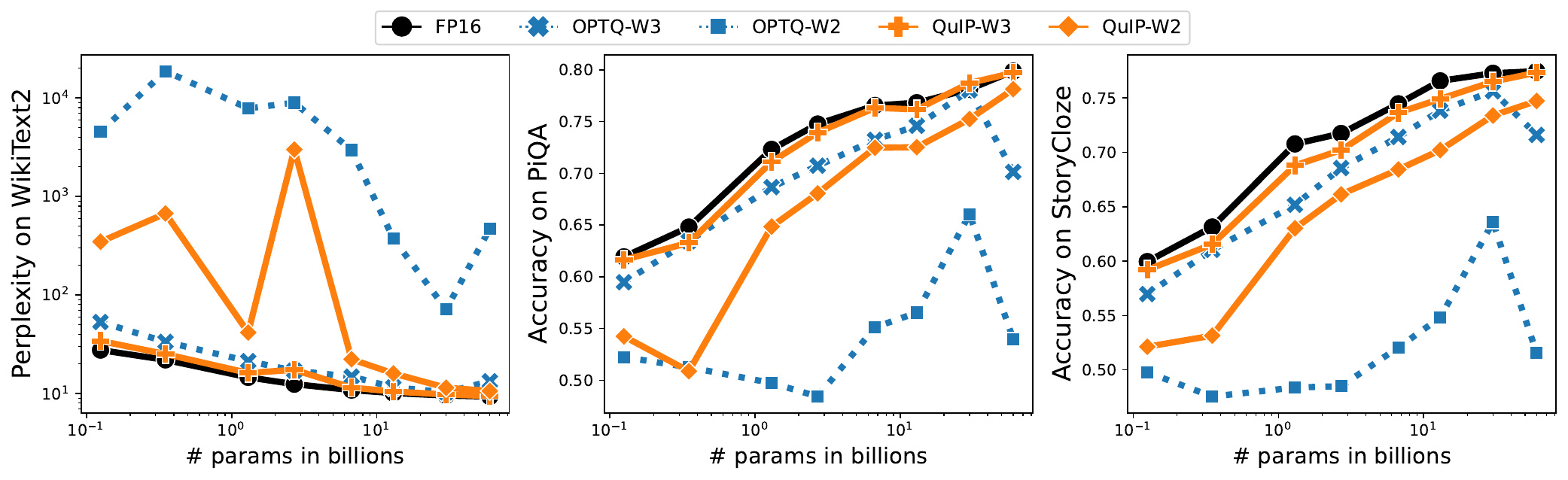}
\caption{
Quantizing OPT models up to 66B parameters.
Additional evaluation tasks shown here in the Supplement.
Our method QuIP is the first PTQ procedure to achieve good quantization at 2 bits per weight, across a variety of model sizes and evaluation tasks.
Note the drop in performance for OPTQ on OPT-66B is documented in their paper.
}
\label{figSuppMain}
\end{figure}
Figure~\ref{figSuppMain} shows additional results for QuIP and OPTQ on  WikiText2, PiQA, and StoryCloze when quantizing to 2 and 3 bits per weight. 
The insights about our method QuIP remain the same after viewing these additional results: QuIP is the first PTQ procedure to achieve good quantization at two bits per weight, across a variety of LLM sizes and evaluation tasks.
We evaluate on OPT models (up to 30B); 4-bit quantization works equally well for both methods.
QuIP is superior to OPTQ across model sizes and evaluation tasks here.

On WikiText2 2-bit quantization, note that the trend in perplexity for QuIP mirrors the trend in perplexity for OPTQ.
We run OPTQ's~\cite{frantar2023gptq} implementation, though they did not report 2-bit results at this model size.
Because OPTQ is equivalent to QuIP's quantization sub-procedure,
it thus makes sense that worse performance in the quantization sub-procedure could result in worse overall performance.
OPTQ increases perplexity when going from OPT-1.3b to OPT-2.7b. 
QuIP's perplexity also increases from OPT-1.3b to OPT-2.7b, and is unusually higher than the adjacent OPT-1.3b and OPT-6.7b models.
However QuIP still beats OPTQ in this setting.
Our observations about OPTQ and QuIP on WikiText2 and OPT-2.7b were consistent across multiple independent runs.

\subsection{Section~\ref{secExp} (All Methods, All Model Sizes, All Bit Weights, All Evaluation Tasks)}
Tables~\ref{tabSuppAllOPT30b}-\ref{tabSuppAllOPT125m} 
provide results on all combinations of the following: methods, model sizes (OPT 125m-30b), bit weights(4,3,2), and evaluation tasks.
Across our extensive array of experiments, we see that incoherence processing always enables a step function change in quantization at 2 bits.

\begin{table}[h!]
\centering
\small
\begin{adjustbox}{width=\textwidth}
\begin{tabular}{l|c|ccc|ccc|ccc|ccc}
& \multicolumn{13}{c}{\underline{\textbf{Incoherence Processing --- OPT-30b}}}\Bstrut \\
& 
\multicolumn{1}{c}{\underline{Full}}\Tstrut\Bstrut &
\multicolumn{3}{c}{\underline{QuIP}} &
\multicolumn{3}{c}{\underline{QuIP-RG}} &
\multicolumn{3}{c}{\underline{Greedy+IncP}} &
\multicolumn{3}{c}{\underline{Near+IncP}} \\
& W16\Tstrut & W4 & W3 & W2 & W4 & W3 & W2 & W4 & W3 & W2 & W4 & W3 & W2 \\
\hline
Wiki$\downarrow$\Tstrut &   9.56 &   9.60 &   9.79 &  11.48 &   9.66 &   9.75 &  11.68 &   9.72 &   9.92 &  11.59 &   9.77 &   9.89 &  12.04 \\
PTB$\downarrow$         &  14.04 &  14.18 &  14.37 &  17.40 &  14.11 &  14.44 &  16.94 &  14.23 &  14.45 &  17.39 &  14.16 &  14.49 &  18.12 \\
C4$\downarrow$          &  11.45 &  11.50 &  11.66 &  13.55 &  11.51 &  11.68 &  13.44 &  11.52 &  11.71 &  13.30 &  11.53 &  11.74 &  14.11 \\
ArcE$\uparrow$          &  65.40 &  65.32 &  65.28 &  57.87 &  64.86 &  63.51 &  59.51 &  65.99 &  63.80 &  58.80 &  64.06 &  64.06 &  56.36 \\
LAMB$\uparrow$          &  72.40 &  73.20 &  72.68 &  65.24 &  71.86 &  71.53 &  62.31 &  71.71 &  71.38 &  64.47 &  71.41 &  71.41 &  60.64 \\
PiQA$\uparrow$          &  78.13 &  78.45 &  78.73 &  75.24 &  78.51 &  78.73 &  76.17 &  77.86 &  77.58 &  75.95 &  78.24 &  77.53 &  75.46 \\
SC$\uparrow$\Bstrut     &  77.28 &  76.96 &  76.51 &  73.39 &  77.02 &  77.08 &  73.01 &  76.70 &  76.64 &  73.33 &  76.77 &  75.94 &  71.93 \\
\hline
& \multicolumn{13}{c}{\underline{\textbf{Baseline Processing --- OPT-30b}}}\Tstrut\Tstrut\Bstrut \\
& 
\multicolumn{1}{c}{\underline{Full}}\Tstrut\Bstrut &
\multicolumn{3}{c}{\underline{OPTQ}} &
\multicolumn{3}{c}{\underline{LDLQ-RG}} &
\multicolumn{3}{c}{\underline{Greedy}} &
\multicolumn{3}{c}{\underline{Near}} \\
& W16\Tstrut & W4 & W3 & W2 & W4 & W3 & W2 & W4 & W3 & W2 & W4 & W3 & W2 \\
\hline
Wiki$\downarrow$\Tstrut &   9.56 &   9.59 &  10.32 &  71.70 &   9.64 &  10.31 &  49.40 &   9.69 &  13.63 &  4,817 &  10.77 &  1,565 &  41,548 \\
PTB$\downarrow$         &  14.04 &  14.22 &  15.36 &  88.19 &  14.20 &  15.15 &  73.45 &  14.33 &  23.05 &  3,474 &  15.41 &  1,526 &  34,349 \\
C4$\downarrow$          &  11.45 &  11.56 &  12.23 &  29.59 &  11.56 &  12.15 &  29.12 &  11.59 &  16.30 &  3,183 &  13.52 &  1,808 &  24,816 \\
ArcE$\uparrow$          &  65.40 &  64.77 &  60.19 &  42.47 &  63.76 &  63.43 &  41.20 &  63.09 &  50.51 &  26.30 &  61.28 &  34.47 &   25.80 \\
LAMB$\uparrow$          &  72.40 &  72.39 &  68.89 &  25.77 &  71.94 &  69.78 &  26.35 &  72.37 &  56.76 &  00.00 &  70.42 &  01.73 &   00.00 \\
PiQA$\uparrow$          &  78.13 &  78.56 &  78.02 &  66.05 &  78.56 &  77.80 &  64.58 &  78.35 &  70.46 &  49.89 &  77.02 &  56.37 &   49.56 \\
SC$\uparrow$\Bstrut     &  77.28 &  77.53 &  75.62 &  63.59 &  76.89 &  75.56 &  63.53 &  76.45 &  68.43 &  48.31 &  75.24 &  49.59 &   48.57 \\
\hline
\end{tabular}
\end{adjustbox}
\caption{
Quantizing \textbf{OPT-30b} with all combinations of quantization and pre-post processing methods, evaluating on language generation and zeroshot tasks.
Our incoherence processing enables a step function change in quantization at 2 bits, across all rounding methods.
}
\label{tabSuppAllOPT30b}
\end{table}

\begin{table}[h!]
\centering
\small
\begin{adjustbox}{width=\textwidth}
\begin{tabular}{l|c|ccc|ccc|ccc|ccc}
& \multicolumn{13}{c}{\underline{\textbf{Incoherence Processing --- OPT-13b}}}\Bstrut \\
& 
\multicolumn{1}{c}{\underline{Full}}\Tstrut\Bstrut &
\multicolumn{3}{c}{\underline{QuIP}} &
\multicolumn{3}{c}{\underline{QuIP-RG}} &
\multicolumn{3}{c}{\underline{Greedy+IncP}} &
\multicolumn{3}{c}{\underline{Near+IncP}} \\
& W16\Tstrut & W4 & W3 & W2 & W4 & W3 & W2 & W4 & W3 & W2 & W4 & W3 & W2 \\
\hline
Wiki$\downarrow$\Tstrut  &  10.13 &  10.21 &   10.5 &  16.02 &  10.35 &  10.69 &  13.81 &  10.25 &  10.61 &  13.91 &  10.34 &  10.59 &  16.12 \\
PTB$\downarrow$          &  14.52 &  14.69 &  15.05 &  21.64 &  14.73 &  15.20 &  22.23 &  14.85 &  15.11 &  20.20 &  14.93 &  15.27 &  23.18 \\
C4$\downarrow$           &  12.06 &  12.16 &  12.39 &  16.60 &  12.18 &  12.43 &  15.62 &  12.21 &  12.42 &  15.19 &  12.26 &  12.56 &  17.37 \\
ArcE$\uparrow$           &  61.78 &  61.41 &  59.47 &  53.91 &  60.35 &  61.78 &  52.86 &  60.10 &  59.43 &  53.79 &  60.56 &  59.30 &  50.00 \\
LAMB$\uparrow$           &  70.25 &  72.09 &  71.10 &  56.24 &  69.47 &  69.07 &  55.70 &  70.83 &  68.43 &  56.98 &  68.37 &  67.86 &  46.48 \\
PiQA$\uparrow$           &  76.82 &  76.61 &  76.17 &  72.52 &  76.55 &  76.22 &  72.74 &  76.33 &  76.17 &  71.87 &  75.08 &  76.66 &  70.73 \\
SC$\uparrow$\Bstrut      &  76.58 &  75.62 &  74.92 &  70.21 &  75.88 &  75.75 &  70.53 &  75.43 &  75.62 &  72.50 &  74.47 &  75.43 &  68.43 \\
\hline
& \multicolumn{13}{c}{\underline{\textbf{Baseline Processing --- OPT-13b}}}\Tstrut\Tstrut\Bstrut \\
& 
\multicolumn{1}{c}{\underline{Full}}\Tstrut\Bstrut &
\multicolumn{3}{c}{\underline{OPTQ}} &
\multicolumn{3}{c}{\underline{LDLQ-RG}} &
\multicolumn{3}{c}{\underline{Greedy}} &
\multicolumn{3}{c}{\underline{Near}} \\
& W16\Tstrut & W4 & W3 & W2 & W4 & W3 & W2 & W4 & W3 & W2 & W4 & W3 & W2 \\
\hline
Wiki$\downarrow$\Tstrut &  10.13 &  10.31 &  11.60 &  372.68 &  10.28 &  11.54 &  213.75 &  10.73 &  13.67 &  8,370 &  11.33 &  3,333 &  186,069 \\
PTB$\downarrow$         &  14.52 &  14.91 &  16.59 &  344.44 &  14.85 &  16.43 &  220.38 &  15.25 &  18.62 &  7,053 &  16.40 &  2,708 &  121,291 \\
C4$\downarrow$          &  12.06 &  12.26 &  13.34 &  135.48 &  12.24 &  13.17 &   67.48 &  12.55 &  14.30 &  4,316 &  13.32 &  2,711 &   93,834 \\
ArcE$\uparrow$          &  61.78 &  64.77 &  60.19 &   42.47 &  60.77 &  58.54 &   32.07 &  56.61 &  51.22 &  25.38 &  61.32 &  31.10 &   25.42 \\
LAMB$\uparrow$          &  70.25 &  72.39 &  68.89 &   25.77 &  68.72 &  65.30 &    6.58 &  68.12 &  59.36 &  00.02 &  67.22 &  00.06 &   00.00 \\
PiQA$\uparrow$          &  76.82 &  78.56 &  78.02 &   66.05 &  76.28 &  75.08 &   59.09 &  76.50 &  73.45 &  50.98 &  76.06 &  53.10 &   49.62 \\
SC$\uparrow$\Bstrut     &  76.58 &  77.53 &  75.62 &   63.59 &  76.32 &  73.52 &   56.33 &  75.68 &  72.44 &  49.40 &  74.41 &  49.71 &   48.70 \\
\hline
\end{tabular}
\end{adjustbox}
\caption{
Quantizing \textbf{OPT-13b} with all combinations of quantization and pre-post processing methods, evaluating on language generation and zeroshot tasks.
Our incoherence processing enables a step function change in quantization at 2 bits, across all rounding methods.
}
\label{tabSuppAllOPT13b}
\end{table}

\begin{table}[h!]
\centering
\small
\begin{adjustbox}{width=\textwidth}
\begin{tabular}{l|c|ccc|ccc|ccc|ccc}
& \multicolumn{13}{c}{\underline{\textbf{Incoherence Processing --- OPT-6.7b}}}\Bstrut \\
& 
\multicolumn{1}{c}{\underline{Full}}\Tstrut\Bstrut &
\multicolumn{3}{c}{\underline{QuIP}} &
\multicolumn{3}{c}{\underline{QuIP-RG}} &
\multicolumn{3}{c}{\underline{Greedy+IncP}} &
\multicolumn{3}{c}{\underline{Near+IncP}} \\
& W16\Tstrut & W4 & W3 & W2 & W4 & W3 & W2 & W4 & W3 & W2 & W4 & W3 & W2 \\
\hline
Wiki$\downarrow$\Tstrut  &  10.86 &  10.98 &  11.51 &  22.33 &  11.20 &  11.61 &  23.75 &  11.13 &  11.62 &  19.06 &  11.18 &  11.73 &  18.57 \\
PTB$\downarrow$          &  15.77 &  15.93 &  16.52 &  31.73 &  15.99 &  16.43 &  45.53 &  15.88 &  16.50 &  35.94 &  16.06 &  16.47 &  27.04 \\
C4$\downarrow$           &  12.71 &  12.86 &  13.30 &  21.62 &  12.88 &  13.39 &  24.98 &  12.89 &  13.27 &  19.62 &  12.96 &  13.37 &  19.15 \\
ArcE$\uparrow$           &  60.06 &  59.89 &  59.60 &  52.61 &  59.30 &  58.21 &  53.32 &  59.18 &  58.25 &  51.43 &  59.85 &  57.62 &  50.59 \\
LAMB$\uparrow$           &  68.72 &  70.00 &  68.74 &  53.97 &  67.38 &  65.77 &  49.91 &  67.65 &  67.18 &  54.80 &  67.26 &  65.86 &  49.49 \\
PiQA$\uparrow$           &  76.55 &  76.77 &  76.33 &  72.47 &  76.71 &  76.33 &  72.91 &  76.39 &  75.46 &  72.20 &  76.55 &  76.71 &  71.22 \\
SC$\uparrow$\Bstrut      &  74.47 &  75.18 &  73.65 &  68.43 &  75.05 &  73.33 &  69.51 &  74.35 &  73.77 &  68.94 &  74.22 &  74.09 &  68.75 \\
\hline
& \multicolumn{13}{c}{\underline{\textbf{Baseline Processing --- OPT-6.7b}}}\Tstrut\Tstrut\Bstrut \\
& 
\multicolumn{1}{c}{\underline{Full}}\Tstrut\Bstrut &
\multicolumn{3}{c}{\underline{OPTQ}} &
\multicolumn{3}{c}{\underline{LDLQ-RG}} &
\multicolumn{3}{c}{\underline{Greedy}} &
\multicolumn{3}{c}{\underline{Near}} \\
& W16\Tstrut & W4 & W3 & W2 & W4 & W3 & W2 & W4 & W3 & W2 & W4 & W3 & W2 \\
\hline
Wiki$\downarrow$\Tstrut  &  10.86 &  11.49 &  14.87 &  2,958 &  11.23 &  12.56 &  739.9 &  11.75 &  39.09 &  16,298 &  12.15 &  6,011 &  20,780 \\
PTB$\downarrow$          &  15.77 &  16.54 &  22.05 &  2,521 &  16.28 &  18.58 &  1,109 &  16.93 &  66.57 &  10,708 &  18.92 &  5,440 &  14,217 \\
C4$\downarrow$           &  12.71 &  13.16 &  17.13 &  500.7 &  12.98 &  14.34 &  154.0 &  13.27 &  37.13 &   9,968 &  14.40 &  5,225 &  12,419 \\
ArcE$\uparrow$           &  60.06 &  58.84 &  53.41 &  31.86 &  59.18 &  55.26 &  33.00 &  54.63 &  32.49 &   26.09 &  58.75 &  25.42 &   25.80 \\
LAMB$\uparrow$           &  68.72 &  66.18 &  52.36 &  01.07 &  67.46 &  61.89 &  01.79 &  66.19 &  02.56 &   00.00 &  64.53 &  00.00 &   00.00 \\
PiQA$\uparrow$           &  76.55 &  76.01 &  73.23 &  55.11 &  76.77 &  74.48 &  54.46 &  74.48 &  53.59 &   51.90 &  76.28 &  50.71 &   49.78 \\
SC$\uparrow$\Bstrut      &  74.47 &  73.71 &  71.42 &  52.07 &  74.09 &  72.37 &  52.45 &  72.82 &  50.99 &   49.40 &  73.58 &  47.87 &   47.80 \\
\hline
\end{tabular}
\end{adjustbox}
\caption{
Quantizing \textbf{OPT-6.7b} with all combinations of quantization and pre-post processing methods, evaluating on language generation and zeroshot tasks.
Our incoherence processing enables a step function change in quantization at 2 bits, across all rounding methods.
}
\label{tabSuppAllOPT6.7b}
\end{table}

\begin{table}[h!]
\centering
\small
\begin{adjustbox}{width=\textwidth}
\begin{tabular}{l|c|ccc|ccc|ccc|ccc}
& \multicolumn{13}{c}{\underline{\textbf{Incoherence Processing --- OPT-2.7b}}}\Bstrut \\
& 
\multicolumn{1}{c}{\underline{Full}}\Tstrut\Bstrut &
\multicolumn{3}{c}{\underline{QuIP}} &
\multicolumn{3}{c}{\underline{QuIP-RG}} &
\multicolumn{3}{c}{\underline{Greedy+IncP}} &
\multicolumn{3}{c}{\underline{Near+IncP}} \\
& W16\Tstrut & W4 & W3 & W2 & W4 & W3 & W2 & W4 & W3 & W2 & W4 & W3 & W2 \\
\hline
Wiki$\downarrow$\Tstrut  &  12.47 &  12.39 &  17.44 &  2,998 &  12.58 &  15.07 &  1,676 &  12.68 &  12.96 &  155.6 &  12.79 &  13.79 &  28.98 \\
PTB$\downarrow$          &  17.97 &  18.42 &  20.79 &  63.59 &  18.43 &  20.49 &  42.05 &  18.34 &  20.03 &  46.28 &  18.43 &  19.51 &  39.23 \\
C4$\downarrow$           &  14.34 &  14.55 &  15.63 &  38.07 &  14.65 &  15.97 &  27.89 &  14.64 &  15.22 &  26.84 &  14.67 &  15.52 &  27.34 \\
ArcE$\uparrow$           &  54.34 &  53.28 &  52.99 &  46.93 &  52.02 &  52.36 &  46.93 &  52.90 &  51.73 &  43.14 &  52.61 &  50.93 &  44.11 \\
LAMB$\uparrow$           &  64.82 &  66.04 &  64.99 &  36.06 &  64.64 &  63.46 &  43.39 &  64.68 &  62.95 &  45.53 &  65.40 &  61.05 &  35.65 \\
PiQA$\uparrow$           &  74.76 &  74.54 &  73.94 &  68.06 &  73.88 &  73.45 &  68.28 &  74.54 &  73.83 &  68.28 &  73.61 &  73.56 &  67.85 \\
SC$\uparrow$\Bstrut      &  71.74 &  71.80 &  70.21 &  66.14 &  71.55 &  70.15 &  64.67 &  70.85 &  71.10 &  65.82 &  71.16 &  70.02 &  63.27 \\
\hline
& \multicolumn{13}{c}{\underline{\textbf{Baseline Processing --- OPT-2.7b}}}\Tstrut\Tstrut\Bstrut \\
& 
\multicolumn{1}{c}{\underline{Full}}\Tstrut\Bstrut &
\multicolumn{3}{c}{\underline{OPTQ}} &
\multicolumn{3}{c}{\underline{LDLQ-RG}} &
\multicolumn{3}{c}{\underline{Greedy}} &
\multicolumn{3}{c}{\underline{Near}} \\
& W16\Tstrut & W4 & W3 & W2 & W4 & W3 & W2 & W4 & W3 & W2 & W4 & W3 & W2 \\
\hline
Wiki$\downarrow$\Tstrut  &  12.47 &  12.93 &  17.09 &  8,949 &  12.77 &  16.47 &  7,718 &  12.95 &  18.92 &  9,665 &  16.69 &  15,685 &  10,641 \\
PTB$\downarrow$          &  17.97 &  19.10 &  25.36 &  8,281 &  19.05 &  23.94 &  7,389 &  19.06 &  28.75 &  8,254 &  32.22 &  14,532 &  10,516 \\
C4$\downarrow$           &  14.34 &  14.99 &  18.14 &  4,388 &  14.85 &  17.37 &  2,113 &  15.01 &  20.87 &  5,139 &  18.75 &  11,257 &   9,356 \\
ArcE$\uparrow$           &  54.34 &  52.57 &  50.04 &  26.94 &  52.02 &  48.95 &  25.76 &  52.02 &  43.39 &  25.46 &  52.74 &   26.56 &   27.19 \\
LAMB$\uparrow$           &  64.82 &  62.00 &  51.43 &  00.00 &  64.04 &  53.25 &  00.00 &  63.50 &  40.75 &  00.00 &  59.15 &   00.00 &   00.00 \\
PiQA$\uparrow$           &  74.76 &  73.88 &  70.73 &  48.42 &  74.54 &  69.91 &  49.95 &  73.61 &  66.05 &  50.65 &  73.83 &   51.41 &   50.22 \\
SC$\uparrow$\Bstrut      &  71.74 &  70.91 &  68.56 &  48.50 &  71.42 &  67.79 &  47.17 &  70.66 &  60.53 &  48.44 &  70.59 &   47.42 &   47.55 \\
\hline
\end{tabular}
\end{adjustbox}
\caption{
Quantizing \textbf{OPT-2.7b} with all combinations of quantization and pre-post processing methods, evaluating on language generation and zeroshot tasks.
Our incoherence processing enables a step function change in quantization at 2 bits, across all rounding methods.
}
\label{tabSuppAllOPT2.7b}
\end{table}

\begin{table}[h!]
\centering
\small
\begin{adjustbox}{width=\textwidth}
\begin{tabular}{l|c|ccc|ccc|ccc|ccc}
& \multicolumn{13}{c}{\underline{\textbf{Incoherence Processing --- OPT-1.3b}}}\Bstrut \\
& 
\multicolumn{1}{c}{\underline{Full}}\Tstrut\Bstrut &
\multicolumn{3}{c}{\underline{QuIP}} &
\multicolumn{3}{c}{\underline{QuIP-RG}} &
\multicolumn{3}{c}{\underline{Greedy+IncP}} &
\multicolumn{3}{c}{\underline{Near+IncP}} \\
& W16\Tstrut & W4 & W3 & W2 & W4 & W3 & W2 & W4 & W3 & W2 & W4 & W3 & W2 \\
\hline
Wiki$\downarrow$\Tstrut  &  14.62 &  14.88 &  16.21 &  41.64 &  16.49 &  17.76 &  42.37 &  16.75 &  17.11 &  48.69 &  16.43 &  17.83 &  56.56 \\
PTB$\downarrow$          &  20.29 &  20.87 &  22.76 &  47.72 &  21.93 &  23.25 &  50.17 &  22.11 &  23.76 &  54.46 &  22.19 &  24.82 &  80.40 \\
C4$\downarrow$           &  16.07 &  16.38 &  17.12 &  29.78 &  17.53 &  18.44 &  31.49 &  17.60 &  18.54 &  34.10 &  17.74 &  19.03 &  45.56 \\
ArcE$\uparrow$           &  50.84 &  50.72 &  49.12 &  41.88 &  49.54 &  48.82 &  41.20 &  49.66 &  48.74 &  41.08 &  48.61 &  46.59 &  38.64 \\
LAMB$\uparrow$           &  58.92 &  56.36 &  52.47 &  27.81 &  51.62 &  48.36 &  27.27 &  49.95 &  48.38 &  19.21 &  49.76 &  51.12 &  20.20 \\
PiQA$\uparrow$           &  72.31 &  71.22 &  71.11 &  64.85 &  71.06 &  70.24 &  63.33 &  71.00 &  70.35 &  63.66 &  71.16 &  69.80 &  62.51 \\
SC$\uparrow$\Bstrut      &  70.78 &  70.08 &  68.81 &  63.02 &  69.00 &  68.05 &  63.14 &  68.49 &  67.92 &  62.64 &  69.13 &  67.79 &  58.43 \\
\hline
& \multicolumn{13}{c}{\underline{\textbf{Baseline Processing --- OPT-1.3b}}}\Tstrut\Tstrut\Bstrut \\
& 
\multicolumn{1}{c}{\underline{Full}}\Tstrut\Bstrut &
\multicolumn{3}{c}{\underline{OPTQ}} &
\multicolumn{3}{c}{\underline{LDLQ-RG}} &
\multicolumn{3}{c}{\underline{Greedy}} &
\multicolumn{3}{c}{\underline{Near}} \\
& W16\Tstrut & W4 & W3 & W2 & W4 & W3 & W2 & W4 & W3 & W2 & W4 & W3 & W2 \\
\hline
Wiki$\downarrow$\Tstrut  &  14.62 &  15.59 &  21.35 &  7,856 &  15.36 &  20.22 &  7,739 &  15.58 &  22.68 &  9,786 &  47.62 &  12,658 &  11,690 \\
PTB$\downarrow$          &  20.29 &  22.03 &  30.74 &  6,858 &  21.85 &  30.10 &  5,368 &  22.00 &  35.18 &  8,441 &  73.51 &  14,705 &  11,690 \\
C4$\downarrow$           &  16.07 &  16.96 &  21.59 &  4,028 &  16.70 &  20.21 &  2,123 &  16.96 &  22.11 &  5,129 &  27.20 &   6,415 &   8,360 \\
ArcE$\uparrow$           &  50.84 &  49.33 &  45.58 &  25.46 &  48.95 &  45.41 &  26.68 &  48.19 &  42.42 &  26.01 &  42.80 &   27.82 &   25.13 \\
LAMB$\uparrow$           &  58.92 &  57.03 &  37.32 &  00.00 &  58.45 &  41.08 &  00.02 &  59.15 &  40.97 &  00.00 &  36.91 &   00.00 &   00.00 \\
PiQA$\uparrow$           &  72.31 &  70.73 &  68.66 &  49.73 &  70.40 &  67.95 &  52.18 &  70.67 &  66.43 &  50.87 &  67.74 &   51.41 &   49.78 \\
SC$\uparrow$\Bstrut      &  70.78 &  70.15 &  65.18 &  48.38 &  70.34 &  66.45 &  49.27 &  70.40 &  64.48 &  48.76 &  59.13 &   47.87 &   48.25 \\
\hline
\end{tabular}
\end{adjustbox}
\caption{
Quantizing \textbf{OPT-1.3b} with all combinations of quantization and pre-post processing methods, evaluating on language generation and zeroshot tasks.
Our incoherence processing enables a step function change in quantization at 2 bits, across all rounding methods.
}
\label{tabSuppAllOPT1.3b}
\end{table}

\begin{table}[h!]
\centering
\small
\begin{adjustbox}{width=\textwidth}
\begin{tabular}{l|c|ccc|ccc|ccc|ccc}
& \multicolumn{13}{c}{\underline{\textbf{Incoherence Processing --- OPT-350m}}}\Bstrut \\
& 
\multicolumn{1}{c}{\underline{Full}}\Tstrut\Bstrut &
\multicolumn{3}{c}{\underline{QuIP}} &
\multicolumn{3}{c}{\underline{QuIP-RG}} &
\multicolumn{3}{c}{\underline{Greedy+IncP}} &
\multicolumn{3}{c}{\underline{Near+IncP}} \\
& W16\Tstrut & W4 & W3 & W2 & W4 & W3 & W2 & W4 & W3 & W2 & W4 & W3 & W2 \\
\hline
Wiki$\downarrow$\Tstrut       &  22.00 &   22.5 &  25.19 &  672.3 &  23.57 &  25.54 &  418.0 &  23.14 &  25.38 &  239.9 &  23.41 &  27.86 &  1,444 \\
PTB$\downarrow$        &  31.07 &  32.57 &  35.65 &  744.2 &  32.46 &  37.00 &  587.4 &  33.10 &  37.07 &  301.0 &  33.32 &  39.49 &  1,354 \\
C4$\downarrow$         &  22.59 &  23.23 &  25.48 &  320.0 &  23.45 &  25.50 &  215.4 &  23.43 &  25.48 &  124.1 &  23.81 &  27.41 &  880.2 \\
ArcE$\uparrow$   &  40.36 &  39.44 &  38.13 &  27.44 &  39.31 &  38.47 &  29.67 &  39.77 &  40.24 &  30.64 &  38.89 &  38.76 &  28.41 \\
LAMB$\uparrow$    &  46.67 &  46.89 &  42.03 &  01.03 &  43.04 &  39.80 &  04.99 &  42.44 &  40.62 &  06.38 &  41.47 &  34.45 &  00.08 \\
PiQA$\uparrow$       &  64.80 &  64.47 &  63.28 &  50.87 &  64.25 &  63.17 &  54.79 &  64.42 &  64.25 &  55.01 &  64.15 &  63.00 &  52.23 \\
SC$\uparrow$\Bstrut &  63.14 &  62.13 &  61.55 &  53.15 &  61.74 &  61.23 &  51.43 &  62.83 &  61.62 &  53.28 &  62.38 &  61.49 &  50.22 \\
\hline
& \multicolumn{13}{c}{\underline{\textbf{Baseline Processing --- OPT-350m}}}\Tstrut\Tstrut\Bstrut \\
& 
\multicolumn{1}{c}{\underline{Full}}\Tstrut\Bstrut &
\multicolumn{3}{c}{\underline{OPTQ}} &
\multicolumn{3}{c}{\underline{LDLQ-RG}} &
\multicolumn{3}{c}{\underline{Greedy}} &
\multicolumn{3}{c}{\underline{Near}} \\
& W16\Tstrut & W4 & W3 & W2 & W4 & W3 & W2 & W4 & W3 & W2 & W4 & W3 & W2 \\
\hline
Wiki$\downarrow$\Tstrut  &  22.00 &  24.16 &  33.51 &  18,687 &  23.77 &  31.87 &  10,446 &  27.01 &  137.3 &  23,952 &  25.94 &  64.56 &  23,668 \\
PTB$\downarrow$          &  31.07 &  34.17 &  47.69 &  18,161 &  33.35 &  44.38 &   8,508 &  40.39 &  153.5 &  15,176 &  36.78 &  87.22 &  28,881 \\
C4$\downarrow$           &  22.59 &  24.71 &  31.26 &   8,418 &  24.10 &  29.86 &   3,064 &  27.84 &  73.59 &   9,099 &  26.21 &  55.15 &  17,094 \\
ArcE$\uparrow$           &  40.36 &  38.43 &  38.38 &   26.30 &  39.06 &  37.42 &   25.46 &  38.34 &  31.06 &   24.33 &  38.68 &  36.11 &   25.88 \\
LAMB$\uparrow$           &  46.67 &  45.60 &  39.20 &   00.00 &  45.26 &  32.54 &   00.02 &  51.45 &  16.63 &   00.00 &  40.66 &  27.46 &   00.00 \\
PiQA$\uparrow$           &  64.80 &  64.04 &  63.44 &   51.25 &  65.13 &  61.97 &   49.67 &  63.49 &  55.44 &   50.60 &  63.38 &  60.55 &   51.58 \\
SC$\uparrow$\Bstrut      &  63.14 &  63.78 &  61.04 &   47.55 &  62.57 &  60.53 &   48.95 &  61.36 &  54.87 &   48.44 &  63.02 &  56.84 &   48.95 \\
\hline
\end{tabular}
\end{adjustbox}
\caption{
Quantizing \textbf{OPT-350m} with all combinations of quantization and pre-post processing methods, evaluating on language generation and zeroshot tasks.
Our incoherence processing enables a step function change in quantization at 2 bits, across all rounding methods.
}
\label{tabSuppAllOPT350m}
\end{table}

\begin{table}[h!]
\centering
\small
\begin{adjustbox}{width=\textwidth}
\begin{tabular}{l|c|ccc|ccc|ccc|ccc}
& \multicolumn{13}{c}{\underline{\textbf{Incoherence Processing --- OPT-125m}}}\Bstrut \\
& 
\multicolumn{1}{c}{\underline{Full}}\Tstrut\Bstrut &
\multicolumn{3}{c}{\underline{QuIP}} &
\multicolumn{3}{c}{\underline{QuIP-RG}} &
\multicolumn{3}{c}{\underline{Greedy+IncP}} &
\multicolumn{3}{c}{\underline{Near+IncP}} \\
& W16\Tstrut & W4 & W3 & W2 & W4 & W3 & W2 & W4 & W3 & W2 & W4 & W3 & W2 \\
\hline
Wiki$\downarrow$\Tstrut  &  27.66 &  33.35 &  34.22 &  347.4 &  31.51 &  42.94 &   361.8 &  30.65 &  55.54 &  230.8 &  31.93 &  37.57 &  397.5 \\
PTB$\downarrow$          &  38.99 &  40.80 &  47.34 &  430.3 &  43.28 &  51.69 &   414.1 &  41.96 &  48.79 &  250.6 &  43.08 &  52.20 &  441.9 \\
C4$\downarrow$           &  26.56 &  27.63 &  30.92 &  177.4 &  28.74 &  33.54 &   159.0 &  28.82 &  31.41 &  99.01 &  29.28 &  33.88 &  224.0 \\
ArcE$\uparrow$           &  40.03 &  38.89 &  37.92 &  31.99 &  39.27 &  38.26 &   31.36 &  38.80 &  37.67 &  33.21 &  38.55 &  37.42 &  32.91 \\
LAMB$\uparrow$           &  39.16 &  33.03 &  26.37 &  01.05 &  33.75 &  16.96 &   02.17 &  37.78 &  25.34 &  04.66 &  35.65 &  25.21 &  01.82 \\
PiQA$\uparrow$           &  61.92 &  61.64 &  61.64 &  54.24 &  61.64 &  61.92 &   55.44 &  61.10 &  60.83 &  56.47 &  61.43 &  61.10 &  53.48 \\
SC$\uparrow$\Bstrut      &  59.96 &  60.03 &  59.20 &  52.13 &  59.07 &  59.26 &   51.94 &  60.15 &  59.52 &  54.04 &  59.13 &  58.88 &  53.41 \\
\hline
& \multicolumn{13}{c}{\underline{\textbf{Baseline Processing --- OPT-125m}}}\Tstrut\Tstrut\Bstrut \\
& 
\multicolumn{1}{c}{\underline{Full}}\Tstrut\Bstrut &
\multicolumn{3}{c}{\underline{OPTQ}} &
\multicolumn{3}{c}{\underline{LDLQ-RG}} &
\multicolumn{3}{c}{\underline{Greedy}} &
\multicolumn{3}{c}{\underline{Near}} \\
& W16\Tstrut & W4 & W3 & W2 & W4 & W3 & W2 & W4 & W3 & W2 & W4 & W3 & W2 \\
\hline
Wiki$\downarrow$\Tstrut  &  27.66 &  31.44 &  53.26 &  4,563 &  32.29 &  53.25 &  3,704 &  77.80 &  1,791 &  3,707 &  37.14 &  1,293 &  5,375 \\
PTB$\downarrow$          &  38.99 &  45.31 &  74.79 &  4,410 &  45.56 &  75.85 &  3,596 &  101.1 &  1,403 &  4,622 &  53.93 &  1,418 &  4,267 \\
C4$\downarrow$           &  26.56 &  29.13 &  42.55 &  2,260 &  29.40 &  41.77 &  1,820 &  65.54 &  809.5 &  1,897 &  33.90 &  836.5 &  3,665\\
ArcE$\uparrow$           &  40.03 &  38.51 &  35.73 &  28.62 &  39.02 &  36.36 &  27.19 &  34.05 &  26.43 &  27.15 &  36.66 &  30.39 &  26.01 \\
LAMB$\uparrow$           &  39.16 &  33.69 &  12.36 &  00.00 &  33.26 &  15.00 &  00.00 &  12.25 &  00.00 &  00.00 &  18.22 &  00.08 &  00.00 \\
PiQA$\uparrow$           &  61.92 &  60.83 &  59.47 &  52.23 &  61.70 &  59.58 &  50.05 &  57.62 &  49.29 &  50.49 &  61.43 &  55.88 &  51.20 \\
SC$\uparrow$\Bstrut &  59.96 &  58.88 &  56.97 &  49.78 &  59.20 &  57.03 &  48.95 &  50.99 &  47.55 &  48.82 &  59.96 &  50.03 &  47.93 \\
\hline
\end{tabular}
\end{adjustbox}
\caption{
Quantizing \textbf{OPT-125m} with all combinations of quantization and pre-post processing methods, evaluating on language generation and zeroshot tasks.
Our incoherence processing enables a step function change in quantization at 2 bits, across all rounding methods.
}
\label{tabSuppAllOPT125m}
\end{table}

\newpage
\newpage
\newpage
\newpage

\subsection{Section~\ref{secExp} (Evaluating the Effectiveness of the Proxy Objective)}
\begin{table}[h!]
\centering
\begin{tabular}{c|cccc}
WBits\Bstrut & LDLQ/OPTQ & LDLQ-RG & Greedy & Near \\
\hline
4\Tstrut &
104.09 &
105.23 &
120.74 &
301.18
\\
3 &
529.53 &
475.25 &
537.98 &
1,308.05
\\
2 &
2,554.89 &
2,291.02 &
2,587.17 &
5,971.69
\\
\end{tabular}
\caption{
Weighted average of proxy Loss $\trace{ (\hat W - W) H (\hat W - W)^T }$ over OPT models 125m to 2.7b.
Proxy is averaged over models normalized by their model dimension (768, 1024, 2048, 2560) respectively, to ensure proxy loss is comparable across models of different size.
We do not conduct any processing in the proxy evaluation.
Trends in the proxy largely reflect end-to-end results: at 2 bits OPTQ, LDLQ-RG, and Greedy are roughly equivalent, and all do better than nearest.
}
\label{tabSuppProxy}
\end{table}
In Table~\ref{tabSuppProxy} we show the proxy loss of the four quantization methods we evaluate, evaluated over OPT models 125m to 2.7b.
The proxy is averaged over models proxy losses normalized by their model dimension; we use $H$ matrices computed as a result of OPTQ and nearest rounding.
We do not conduct any processing in the proxy evaluation; this is an evaluation of the rounding methods only.
Trends in the proxy reflect end-to-end results. 
OPTQ/LDLQ, LDLQ-RG, and Greedy are roughly equivalent at 2 bits, and do better than Nearest.

\subsection{Section~\ref{secExp} (Evaluating Unbiased Rounding in LDLQ/OPTQ)}
\begin{table}[h!]
\centering
\begin{tabular}{c|cccc|cccc}
& \multicolumn{8}{c}{AVERAGE(Perplexity Unbiased - Perplexity Biased) on Wiki, PTB, C4 ($\downarrow$)}\Bstrut \\
\hline
& \multicolumn{4}{c}{\underline{Incoherence Processing}} &
\multicolumn{4}{c}{\underline{Baseline Processing}}\Tstrut\Bstrut \\
WBits\Tstrut\Bstrut & 125m & 350m & 1.3b & 2.7b & 125m & 350m & 1.3b & 2.7b \\
\hline
4\Tstrut &
1.23 & 0.73 & 0.79 & 0.19 & 27.81 & 5.58 & 1.62 & 0.87
\\
3 &
13.26 & 7.79 & 2.14 & 4.66 & 880.4 & 499.4 & 28.63 & 16.23
\\
2\Bstrut & 
2,501 & 18,732 & 544.8 & 2,251 & 241.3 & 17,945 & 4,831 & 3,798
\\
\end{tabular}
\caption{
Average perplexity difference (i.e. unbiased - biased) for LDLQ/OPTQ on WikiText2, PTB, and C4.
That is, we can run LDLQ with the $\round$ subroutine as stochastic rounding, instead of nearest.
The average difference is positive, meaning that unbiased rounding performs worse than biased (i.e. nearest) across OPT models 125m to 2.7b.
Note the magnitude of the gap increases at lower bits.
}
\label{tabSuppUnbiased}
\end{table}

Note in our formulation for Adaptive Rounding with Linear feedback, the $\round$ subroutine could be biased, or unbiased.
It is typical to perform biased rounding in practice; here we investigate if there is any benefit to switching to unbiased rounding schemes.
Table~\ref{tabSuppUnbiased} computes the average perplexity difference (i.e. $\operatorname{unbiased} - \operatorname{biased}$) for LDLQ/OPTQ on WikiText2, PTB, and C4.
That is, we run LDLQ with the $\round$ subroutine as stochastic rounding, instead of nearest.
The average difference is positive (and large for 2 and 3 bits), meaning that unbiased rounding performs worse than biased (i.e. nearest) across OPT models 125m to 2.7b.
These results indicate that in practice, we want to stick with biased rounding schemes.

\subsection{Section~\ref{secExp} (Evaluating Algorithm~\ref{alg:qcvx} Which Accounts for Clamping)}
\begin{table}[h!]
\centering
\begin{tabular}{cc|ccc|ccc}
& & 
\multicolumn{3}{c}{\underline{Incoherence Processing (ours)}}\Bstrut &
\multicolumn{3}{c}{\underline{Baseline Processing}} \\
Model & WBits\Tstrut\Bstrut & Wiki & PTB & C4 & Wiki & PTB & C4 \\
\hline
\multirow{3}{*}{OPT-1.3b}
& 4\Tstrut &
16.54 & 22.12 & 17.58 &
15.43 & 21.92 & 16.80 \\
& 3 &
18.27 & 23.96 & 18.66 &
20.45 & 28.86 & 20.68 \\
& 2\Bstrut &
38.13 & 51.78 & 31.09 &
6,438.75 & 6,099.27 & 2,057.71 \\
\hline
\multirow{3}{*}{OPT-350m}
& 4\Tstrut &
23.19 & 32.55 & 23.48 &
23.71 & 33.73 & 24.29 \\
& 3 &
25.54 & 36.74 & 25.52 &
33.01 & 45.15 & 30.09 \\
& 2\Bstrut &
286.71 & 367.26 & 144.08 &
8,006.22 & 7,445.70 & 2,317.18 \\
\hline
\multirow{3}{*}{OPT-125m}
& 4\Tstrut &
32.04 & 44.56 & 29.08 &
32.59 & 41.95 & 28.67 \\
& 3 &
40.66 & 51.90 & 32.91 &
50.73 & 74.14 & 41.04 \\
& 2\Bstrut &
1,649.83 & 240.86 & 136.55 &
3,714.11 & 4,703.76 & 1,848.72 \\
\end{tabular}
\caption{
Quantizing OPT models using Algorithm~\ref{alg:qcvx} evaluated on WikiText2, PTB, and C4.
At 2 bits and incoherence processing, we see improvements over LDLQ and LDLQ-RG on OPT-125m and OPT-350m, but diminishing improvements on OPT-1.3b.
Due to Algorithm~\ref{alg:qcvx}'s relatively equivalent performance relative to QuIP at OPT-1.3b, and due to this algorithm's increased computational cost, we decide not to user it.
}
\label{tabSuppADMM}
\end{table}

Table~\ref{tabSuppADMM} shows results from using Algorithm~\ref{alg:qcvx} to quantize OPT models 125m to 1.3b, with incoherence processing and baseline processing.
At 2 bits and incoherence processing, we observe modest improvements over QuIP in terms of perplexity on OPT models 125m and 350m.
However, at the larger OPT-1.3b QuIP beats Algorithm~\ref{alg:qcvx} on 2/3 language generation tasks.
In addition, Algorithm~\ref{alg:qcvx} is computationally more work to run.
Therefore we decide not to use it.

Another observation: in practice, we don't seem to encounter constructions of $W$ and $H$ that are bad for LDLQ/OPTQ. 
Therefore this ``clamping'' issue seems to not be an issue in practice, especially as model size increases.
\end{toappendix}

\section{Introduction}

Large language models (LLMs) have enabled advances in text generation, few-shot learning, reasoning, protein sequence modeling, and other tasks~\citep{brown2020LLMfewshot,workshop2023bloom,meta2022opt}.
The massive size of these models---often reaching into hundreds of billions of parameters---requires sophisticated deployment methods and motivates research into efficient inference algorithms.

This work studies the post-training quantization of LLM parameters as a way to improve their runtime efficiency~\citep{dettmers2022int8,frantar2023gptq,park2023lutgemm,xiao2023smooth,yao2022zero,yuan2023rptq}.
Our key insight is that quantization can be most effective when weight and proxy Hessian matrices are {\em incoherent}---that the weights themselves are even in magnitude,  
and the directions in which it is important to have good rounding accuracy are not too large in any one coordinate.
Intuitively, incoherence can be thought of as a principled form of outlier reduction,
which makes it easier to adaptively round the weights to a finite set of compressed values.
We use this intuition to develop theoretically sound two-bit quantization algorithms that scale to LLM-sized models.

Specifically, we introduce quantization with incoherence processing (QuIP),
a new method motivated by the above insight. QuIP consists of two steps: 
(1)~an adaptive rounding~\cite{nagel2020up} procedure, which minimizes a quadratic proxy objective $\ell(\hat W) = \operatorname{tr}((\hat W - W) H (\hat W - W)^T)$ of the error between the original weights $W$ and the quantized weights $\hat W$ using an estimate of the Hessian $H$;
(2)~efficient pre- and post- processing that ensures that the weight and Hessian matrices are incoherent by multiplying them by a Kronecker product of random orthogonal matrices.
We denote ``incoherence processing'' as both the pre- and post- processing steps of our procedure.
Incoherence processing can be viewed as a form of outlier suppression across the weights and the activation space.

We complement our method with a theoretical analysis---the first for a quantization algorithm that scales to LLM-sized models---which analyzes the role of incoherence and shows that our quantization procedure is optimal within a general class of rounding methods. Interestingly, we find that QuIP without incoherence processing yields a more efficient implementation of an earlier algorithm, OPTQ~\cite{frantar2023gptq}; our paper thus also provides the first theoretical analysis for that method.

Empirically, we find that incoherence processing greatly improves the quantization of large models, especially at higher compression rates, and yields the first LLM quantization method that produces viable results using only two bits per weight. For large LLM sizes (>2B parameters), we observe small gaps between 2-bit and 4-bit compression that further decrease with model size, hinting at the feasibility of accurate 2-bit inference in LLMs.

\textbf{Contributions.}\;
In summary, this paper makes the following contributions: (1) we propose QuIP, a quantization method based on the insight that model parameters should ideally be incoherent; (2) we provide a theoretical analysis for a broad class of adaptive rounding methods that encompass QuIP and OPTQ; (3) we demonstrate that QuIP makes two-bit LLM compression viable for the first time.

\section{Related Work}
\textbf{Adaptive rounding.}
\citet{nagel2020up} are the first to motivate the ``adaptive rounding'' proxy objective~(Eq.~\eqref{eqnAdaptiveEll}) in a principled way.
There are many quantization methods which quantize by optimizing this proxy objective~\citep{dong2019hawq1,dong2020hawq2,hubara2021adaq,li2021brecq,liu2021ptqvit,nagel2020up,yao2021hawq3}.
Many require further retraining which can be expensive, and are not evaluated on the current largest open LLMs (OPT~\citep{meta2022opt}, BLOOM~\citep{workshop2023bloom}).
\citet{lybrand2021greedy} propose a greedy per-neuron quantization procedure that is similar to ours, except they do not consider arbitrary linear functions of the error correction.
Their work bounds the proxy objective, albeit on the first layer only.

\textbf{Post training quantization in large models.}
There is a growing body of work on PTQ in LLMs such as OPT and BLOOM.
The size of these models make it difficult to apply previously developed methods.
The majority of these methods make quantization easier by somehow reducing the range of weights or activations, but still use nearest rounding.
SmoothQuant~\citep{xiao2023smooth} rescales between activations and weights to remove outliers from the activations and make quantization overall easier.
ZeroQuant~\citep{yao2022zero} proposes a per-layer knowledge distillation method.
LLM.int8()~\citep{dettmers2022int8} decompose matrix multiplications into a majority of 8 bit and a minority of 16 bit operations.
LUT-GEMM~\citep{park2023lutgemm} designs kernels to accelerate quantized matrix multiplications.
RPTQ~\citep{yuan2023rptq} reorders activations and quantizes them in groups, reducing the impact of range differences between channels.

\textbf{OPTQ (Formerly known as GPTQ).}
OPTQ~\citep{frantar2023gptq} is based on OBQ~\citep{frantar2022obc}, and proposes a novel rounding method that can work on the largest OPT and BLOOM models.
The method works iteratively over the weight columns in a fixed order: (1)~quantize with nearest rounding and compute the error, (2)~update the remaining weights with a scaled error, and (3)~repeat.

\textbf{Other quantization methods.}
There are other quantization procedures which do not round based on the proxy objective of~\cite{nagel2020up}, or are not designed for the largest language models~\citep{jeon2022biq,li2022qvit,liu2023noisy,nagel2019dfq,wang2020bit,wei2022outlier}.

\section{Quantization With Incoherence Processing: Adaptive Rounding Step }
\label{secQuIPquant}
This section introduces quantization with incoherence processing (QuIP), a new method consisting of:
(1)~an adaptive rounding step;
(2)~efficient pre- and post-processing that ensures weight and Hessian incoherence. We define and analyze step (1) in this section; the next section focuses on step (2).

Following existing state-of-the-art post-training quantization methods, we round weights per-layer by minimizing the ``adaptive rounding'' proxy objective, as in \citet{nagel2020up},
\begin{equation}
    \label{eqnAdaptiveEll}
    \textstyle
    \ell(\hat W)
    =
    \mathbf{E}_x\left[ \norm{ (\hat W - W) x }^2 \right]
    =
    \trace{ (\hat W - W) H (\hat W - W)^T }.
\end{equation}
Here, $W \in \mathbb{R}^{m \times n}$ is the original weight matrix for a given linear layer, $\hat W \in \mathbb{R}^{m \times n}$ are the quantized weights, $x \in \mathbb{R}^n$ is an input vector drawn uniformly at random from a calibration set, and $H$ is the second moment matrix of these vectors, interpreted as a proxy Hessian. Crucially, this formulation lets the quantization be run in parallel across neurons, which 
is tractable for
large language models~\citep{frantar2023gptq}. 
For simplicity, we will focus in this section on rounding to the integers; subsequent sections will extend the analysis to finite grids.

\subsection{LDLQ: An Optimal Adaptive Rounding Method}
Our strategy is to define a family of adaptive rounding methods for optimizing objective (\ref{eqnAdaptiveEll}) and then define LDLQ, the optimal method within that class. Our defined methods iteratively perform the following update for $k=1,2,...,n$:
\[
    \hat W_k = \round( W_k + (W_{1:(k-1)} - \hat W_{1:(k-1)}) a_k),
\]
where $W_k$ denotes the $k$-th column, $W_{1:(k-1)}$ denotes the first $k-1$ columns, the subroutine $\mathcal{Q}$ denotes either nearest rounding or standard unbiased rounding to the integers (which rounds up or down such that $\Exv{\mathcal{Q}(z)} = z$), and $a_k \in \R^{k - 1}$ is some sequence of vectors. 
This scheme rounds columns one at a time; at each step, it adds a ``correction'' term that is a linear function of the residual from the rounding we have done so far. The final $\hat W$ satisfies the following matrix equation:
\begin{equation}
  \label{eqnVectorQuant}
  \hat W = \round( W + (W - \hat W) U),  
\end{equation} 
where $U$ is a strictly upper-triangular matrix whose columns are the vectors $a_k$ and $\mathcal{Q}$ acts elementwise. 
Because $U$ is upper-triangular, $\hat W_k$ only depends on $\hat W_{1:(k-1)}$.

If we let $\eta = \mathcal{Q}( W + (W - \hat W) U) - ( W + (W - \hat W) U)$ denote the quantization error of $\mathcal{Q}$, we find that $\hat W - W = \eta (U + I)^{-1}$ and we can rewrite objective (\ref{eqnAdaptiveEll}) as
\begin{equation}
    \label{eqnProxyEq}
    \operatorname{tr}((\hat W - W) H (\hat W - W)^T)
    =
    \operatorname{tr}(\eta (U + I)^{-1} H (U + I)^{-T}  \eta^T ).
\end{equation}

\paragraph{The LDLQ Method}
How should we specify $U$, the linear feedback from the quantization error of preceding columns in \eqref{eqnVectorQuant}?
Equation~\ref{eqnProxyEq} provides an answer.
If we choose $U \gets \grave U$ such that the LDL decomposition of $H$ is 
\begin{equation}
    \label{eqnH=LDL}
    H = (\grave U + I) D (\grave U + I)^T,
\end{equation}
where $D$ is a (non-negative) diagonal matrix and $\grave U$ is upper unit triangular, then the terms $(U + I)$ in Eq.~\eqref{eqnProxyEq} cancel.
We denote as LDLQ the rounding procedure in Eq.~\eqref{eqnVectorQuant} with $U \gets \grave U$ as the LDL assignment from Eq.~\eqref{eqnH=LDL}. 
We will now see that the LDL assignment of $U$ is in fact optimal.

\newcommand{\titleLDLQopt}{Deriving the Optimality of the LDLQ Adaptive Rounding Procedure} 
\subsection{\titleLDLQopt}\label{secLDLopt}
\begin{toappendix}
\subsection*{Subsection~\ref{secLDLopt} (\titleLDLQopt)} 
\end{toappendix}

In order to reason about optimality, 
we consider weights which are worst and average-case for the proxy loss.
Let $\alglin$ denote a rounding method,
and let $\alglin(W,H)$ be the resulting quantized weights.
Define the \emph{worst-case} ($\Lworst$) and \emph{average} ($\Lavg$) proxy losses with respect to the input weights as
\begin{align}
    \Lworst(\alglin, H) &= \sup_{W \in \R^{m \times n}} \Exv{ \trace{ ( \alglin(W, H) - W) H (\alglin(W, H) - W)^T } } 
    \label{eqnLworst}\\
    \Lavg(\alglin, H)  &= \Exv[{W \sim \operatorname{Unif}[0,1]^{m \times n}}]{ \trace{ ( \alglin(W, H) - W) H (\alglin(W, H) - W)^T } }.
    \label{eqnLavg}
\end{align}

\begin{theoremrep}\label{thmLDLopt}
$\ldl$ is worst and average-case optimal amongst rounding methods which specify the linear feedback $U$ as a function of $H$ (not of $W$), and when rounding to the integers.
That is, for all rounding methods $\alglin$ in the class described by Eq.~\eqref{eqnVectorQuant}, for all positive semi-definite $H$, and for $\round$ as either nearest or stochastic rounding,
\[
    \textstyle
    \frac{m}{4} \operatorname{tr}(D) = \Lworst(\ldl, H) \leq \Lworst(\alglin, H)
    \;\;\text{and}\;\;
    \frac{m}{c} \operatorname{tr}(D) = \Lavg(\ldl, H) \leq \Lavg(\alglin, H),
\]
where $D$ is the matrix from the LDL decomposition of $H$, and $c=12$ for nearest, $c=6$ for stochastic.
\end{theoremrep}
\begin{proof}
Let $X$ be the strictly upper triangular matrix associated with the rounding procedure $\alglin$ such that $U \gets X$ in Eq.~\eqref{eqnVectorQuant}.
Let $B \equiv (X + I)^{-1} (\grave U + I)$ where $\grave U$ is from the LDL decomposition of $H$ in Eq.~\eqref{eqnH=LDL}.
The proxy loss is then, 
\begin{align}
    \trace{ (\alglin(W, H) - W) H (\alglin(W, H))^T }
    &\stackrel{\eqref{eqnProxyEq},\eqref{eqnH=LDL}}{=}
    \trace{ \eta (X + I)^{-1} (\grave U + I) D (\grave U + I)^T (X + I)^{-T} \eta^T }\nonumber\\
    &= 
    \trace{ \eta B D B^T \eta^T }.
    \label{eqnThm1proofa} 
\end{align}
With the LDL assignment of $U$, we further have that,
\begin{equation}
    \label{eqnThm1proofb} 
    \trace{ \eta B D B^T \eta^T }
    =
    \trace{ \eta D \eta^T }.
\end{equation}

First, consider the worst-case loss, $\Lworst$.
The goal is to construct a particularly bad case where the entries of $\tilde W$ are $1/2 \pm \epsilon$, and thus when rounding to the integers we will always have error 1/2.
Construct a weight matrix $\tilde W \in \R^{m \times n}$ such that each entry satisfies,
\[
    \tilde W_{ij} 
    = 
    \begin{cases} 
        0.5 - \epsilon &w.p. \; 1/2 \\ 
        0.5 + \epsilon &w.p. \; 1/2 
    \end{cases}
    \;\;\Rightarrow\;\;
    \eta_{ij}
    =
    \begin{cases} 
        +0.5  &w.p. \; 1/2 \\ 
        -0.5 &w.p. \; 1/2 
    \end{cases},
\]
and the quantization errors $\eta \in \R^{m \times n}$ are for each entry $\{+1/2, -1/2\}$ with equal probability.
For this particular $\tilde W$, $\alglin$ achieves proxy loss $\Lworst(\alglin, H) \stackrel{\eqref{eqnThm1proofa}}{=} \Exv{ \trace{ \eta B D B^T \eta^T } } = \frac m4 \trace{ B D B^T }$, with $\round$ as either nearest or stochastic rounding.
It follows from the supremum in the definition of $\Lworst$ in Eq.~\eqref{eqnLworst} that,
$
    \Lworst(\alglin, H) 
    \geq
    \frac m4 \trace{ B D B^T }
$.
For the LDL assignment of $U$, the worst case expected quantization error rounding to the integers is $1/2$.
Therefore,
$
    \Lworst(\ldl, H)
    \stackrel{\eqref{eqnThm1proofb}}{=}
    \frac m4 \trace{ D }
$, again for $\round$ as either nearest or stochastic rounding.
$B$ must be a unit triangular matrix since it is the product of unit triangular matrices.
Therefore $\trace{ B D B^T }$ is minimized when $B = I$, and 
\[
    \Lworst(\ldl, H)
    \leq
    \Lworst(\alglin, H).
\]

Next, consider the average loss, $\Lavg$, where $W \sim Unif[0,1]^{m \times n}$.
For $\round$ as nearest rounding, the entries of the quantization error $\eta$ are $Unif[-\frac 12, \frac 12]$, because each entry is independent and uniformly distributed.
It follows that for any entry of $\eta$, $\Exv{ \eta_{ij}^2 } = \int_{-1/2}^{1/2} x^2 dx = \frac{1}{12}$.
Therefore,
$
    \Lavg(\alglin, H)
    \stackrel{\eqref{eqnThm1proofa}}{=}
    \Exv[{W \sim Unif[0,1]^{m \times n}}]{\trace{ \eta B D B^T \eta^T } }
    =
    \frac{m}{12} \trace{ B D B^T }
$.
For $\round$ as stochastic rounding, the entries of the quantization error $\eta$ are $Unif[-1,1]$.
It follows that for any entry of $\eta$, $\Exv{ \eta_{ij}^2 } = \int_0^1 x (1-x) dx = \frac16$.
Note that for stochastic rounding, the quantization error will be $x$ with probability $(1-|x|)$.
Therefore,
$
    \Lavg(\alglin, H)
    = 
    \frac m6 \trace{ B D B^T }
$. 
Based on these same calculations of $\Exv{ \eta_{ij}^2 }$, we have that $\Lavg(LDL, H) \stackrel{\eqref{eqnThm1proofa}}{=} \frac{m}{12} \trace{ D }$ with $\round$ as nearest , and $= \frac m6 \trace { D }$ with $\round$ as stochastic rounding.
By the same reasoning on the minimization of $\trace{ B D B^T }$,
\[
    \Lavg(\ldl, H)
    \leq
    \Lavg(\alglin, H).
\]

\end{proof}

\textbf{Remarks.}
The number of rows being quantized is $m$, and each quantization method operates across the $n$ entries of each row.
For all rounding methods described by Eq.~\eqref{eqnVectorQuant}, and for all positive semi-definite $H$, $\round$ as nearest rounding achieves the same worst-case proxy loss as stochastic rounding, but achieves better average proxy loss.

Moving beyond a generic algorithm $\alglin$ within our framework, we consider the common baselines of nearest and stochastic rounding.
These methods are represented within our framework by choosing the appropriate $\round$ subroutine, and setting all entries of the linear feedback to zero.

For these baseline methods, their optimality gap to LDLQ is governed by $\trace{D}$ vs. $\trace{H}$.
For any non-diagonal $\tilde H \succeq 0$, LDLQ achieves strictly lower worst and average-case proxy loss because $\trace{D} < \operatorname{tr}(\tilde {H})$.
Let $\mathcal{B} = \{\near, \stoch\}$. Then,
$\Lworst(\ldl, \tilde H) < \Lworst(\stoch, \tilde H)$
and
$\Lavg(\ldl, \tilde H) < \Lavg(\mathcal{B}, \tilde H)$.
Across OPT models 125m to 2.7b,
$\trace{D}/\trace{H} \leq 0.65$---empirically verifying that the gap is not insignificant.
See Supplement~\ref{suppAddResults} for full details.

\newcommand{\titleIncoherOpt}{Incoherence: Optimality with a Spectral Bound}
\subsection{\titleIncoherOpt}\label{secIncoherOpt}
\begin{toappendix}
\subsection*{Subsection~\ref{secIncoherOpt} (\titleIncoherOpt)}
\end{toappendix}

\begin{figure}
\centering
\begin{minipage}[t]{0.33\linewidth}
\includegraphics[width=1\linewidth]{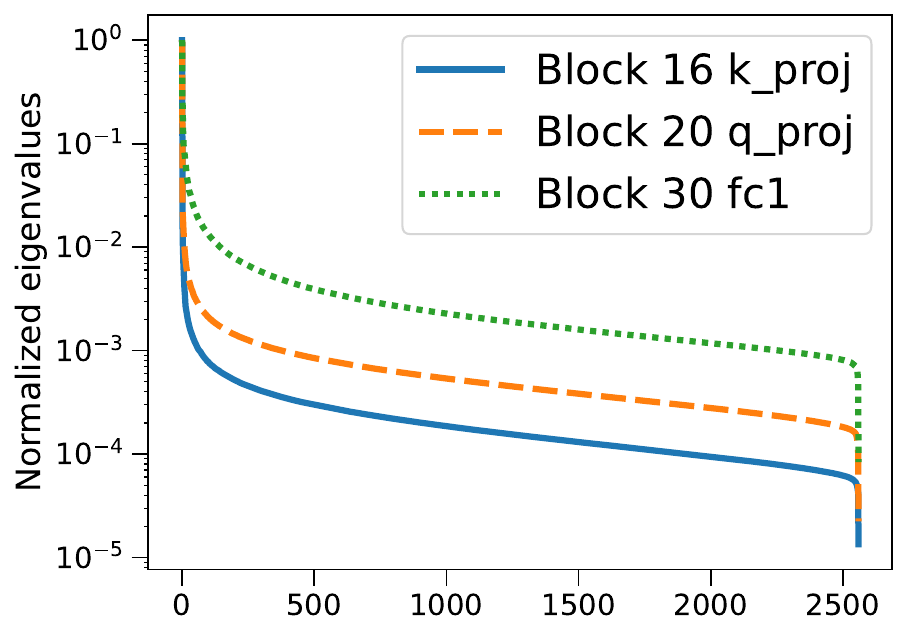}
\caption{$\eig H$ from OPT-2.7b.}
\label{figEigH}
\end{minipage}
\hfill
\begin{minipage}[t]{0.305\linewidth}
\includegraphics[width=1\linewidth]{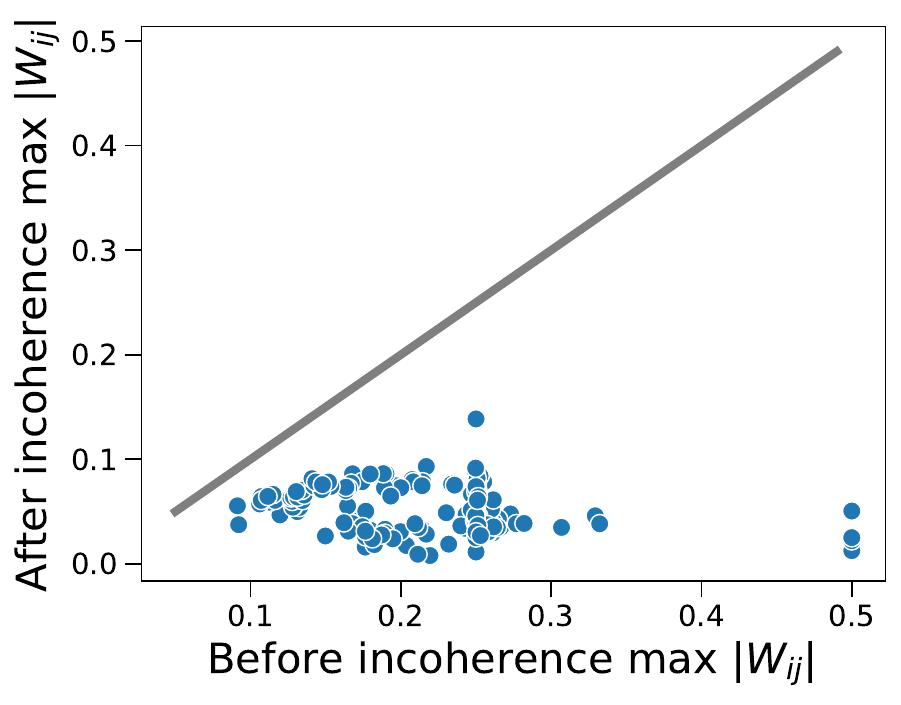}
\caption{Max $|W_{ij}|$ before and after incoherence processing on OPT-2.7b.}
\label{figbeforeafterW}
\end{minipage}
\hfill
\begin{minipage}[t]{0.305\linewidth}
\includegraphics[width=1\linewidth]{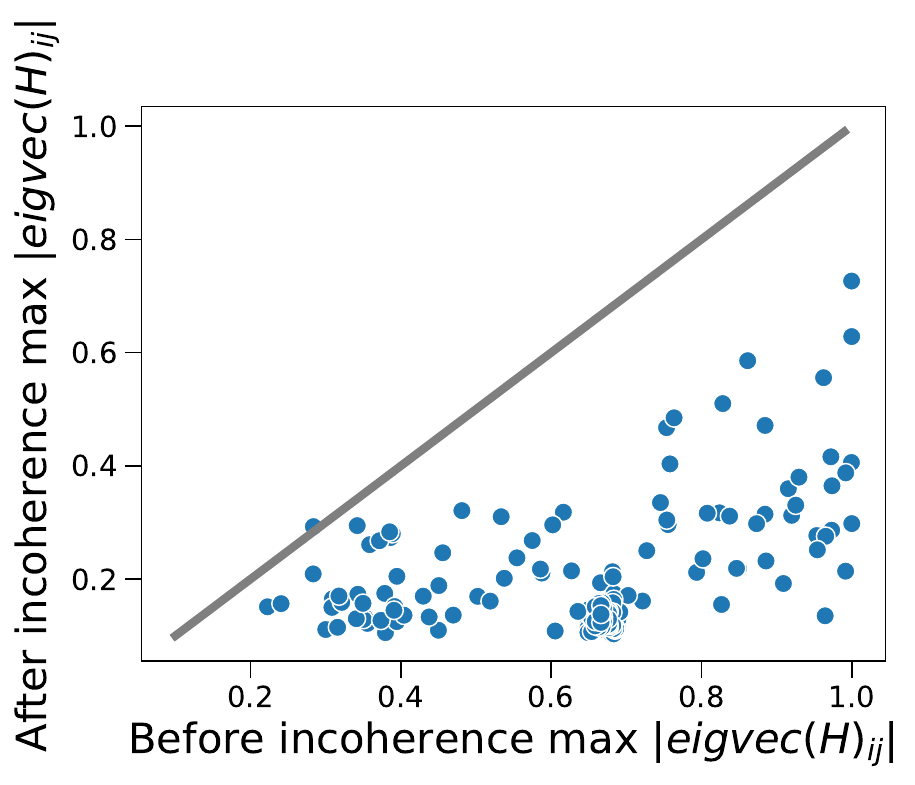}
\caption{Max $|Q_{ij}|$ before and after incoherence processing, with $Q$ the eigenvectors of $H$ on OPT-2.7b.}
\label{figbeforeafterH}
\end{minipage}
\end{figure}

Theorem~\ref{thmLDLopt} gives exact expressions for the proxy loss, albeit with $\trace{D}$, which can be difficult to reason about.
In Figure~\ref{figEigH}, we
empirically observe that $H$ is approximately low-rank: we visualize the spectrum of several randomly chosen $H$ from OPT-2.7b, and observe that the spectrum decays rapidly.
In fact, across all layers of OPT-125m to 2.7b models, a vast majority of $H$ matrices have fewer than a quarter of eigenvalues $>1\%$ of the max eigenvalue; see Supplement~\ref{suppAddResults} for full details.
Given this observation about the low rank of $H$, can we bound the behavior of LDLQ, and thus $\trace{D}$, using the spectrum of $H$?

We do this building on a variant of the incoherence assumption that is specialized to our case~\cite{desa2015matrix, jain2013complete}.
\begin{definitionrep}
We say a symmetric Hessian matrix $H \in \R^{n \times n}$ is $\mu$-incoherent if it has an eigendecomposition $H = Q \Lambda Q^T$ such that for all $i$ and $j$,
$\Abs{ Q_{ij} } = \Abs{ e_i^T Q e_j } \le \mu / \sqrt{n}$. By extension, we say a weight matrix $W \in \mathbb{R}^{m \times n}$ is $\mu$-incoherent if all $i$ and $j$,
$\Abs{ W_{ij} } = \Abs{ e_i^T W e_j } \le \mu \norm{W}_F / \sqrt{mn}$.
\end{definitionrep}%
Note that ``most'' $n \times n$ matrices are incoherent with $\mu = \mathcal{O}(\sqrt{\log n}) = \tilde{\mathcal{O}}(1)$ because a random orthogonal matrix has entries with squared-magnitudes that concentrate around their mean of $1/n$.
Incoherence in $W$ can be viewed as a form of outlier reduction: a small bound on the magnitude of its entries means that we do not need to scale it as much to make it fit in the finite range of representable low-precision numbers.
Figures~\ref{figbeforeafterW} and~\ref{figbeforeafterH} plot the max absolute weight and hessian eigenvector entries before and after our incoherence processing, on all layers in OPT-2.7b.
A line with slope=1 is drawn for reference.
We see that $W$ and $H$ are more incoherrent after our incoherence processing is applied.
Making $H$ incoherent is less intuitive, but its utility is motivated by the following lemma.

\begin{toappendix}    
\begin{lemma}
\label{lemmaSigmaRecurrence}
Let $H \in \R^{n \times n}$ be a positive semi-definite symmetric matrix, and let $a_1,\ldots,a_n$ be a sequence of vectors in $\R^n$. Consider the recurrence given by $\Sigma_0 = 0 \in \R^{n \times n}$ and from $k=0$ to $n-1$
\[
    \Sigma_{k+1} = (I - e_k a_k^T) \Sigma_k (I - a_k e_k^T) + e_k e_k^T.
\]
Let $\ell(a_1,\ldots,a_n) = \trace{H \Sigma_n}$.
Then if $H = L D L^T$ is the LDL decomposition of $H$, a global minimum of $\ell$ occurs when $a_k$ is the $k$th column of $L$, and at this minimum, $\ell = \trace{D}$.
\end{lemma}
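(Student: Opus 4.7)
The plan is to unroll the recurrence so that $\Sigma_n$ becomes a sum of rank-one matrices, then change variables from the $(a_k)$ to a more convenient set of vectors $(c_k)$ and recognize the resulting objective as a decoupled collection of standard constrained-quadratic problems whose minima are given by Schur complements.

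First I would unroll. Setting $P_k = I - e_k a_k^T$ and $M_k = P_{n-1} P_{n-2} \cdots P_{k+1}$ (with $M_{n-1} = I$), induction on the recurrence yields
\[
    \Sigma_n \;=\; \sum_{k=0}^{n-1} (M_k e_k)(M_k e_k)^T \;=\; \sum_{k=0}^{n-1} c_k c_k^T,
\]
where $c_k := M_k e_k$, so that $\ell(a_1,\ldots,a_n) = \trace{H \Sigma_n} = \sum_k c_k^T H c_k$. A second induction (on $m-k$) shows that each $c_k$ satisfies $c_k(k)=1$, $c_k(j)=0$ for $j<k$, and for $m > k$,
\[
    c_k(m) \;=\; -a_m(k) - \sum_{j=k+1}^{m-1} a_m(j)\, c_k(j).
\]
Reading this in reverse --- for each fixed $m$, solve for $a_m(m-1), a_m(m-2), \ldots, a_m(0)$ triangularly in terms of the $c_\cdot(m)$ --- the map from $(a_m(j))_{j<m}$ to tuples $(c_k)$ obeying the linear constraints above is a bijection.

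With this reparametrization in hand, the objective decouples entirely: $c_k$ appears only in the $k$th summand, and different $c_k$'s have disjoint free entries, so minimizing over $(a_k)$ is equivalent to minimizing each $c_k^T H c_k$ separately subject only to $c_k(k)=1$ and $c_k(j)=0$ for $j<k$. Writing $c_k = e_k + v_k$ with $v_k$ supported on $\{k+1,\ldots,n-1\}$ and partitioning $H$ by the split ``index $k$'' vs.\ ``indices $> k$,'' each subproblem is a routine constrained quadratic whose minimum is the Schur complement
\[
    H_{kk} - H_{k,>}\, (H_{>,>})^{-1}\, H_{>,k} \;=\; D_{kk},
\]
which is precisely the $k$th diagonal entry of $D$ in the LDL factorization $H = L D L^T$ (with $L$ upper unit triangular, matching the convention used in the paper's main text). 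Summing over $k$ gives $\min \ell = \trace{D}$. The minimizing $c_k^\star = L^{-T} e_k$ is realized by taking $a_k$ equal to the $k$th column of $L$, which one verifies by matching the recursion for $c_k^\star(m)$ against the identity $(L L^{-1})_{k,m} = 0$ expanded along the $k$th row.

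\textbf{Main obstacle.} The bulk of the work is bookkeeping the bijection between $(a_k)$ and $(c_k)$ carefully and matching the optimal $(c_k^\star)$ against the columns of $L^{-T}$. A secondary subtlety is the singular case: if $H_{>,>}$ fails to be invertible the Schur complement must be interpreted via a pseudoinverse and $L$ ceases to be unique, but the minimum is still attained and the identity $\min \ell = \trace{D}$ survives either by a perturbation argument (apply the invertible case to $H + \epsilon I$ and send $\epsilon \to 0$) or by restricting attention to $\operatorname{range}(H)$. Past these details the argument is pure decoupled quadratic minimization.
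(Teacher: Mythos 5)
Your argument is correct, and it takes a genuinely different route from the paper's. The paper introduces an auxiliary Gaussian recurrence $x_{k+1} = x_k - e_k a_k^T x_k + e_k e_k^T\eta$ with $\Sigma_k = \Exv{x_k x_k^T}$, deduces the closed form $\Sigma_n = (I+A)^{-1}(I+A)^{-T}$ (where $A$ is the strictly lower triangular matrix with rows $a_k$), rewrites $\ell = \trace{B^{-T} H B^{-1}}$ with $B = I+A$, and shows the gradient vanishes at $B^T = L$. You instead unroll $\Sigma_n = \sum_k c_k c_k^T$ directly (your $c_k$ are exactly the columns of $(I+A)^{-1}$), exhibit the triangular bijection $(a_m) \leftrightarrow (c_k)$, and decouple the objective into $n$ independent equality-constrained quadratics whose minima are the Schur complements $D_{kk}$. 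Each route has its virtues: the paper's critical-point computation is compact once the probabilistic trick is seen, but it needs a separate step to upgrade stationarity to global optimality; your decoupling gives global optimality for free (each subproblem is convex) and makes $\min \ell = \trace{D}$ transparent as a sum of Schur complements, at the cost of more index bookkeeping in the bijection. Two small slips, neither fatal: the parenthetical claim that different $c_k$ have \emph{disjoint} free entries is false---for $k < k'$ both are free on $\{k'+1,\ldots,n\}$---but the decoupling is already justified by your bijection, which shows the $c_k$ range freely and independently over their respective affine constraint sets; and the identity that reproduces your recursion $c_k(m) = -L_{km} - \sum_{j=k+1}^{m-1} L_{jm}\, c_k(j)$ is $(L^{-1}L)_{km}=0$, not $(L L^{-1})_{km}=0$.
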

\begin{proof}
First observe that at step $k$, $\Sigma_k$ will be $0$ in all entries $(\Sigma_k)_{ij}$ if $\min(i,j) \ge k$. This means that changing the last $n-k$ entries of $a_k$ does not change $\Sigma$ (or $\ell$) at all. Without loss of generality, set those entries of $a_k$ to $0$. If $A$ is the matrix whose $k$th row is $a_k$, this is equivalent to saying that $A$ is strictly lower triangular.

Next, let $\eta$ be a random Gaussian sampled from $\mathcal{N}(0, I)$, and consider the recurrence given by $x_0 = 0 \in \R^n$ and
\[
    x_{k+1} = x_k - e_k a_k^T x_k + e_k e_k^T \eta.
\]
It's straightforward to see that $\Sigma_k = \Exv{ x_k x_k^T }$. But it's also easy to see that the step-$k$ update only modifies/assigns the $k$th entry of $x$, and does so based only on earlier entries of $x$. Since $e_k^T x_k = 0$, and no later step assigns the $k$-or-lower entries of $x$,
\[
    e_k^T x_n = e_k^T x_{k+1} = 0 - a_k^T x_k + e_k^T \eta = - a_k^T x_n + e_k^T \eta,
\]
which in vector form yields
\[
    (I + A) x_n = \eta.
\]
In particular, this immediately implies that
\[
    \Sigma_n = (I + A)^{-1} (I + A)^{-T}
\]
and
\[
    \ell = \trace{H \Sigma_n} = \trace{ (I + A)^{-T} H (I + A)^{-1} } = \trace{ B^{-T} H B^{-1} }.
\]
where $B = I+A$.
Differentiating with respect to $B$ in strictly lower triangular direction $\Delta$ (the only direction in which we have degress of freedom, since the diagonal of $B$ must be unit) yields 
\[
   -2 \trace{ B^{-T} H B^{-1} \Delta B^{-1} }.
\]
It's not hard to see that if $H = L D L^T$ is the LDL decomposition of $H$, and $B^T = L$, that the gradient is
\[
   -2 \trace{ D \Delta B^{-1} }
   =
   -2 \trace{ \Delta B^{-1} D }
   =
   -2 \langle \Delta^T, B^{-1} D \rangle.
\]
Since $\Delta^T$ is strictly upper triangular, but $B^{-1} D$ must be lower triangular, this is $0$ so we have a minimum. The uniqueness of this minimum (up to assignments of the lower-triangular elements of $A$ or $B$, which have no effect on $\ell$) also immediately follows from the recurrence relation. This implies the minimum is global. This is what we wanted to show.
\end{proof}
\end{toappendix}

\begin{lemmarep}\label{lemDincoherentBd}
Let $H \in \R^{n \times n}$ be a $\mu$-incoherent positive semi-definite symmetric matrix
and let $H = (\grave U + I) D (\grave U + I)^T$ be its LDL Cholesky decomposition, where $\grave U$ is a strictly upper triangular matrix and $D$ is a (non-negative) diagonal matrix. Then,
\[
    \trace{D} \le \frac{\mu^2}{n} \trace{ H^{1/2} }^2.
\]
\end{lemmarep}
\begin{proof}
By continuity of $\trace{D}$ and $\trace{H^{1/2}}$, it suffices to prove the lemma for positive definite $H$.
First, the closure of positive definite symmetric matrices is the set of positive semi-definite symmetric matrices.
Second, consider the set of $H$ that are positive definite and satisfy $\frac{\mu^2}{n} \trace{H^{1/2}}^2 - \trace{D} \geq 0$, i.e. are non-negative.
The closure of this set (i.e. $H \succeq 0$) must also satisfy that the inequality is non-negative.

Let $H = Q \Lambda Q^T$ be the eigendecomposition of $H$.
First, observe that by incoherence,
\[
    e_k^T H^{1/2} e_k
    =
    \sum_{i=1}^n \lambda_i^{1/2} (e_i^T Q e_k)^2
    \le
    \frac{\mu^2}{n} \sum_{i=1}^n \lambda_i^{1/2}
    =
    \frac{\mu^2}{n} \trace{H^{1/2}}.
\]

Set
\[
    \alpha = \frac{\mu^2}{n} \trace{H^{1/2}},
\]
and consider the recurrence from Lemma~\ref{lemmaSigmaRecurrence} with
\[
    a_k = \frac{H^{1/2} e_k}{\alpha}
\]
Then
\[
    \Sigma_{k+1} = \left(I - \alpha^{-1} e_k e_k^T H^{1/2} \right) \Sigma_k \left(I - \alpha^{-1}H^{1/2} e_k e_k^T \right) + e_k e_k^T.
\]
Suppose by way of induction that for some scalar the covariance $\Sigma_k \preceq \alpha H^{-1/2}$. For the base case, this obviously holds since $\Sigma_0 = 0$. At step $k$,
\begin{align*}
    \Sigma_{k+1} 
    &\preceq
     \left(I - \alpha^{-1} e_k e_k^T H^{1/2} \right) \alpha H^{-1/2} \left(I - \alpha^{-1}H^{1/2} e_k e_k^T \right) + e_k e_k^T
    \\&=
    \alpha H^{-1/2} - 
    2 e_k e_k^T
    +
    \alpha^{-1} e_k e_k^T H^{1/2} e_k e_k^T
    +
    e_k e_k^T
    \\&\preceq
    \alpha H^{-1/2}.
\end{align*}
Note that with this assignment,
\[
    a_k^T \Sigma_k a_k
    \le
    (\alpha^{-1} e_k^T H^{1/2}) (\alpha H^{-1/2}) (\alpha^{-1} H^{1/2} e_k)
    =
    \alpha^{-1} e_k^T H^{1/2} e_k
    \le
    1.
\]

So, by induction it follows that
\[
    \Sigma_n \preceq \frac{\mu^2}{n} \trace{H^{1/2}} \cdot H^{-1/2},
\]
and so
\[
    \trace{H \Sigma_n} \le \frac{\mu^2}{n} \trace{H^{1/2}} \trace{H \cdot H^{-1/2}}
    = \frac{\mu^2}{n} \trace{H^{1/2}}^2.
\]
But from Lemma~\ref{lemmaSigmaRecurrence}, we know that $\trace{D}$ is the global minimum of $\trace{H \Sigma_n}$ for any assignment of $a_k$. This immediately gives us the desired result.
\end{proof}
To the best of our knowledge, this is a novel result  using incoherence to obtain a bound on $\trace{D}$ that depends only on the spectrum of $H$.
To help interpret this result, we derive explicit proxy losses for plain nearest and stochastic rounding, which we will then compare to what LDLQ gets via Lemma~\ref{lemDincoherentBd}.

\begin{lemmarep}\label{lemNearStochProxyH}
Let $H$ be symmetric positive definite.
In the worst case stochastic rounding achieves $\Lworst(\stoch, H) = (m/4) \trace{H}$.
In the average case nearest and stochastic rounding achieve $\Lavg(\{\near, \stoch\}, H) = (m/c) \trace{H}$, where $c=12$ for nearest, and $c=6$ for stochastic.
\end{lemmarep}
\begin{proof}
For nearest and stochastic rounding, set the linear feedback $U$ in Eq.~\eqref{eqnVectorQuant} to be zero.
Stochastic rounding achieves worst-case loss,
\begin{align}
    \Lworst(\stoch, H)
    &\stackrel{\eqref{eqnProxyEq}}{=}
    \sup_{W \in \R^{m \times n}} \Exv{ \trace{ \eta H \eta^T } }
    = \frac m4 \trace{ H }.
    \label{eqnThm3proofb}
\end{align}
For the average-case proxy loss, recall the computations of $\Exv{ \eta_{ij}^2 }$ from the proof of Theorem~\ref{thmLDLopt}.
\begin{align}
    \Lavg(\near, H)
    &\stackrel{\eqref{eqnProxyEq}}{=}
    \Exv[{W \sim Unif[0,1]^{m \times n}}]{ \trace{ \eta H \eta^T } }
    = \frac{m}{12} \trace{ H } 
    \label{eqnThm3proofc}\\
    \Lavg(\stoch, H)
    &\stackrel{\eqref{eqnProxyEq}}{=}
    \Exv[{W \sim Unif[0,1]^{m \times n}}]{ \trace{ \eta H \eta^T } }
    = \frac m6 \trace{ H }.
    \label{eqnThm3proofd}
\end{align}
\end{proof}

To interpret this result, consider $H$ rank-$k$ with $\mu^2 k < n$.
By Cauchy-Schwarz, $\operatorname{tr}( H^{1/2} )^2 \leq k \trace{ H }$. Combining Lemma~\ref{lemDincoherentBd} with the LDLQ proxy losses of Theorem~\ref{thmLDLopt} and comparing with Lemma~\ref{lemNearStochProxyH},
{\small\begin{align*}
   \Lworst(\ldl, H) 
   &\leq 
   \frac{m \mu^2}{4n} \trace{ H^{1/2} }^2 
   \le
   \frac{m \mu^2 k}{4n} \trace{H}
   \leq
   \frac{m}{4} \trace{H}
   =
   \Lworst(\stoch, H) \\
   \Lavg(\ldl, H) 
   &\leq 
   \frac{m \mu^2}{cn} \trace{ H^{1/2} }^2 
   \le
   \frac{m \mu^2 k}{cn} \trace{ H } 
   \leq 
   \frac{m}{c} \trace{H}
   =
   \Lavg(\mathcal{B}, H),
\end{align*}}%
where $\mathcal{B} \in \{\near, \stoch\}$, and $c$ is as given in Theorem~\ref{thmLDLopt}.
This shows that for sufficiently low-rank $H$, LDLQ is asymptotically better than plain nearest and stochastic rounding by a factor of $\mu^2 k / n$.

\newcommand{\titleNoIncohSpectral}{Without incoherence: no improvement with a spectral bound}
\paragraph{\titleNoIncohSpectral.}
By assuming incoherence, we were able to show LDLQ gets an asymptotically better bound in terms of just the spectrum of $H$. We might ask: \emph{was the incoherence assumption necessary to get this result?} The following theorem answers this question in the affirmative by showing that without incoherence, the best spectral bound for LDLQ cannot differentiate it from the nearest and stochastic rounding baselines.
\begin{toappendix}
\subsection*{\titleNoIncohSpectral}
\end{toappendix}

\begin{theoremrep}\label{thmOPTQequivNearStoch}
Consider all $\tilde H$ with the same spectrum as $H$.
For any positive semi-definite $H$, the following holds.
On the worst-case loss $\ldl$ achieves the same error as stochastic rounding,
\[
    \sup_{\tilde H s.t. \eig{\tilde H} = \eig{H}} \Lworst(\ldl, \tilde H)
    =
    \Lworst(\stoch, H) 
    = 
    \frac m4 \trace{ H }.
\]
On the average-case loss $\ldl$ achieves the same error as the corresponding rounding routine. Let $\mathcal{B} = \{\near, \stoch\}$ and $c=12$ for nearest, $c=6$ for stochastic.
\[
    \sup_{\tilde H s.t. \eig{\tilde H} = \eig{H}} \Lavg(\ldl^\ast, \tilde H)
    =
    \Lavg(\mathcal{B}, H) 
    = 
    \frac{m}{c} \trace { H }.
\]
\end{theoremrep}
\begin{proof}
See Lemma~\ref{lemNearStochProxyH} for calculations on the proxy loss for nearest and stochastic rounding.

For LDLQ, we will derive lower and upper bounds on $\sup_{\tilde H s.t. \eig{\tilde H} = \eig{H}} \Lworst(\ldl, \tilde H)$ and $\sup_{\tilde H s.t. \eig{\tilde H} = \eig{H}} \Lavg(\ldl, \tilde H)$, and show they are equal.
To construct a lower bound, consider $\tilde H = I \Lambda I$ where $\Lambda$ are the eigenvalues of $H$.
This decomposition is also the LDL decomposition of $\tilde H$, rewritten as $\tilde H=(U + I) D (U + I)^{-1}$.
It follows that $\trace{ D } = \trace{ \tilde H }$ for this $\tilde H$.
Combine this result with the worst and average-case losses calculated in the proof of Theorem~\ref{thmLDLopt}.
For the worst-case loss from the proof of Theorem~\ref{thmLDLopt}, $\geq \frac m4 \trace{H}$.
The lower bound for the average-case loss is $\geq \frac{m}{12} \trace{H}$ for $\round$ as nearest, and $\geq \frac m6 \trace{H}$ for $\round$ as stochastic.
Now upper bounds are derived using the preceding calculations in Eq.~\eqref{eqnThm3proofb}-\eqref{eqnThm3proofd}, and using the worst and average-case optimality of LDLQ proven in Theorem~\ref{thmLDLopt}.
The lower and upper bounds are tight, proving our result.
\end{proof}

Note that the worst case for comparing LDLQ against these baselines occurs when $H$ is diagonal, see Theorem~\ref{thmLDLopt} and Lemma~\ref{lemNearStochProxyH}.
Assuming incoherence as we do is a natural way to exclude such cases.

\section{Quantization With Incoherence Processing: Incoherence Processing Step }

\begin{algorithm}[t]
\caption{QuIP - Incoherence Pre-Processing}\label{algQuIPpre}
\begin{algorithmic}[1]
\Require $b \in \mathbb{N}$, $H \in \mathbb{R}^{n \times n}$ SPD, original $W \in \mathbb{R}^{m \times n}$, $\rho \in \R_+$, $\alpha \in [0,1]$
\State \textbf{seeded sample} random two-factor orthogonal matrices $U \in \mathbb{R}^{m \times m}$ and $V \in \mathbb{R}^{n \times n}$
\State $H = H + \alpha * \operatorname{mean}(\operatorname{diag}(H)) I$ \Comment{from OPTQ}
\State $\tilde D \gets \sqrt[4]{ \diag{H} / \diag{W^T W} }$ \Comment{$\sqrt[4]{\;\;}$ applies element-wise}
\State $W \gets W \tilde D$; 
\;
$H \gets \tilde D^{-1} H \tilde D^{-1}$ 
\Comment{diagonal rescaling}\label{alg1rescale_a}
\State $W \gets U W V^T$;
\;
$H \gets V H V^T$
\Comment{incoherence}\label{alg1incoherent_a}
\State $s \gets \rho \|W\|_F / \sqrt{mn}$;
\;
$W \gets \frac 12 (\frac 1s W + 1)$
\Comment{reduced quantization range due to incoherency}\label{alg1reducedquant}
\State $W \gets \operatorname{clamp}(W * (2^b-1), 0, 2^b-1)$ \Comment{rescale $W$ to lie within $[0, 2^b-1]$}
\State \Return $W, H, s, \tilde D$
\end{algorithmic}
\end{algorithm}

\begin{algorithm}[t]
\caption{QuIP - Incoherence Post-Processing}\label{algQuIPpost}
\begin{algorithmic}[1]
\Require $b \in \mathbb{N}$, $H \in \mathbb{R}^{n \times n}$ SPD, quantized $W \in [0, 2^b-1]^{m \times n}$, $s \in \R$ \& $\tilde D \in \R^{n \times n}$ (Alg~\ref{algQuIPpre})
\State \textbf{seeded sample} random two-factor orthogonal matrices $U \in \mathbb{R}^{m \times m}$ and $V \in \mathbb{R}^{n \times n}$
\State $W \gets s * \left( ( W / (2^b - 1) ) * 2 - 1 \right)$
\State $W \gets U^T W V$;
\;
$H \gets V^T H V$
\Comment{revert incoherence}
\State \Return $W \gets W \tilde D^{-1}$ \Comment{revert diagonal rescaling}
\end{algorithmic}
\end{algorithm}

Next, we leverage the above incoherence analysis to introduce \emph{incoherence processing}, the second step of the QuIP algorithm.
Our strategy will be to pre-process weight and Hessian matrices to ensure the favorable incoherence properties outlined above.
One straightforward way to make a symmetric matrix incoherent is to conjugate it by a uniform random orthogonal matrix: this will result in each of its eigenvectors being a random unit vector, whose entries will concentrate around magnitude $n^{-1/2}$. 

Specifically, let $U \in \R^{m \times m}$ and $V \in \R^{n \times n}$ be two random orthogonal matrices.
(Let's temporarily ignore how these matrices are generated, or how we would efficiently perform inference.)
We ensure the weight and Hessian are incoherent with high probability through random orthogonal multiplications $\tilde H \gets V H V^T$ and $\tilde W \gets U W V^T$.
Importantly, this transformation preserves the proxy quadratic form since $\operatorname{tr}(\tilde W \tilde H \tilde W^T) = \operatorname{tr}( (U W V^T) (V H V^T) (V W^T U^T) ) = \operatorname{tr}( W H W^T )$.

\newcommand{\titleFastOrth}{Incoherence via Efficient Orthogonal Multiplication}
\subsection{\titleFastOrth}\label{secFastOrth}
\begin{toappendix}
\subsection*{Subsection~\ref{secFastOrth} (\titleFastOrth)}
\end{toappendix}
If all we wanted to do was to store or transmit the weights of the quantized neural network, the above procedure would introduce no overhead, since we can generate a random orthogonal matrix from a seed---making it essentially free to store. However, for running \emph{inference} on a DNN, we need to multiply by the  weight matrix $W$, and here the need to manifest and multiply by $n \times n$ random orthogonal matrices $U, V$ would be prohibitive.

To handle this, we propose to instead use a distribution over random orthogonal matrices for which multiplication is fast. Let $n = pq$ be a factorization of $n$ (where $p \approx q \approx \sqrt{n}$), and set $U = U_L \otimes U_R$ where $U_L$ is sampled uniformly from the $p \times p$ orthogonal matrices and $U_R$ is sampled uniformly from the $q \times q$ orthogonal matrices.  
Multiplication of a vector $x \in \R^n$ by the matrix $U$ can be accomplished by reshaping to a $p \times q$ matrix, multiplying on the left by $U_L$ and the right by $U_R^T$, and then reshaping back: this takes $O(n (p + q)) = o(n^2)$ operations. Using more than two factors in this way is also possible, but using two suffices to make this preprocessing asymptotically non-dominant.

\begin{toappendix}    
\begin{lemma}[Theorem 2.4 from~\citet{lalley2018prob}
]
There exist constants $C$ and $A$ independent of $n$ such that for any function $F$ from the unit sphere in $n$ dimensions to $\R$ that is 1-Lipschitz relative to the Riemannian metric on the sphere,
\[
    \mathbf{P}_{x \sim \mathcal{S}_n}\left( F(x) - \mathbf{E}_{x \sim \mathcal{S}_n}[F(x)] \ge t \right)
    \le
    C \exp\left( -\frac{n t^2}{A} \right)
\]
\label{lemmaSphere}
\end{lemma}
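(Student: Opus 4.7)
The statement is a classical concentration-of-measure inequality on the sphere (Lévy's inequality), so my plan is to follow the standard isoperimetric route rather than attempt anything novel. The central ingredient I would invoke as a black box is Lévy's isoperimetric inequality: among Borel subsets of $\mathcal{S}_n$ of a given uniform measure, spherical caps (geodesic balls) are the sets whose $t$-neighborhood has the smallest measure for every $t > 0$.

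Conditional on that, the argument runs as follows. Let $M$ be a median of $F$ under the uniform distribution on $\mathcal{S}_n$, and set $A = \{x : F(x) \le M\}$, so that $\mathbf{P}(A) \ge 1/2$. Because $F$ is $1$-Lipschitz with respect to the geodesic metric, the event $\{F > M + t\}$ is contained in $\{d(\cdot,A) > t\}$, whose complement is the $t$-neighborhood $A_t$; so it suffices to lower-bound $\mathbf{P}(A_t)$. The isoperimetric inequality gives $\mathbf{P}(A_t) \ge \mathbf{P}(C_t)$, where $C_t$ is the $t$-enlargement of a hemisphere, equivalently the complement of a spherical cap of geodesic radius $\pi/2 - t$. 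A direct integration of the area element on the sphere then shows that this cap has measure at most $C \exp(-n t^2 / A)$ for absolute constants $C$ and $A$, yielding the desired tail bound for $F - M$.

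The last step would be to replace the median $M$ by the expectation $\mathbf{E}[F]$. Integrating the two-sided tail bound already derived shows $|\mathbf{E}[F] - M| = O(1/\sqrt{n})$, and this discrepancy can be absorbed into the constants $C$ and $A$ at the cost of at most a constant factor. The main obstacle in a truly self-contained proof would be the isoperimetric inequality itself, which is typically established by a spherical (two-point) symmetrization argument in the spirit of Schmidt; but since the paper cites the result from \citet{lalley2018prob}, that step can be imported wholesale, leaving only the cap-volume estimate and the median-to-mean correction, both of which are routine.
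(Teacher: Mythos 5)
The paper does not prove this lemma at all: it is stated as an imported result (Theorem~2.4 of \citet{lalley2018prob}), and no proof appears in the appendix. So there is no paper argument to compare against; the only question is whether your reconstruction of the standard proof is sound, and it is. The chain of steps you give — Lévy's isoperimetric inequality reducing the problem to spherical caps, the $1$-Lipschitz property transferring a lower bound on $\mathbf{P}(A_t)$ into a tail bound for $F$ around its median, the elementary estimate that a cap of geodesic radius $\pi/2 - t$ has measure $O\bigl(e^{-cnt^2}\bigr)$, and finally integrating the median tail bound to show $|\mathbf{E}[F] - M| = O(1/\sqrt{n})$ so that the discrepancy is absorbed into $C$ and $A$ — is exactly the classical route to Lévy concentration and each step is correct. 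You are right that the isoperimetric inequality is the genuinely hard input (usually via two-point symmetrization à la Schmidt/Figiel–Lindenstrauss–Milman); importing it as a black box is no less rigorous than what the paper does, which is to import the entire theorem. The only minor caution is bookkeeping in the median-to-mean step: the constant adjustment works uniformly because for $t \lesssim 1/\sqrt{n}$ the right-hand side $C e^{-nt^2/A}$ can be made $\ge 1$ by choosing $C \ge 1$, so the bound is vacuous there and the shift by $O(1/\sqrt{n})$ only affects the regime $t \gtrsim 1/\sqrt{n}$, where it costs a constant factor in the exponent — a detail you gesture at but should state explicitly in a full write-up.
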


\begin{lemma}
    Let $B \in \mathbb{R}^{m \times n}$ be a matrix, and let $x$ be a random vector uniformly distributed on the unit sphere in $\R^n$. Then there exist global constants $A > 0$ and $C > 0$ independent of $m$ and $n$ such that
    \[
        \mathbf{P}\left( \norm{Bx}^2 \ge \frac{A \norm{B}_F^2}{n} \log\left( \frac{C}{\delta} \right) \right)
        \le
        \delta,
    \]
    \label{lemmaOneStep}
\end{lemma}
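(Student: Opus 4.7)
The plan is to reduce this to the spherical concentration inequality stated as Lemma~\ref{lemmaSphere}. The function $F(x) = \|Bx\|$ is $\|B\|_2$-Lipschitz on $\R^n$ (and hence on the sphere equipped with the Riemannian metric, which agrees with the Euclidean metric up to a global constant), so $F / \|B\|_2$ is 1-Lipschitz. Applying Lemma~\ref{lemmaSphere} to $F/\|B\|_2$ will give concentration of $\|Bx\|$ around its mean at rate $\|B\|_2^2 / n$.

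First I would compute $\mathbf{E}_{x \sim \mathcal{S}_n}[\|Bx\|^2]$. By the rotational symmetry of the uniform distribution on the sphere, $\mathbf{E}[xx^T] = I/n$, so $\mathbf{E}[\|Bx\|^2] = \operatorname{tr}(B^T B)/n = \|B\|_F^2 / n$. Jensen then gives $\mathbf{E}[\|Bx\|] \le \|B\|_F / \sqrt{n}$. Combining with Lemma~\ref{lemmaSphere} applied to $F(x)/\|B\|_2$, I obtain
\[
    \mathbf{P}\!\left( \|Bx\| \ge \tfrac{\|B\|_F}{\sqrt{n}} + \|B\|_2 \, t \right)
    \le C \exp\!\left(-\tfrac{n t^2}{A}\right),
\]
where $A, C$ are the constants from Lemma~\ref{lemmaSphere}. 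Setting the right-hand side equal to $\delta$ and solving gives $t = \sqrt{A \log(C/\delta)/n}$.

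The last step is to convert the bound on $\|Bx\|$ into a bound on $\|Bx\|^2$ in the stated form. Squaring gives $\|Bx\|^2 \le 2\|B\|_F^2/n + 2\|B\|_2^2 A \log(C/\delta)/n$ via $(a+b)^2 \le 2a^2+2b^2$. Using the elementary bound $\|B\|_2^2 \le \|B\|_F^2$ absorbs the operator norm into the Frobenius norm, yielding $\|Bx\|^2 \le \bigl(2 + 2A \log(C/\delta)\bigr) \|B\|_F^2 / n$. Finally, choosing new global constants $A' > 0$ and $C' > 0$ large enough so that $2 + 2A \log(C/\delta) \le A' \log(C'/\delta)$ holds uniformly for all $\delta \in (0,1]$ (e.g.\ take $C' \ge C$ and $A' = 2A + 2/\log(C')$ after enlarging $C'$) gives the claimed bound.

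The main obstacle is conceptually minor: the whole argument is a routine application of Lévy-style concentration, so the real care is only in bookkeeping the constants when squaring the deviation bound and folding the operator-norm term into the Frobenius-norm term without losing the $1/n$ scaling.
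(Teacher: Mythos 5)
Your proof is correct and follows essentially the same route as the paper: apply the L\'evy concentration inequality (Lemma~\ref{lemmaSphere}) to the Lipschitz map $x \mapsto \norm{Bx}$, bound the mean by $\norm{B}_F/\sqrt{n}$ via Jensen, then square the deviation bound and absorb the resulting constants into new $A', C'$. The only cosmetic difference is that you normalize by $\norm{B}_2$ (the exact Lipschitz constant) and later invoke $\norm{B}_2 \le \norm{B}_F$, whereas the paper normalizes directly by $\norm{B}_F$ via a gradient estimate; both yield the same $\norm{B}_F^2/n$ scaling.
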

\begin{proof}
Let
\[
    F(x) = \frac{\norm{Bx}}{\norm{B}_F}.
\]
Observe that
\[
    \nabla F(x) = \frac{B^T B x}{\norm{B x} \cdot \norm{B}_F},
\]
and so
\[
    \norm{\nabla F(x)} \le 1.
\]
Also observe that for $x$ drawn uniformly from the sphere in $n$ dimensions,
\[
    \Exv{F(x)} \le \sqrt{\Exv{F(x)^2}} = \frac{1}{\norm{B}_F} \cdot \sqrt{\Exv{ \norm{B x}^2 }}
    =
    \frac{1}{\sqrt{n}}.
\]
So, applying Lemma~\ref{lemmaSphere},
\[
    \mathbf{P}\left( \frac{\norm{Bx}}{\norm{B}_F} - \frac{1}{\sqrt{n}} \ge t \right)
    \le
    C \exp\left( -\frac{n t^2}{A} \right).
\]
If we let $\delta$ be
\[
    \delta = C \exp\left( -\frac{n t^2}{A} \right),
\]
then
\[
    \frac{A}{n} \log\left( \frac{C}{\delta} \right) = t^2
\]
Trivially, then, for some modified global constants $A'$ and $C'$,
\[
    \frac{A'}{n} \log\left( \frac{C'}{\delta} \right) = \left(t + \frac{1}{\sqrt{n}} \right)^2
\]
This means that
\[
    \mathbf{P}\left( \frac{\norm{Bx}^2}{\norm{B}_F^2} \ge \frac{A'}{n} \log\left( \frac{C'}{\delta} \right) \right)
    \le
    \delta,
\]
i.e.
\[
    \mathbf{P}\left( \norm{Bx}^2 \ge \frac{A' \norm{B}_F^2}{n} \log\left( \frac{C'}{\delta} \right) \right)
    \le
    \delta,
\]
This is what we wanted to prove.
\end{proof}
\end{toappendix}

\begin{lemmarep}\label{lemFastInco}
Let $H$ be a positive semi-definite matrix on $\R^{n \times n}$ and $W$ a matrix on $\R^{m \times n}$, and suppose that $m = p_1 \cdot p_2 \cdots p_k$ and $n = q_1 \cdot q_2 \cdots q_k$. Let $U_1, U_2, \ldots, U_k, V_1, V_2, \ldots, V_k$ be independent random orthogonal matrices on $\R^{p_i \times p_i}$ and $\R^{q_i \times q_i}$ respectively.
Set $U$ as the Kronecker product $U = U_1 \otimes U_2 \otimes \cdots \otimes U_k$ and $V$ as $V = V_1 \otimes V_2 \otimes \cdots \otimes V_k$
Then $V H V^T$ is $\mu_H$-incoherent with probability at least $1 - \delta$, and $U W V^T$ is $\mu_W$-incoherent with probability at least $1 - \delta$,
where
\[
    \mu_H
    =
    A^{k/2} \log\left( \frac{C k n^2}{\delta} \right)^{k/2}
    =
    \tilde{\mathcal{O}}\left( 1 \right)
    \;\;\text{and}\;\;
    \mu_W
    =
    A^k \log\left( \frac{2 C k m n}{\delta} \right)^k
    =
    \tilde{\mathcal{O}}\left( 1 \right)
\]
for some global constants $A$ and $C$ independent of $n$ and $k$.
\end{lemmarep}
\begin{proof}
    First we will prove what we want to prove about $H$; then we will prove what we want to prove about $W$. Let $Q$ be a matrix of eigenvectors of $H$.
    Observe that since $Q$ is an orthogonal matrix (by the spectral theorem, because $H$ is symmetric), $Q e_j$ is a unit vector, i.e. $\norm{ Q e_j } = 1$. Call $Q e_j = y$. Also observe that
    \[
        e_i^T (U_1 \otimes U_2 \otimes \cdots \otimes U_k)
        =
        ((e_{i_1}^T U_1) \otimes (e_{i_2}^T U_2) \otimes \cdots \otimes (e_{i_k}^T U_k))
    \]
    for some indices $i_j$.
    Call $e_{i_j}^T U_j = x_j^T$, and observe that the $x_j$ are all independent unit random vectors. So,
    \[
        \left( (U_1 \otimes U_2 \otimes \cdots \otimes U_k) Q \right)_{ij}
        =
        (x_1 \otimes x_2 \otimes \cdots \otimes x_k)^T y
    \]
    for random unit vectors $x_1, \ldots, x_k$ and unit vector $y$.
    We can easily bound this with $k$ applications of Lemma~\ref{lemmaOneStep} and a union bound, yielding
    \[
        \mathbf{P}\left( \left( (x_1 \otimes x_2 \otimes \cdots \otimes x_k)^T y \right)^2 \ge \frac{A^k}{n} \log\left( \frac{C}{\delta} \right)^k \right)
        \le
        k \delta,
    \]
    Setting $\delta \mapsto \frac{\delta}{k n^2}$ yields
    \[
        \mathbf{P}\left( \left( (x_1 \otimes x_2 \otimes \cdots \otimes x_k)^T y \right)^2 \ge \frac{A^k}{n} \log\left( \frac{C k n^2}{\delta} \right)^k \right)
        \le
        \frac{\delta}{n^2},
    \]
    and unioning over all the entries of the large orthogonal matrix,
    \[
        \mathbf{P}\left( \max_{i,j} \; \Abs{ \left( (U_1 \otimes U_2 \otimes \cdots \otimes U_k) Q \right)_{ij} } \ge \sqrt{ \frac{A^k}{n} \log\left( \frac{C k n^2}{\delta} \right)^k } \right)
        \le
        \delta.
    \]
    
    Next, for $W$, observe that if we flatten $W$, then $W/\norm{W}_F$ is a unit vector. Then any entry of the resulting matrix can be written as
    \[
        (x_1 \otimes x_2 \otimes \dots \otimes x_k)^T W (y_1 \otimes y_2 \otimes \dots \otimes y_k)
    \]
    where $x_1, \dots, x_k$ and $y_1, \dots, y_k$ are $k$ independent random unit vectors.
    We can easily bound this with $2k$ applications of Lemma~\ref{lemmaOneStep} and a union bound, yielding
    \[
        \mathbf{P}\left( \left( 
        (x_1 \otimes x_2 \otimes \dots \otimes x_k)^T W (y_1 \otimes y_2 \otimes \dots \otimes y_k)
        \right)^2 \ge \frac{A^{2k}}{mn} \log\left( \frac{C}{\delta} \right)^{2k} \right)
        \le
        2 k \delta,
    \]
    Setting $\delta \mapsto \frac{\delta}{2 k m n}$ yields
    \[
        \mathbf{P}\left( \left( 
        (x_1 \otimes x_2 \otimes \dots \otimes x_k)^T W (y_1 \otimes x_2 \otimes \dots \otimes y_k)
        \right)^2 \ge \frac{A^{2k}}{mn} \log\left( \frac{2 C k mn}{\delta} \right)^{2k} \right)
        \le
        \frac{\delta}{m n},
    \]
    and unioning over all the $mn$ entries of the large orthogonal matrix,
    \[
        \mathbf{P}\left( \max_{i,j} \; \Abs{ e_i^T (U_1 \otimes U_2 \otimes \dots U_k) W (V_1 \otimes V_2 \otimes \dots \otimes V_k) e_j }  \ge \sqrt{ \frac{A^{2k}}{mn} \log\left( \frac{2 C k mn}{\delta} \right)^{2k} } \right)
        \le
        \delta.
    \]
    This is what we wanted to show.

\end{proof}

\textbf{Remarks.} This lemma means that multiplying by a random matrix in this family suffices to make a matrix incoherent with parameter $\mu$ only poly-logarithmic in the matrix size.
In our experiments we use $k=2$ factors to construct the orthogonal matrices $U, V$.

\subsection{Additional Heuristics}\label{secAddHeur}
We outline QuIP pre-processing and post-processing in Algorithms \ref{algQuIPpre} and \ref{algQuIPpost}, respectively.
In line~\ref{alg1incoherent_a} of Algorithm~\ref{algQuIPpre}, we apply the aforementioned fast orthogonal multiplication procedure to ensure $W$ and $H$ are incoherent.
We also randomly permute entries at the fast matrix multiplication step to prevent any correlation between attention heads from worsening performance. We introduce a number of additional heuristic improvements that further improve performance.

\textbf{Incoherence-Based Heuristics.}
Line~\ref{alg1rescale_a} diagonally rescales $W$ and $H$ to minimize $\ell(\hat W) \approx \trace{H} \|W\|_F^2$, effectively trading off the spectrum of these matrices to find a minimum.
Motivated by the incoherence of $W$, Line~\ref{alg1reducedquant} computes the quantization range depending on the spectrum $\|W\|_F$, instead of the typical $\max_{i,j} |W_{ij}|$.
Our full QuIP procedure is described in Algorithm~\ref{algQuIP}, which contains calls to the pre- and post-processing sub-steps in Algorithms~\ref{algQuIPpre} and~\ref{algQuIPpost}.

\textbf{Greedy local search.}
Our basic procedure yields a good initial guess with error guarantees. 
We can further lower the proxy loss by running coordinate descent after LDLQ (but before post-processing), updating the weights in the same order as in the initial pass. See Supplement~\ref{suppExtraMethod} for full details.

\begin{algorithm}[t]
\caption{QuIP: Quantization with Incoherence Processing}\label{algQuIP}
\begin{algorithmic}[1]
\Require $b \in \mathbb{N}$, $H \in \mathbb{R}^{n \times n}$ SPD, $W \in \R^{m \times n}$, $\round \in \{\near, \stoch\}$, $\rho \in \R_+$, $\alpha \in [0,1]$
\State $\hat W, H, s, \tilde D \gets \operatorname{Alg~\ref{algQuIPpre}}(b, H, W,  \rho, \alpha)$ \Comment{QuIP Incoherence Pre-Procesing}
\State $H = (\grave U + I) D (\grave U + I)^{-1}$ \Comment{LDL decomposition}
\State \textbf{for} $k \in \{1, \dots, n\}$ \textbf{do} 
$\hat W_k \gets \operatorname{clamp}(\round( W_k + (W - \hat W) \grave U_k ), 0, 2^b-1)$ \Comment{LDLQ}
\State \Return $\hat W \gets \operatorname{Alg~\ref{algQuIPpost}}(b, H, \hat W, s, \tilde D)$ \Comment{QuIP Incoherence Post-Processing}
\end{algorithmic}
\end{algorithm}

\section{Extensions and Further Analyses}
\newcommand{\titleOptqEquiv}{OPTQ is a Special Case of LDLQ}
\subsection{\titleOptqEquiv}\label{secOPTQequiv}
\begin{toappendix}
\subsection*{Subsection~\ref{secOPTQequiv} (\titleOptqEquiv)}
\end{toappendix}
We prove a novel theoretical insight: QuIP without incoherence processing (i.e.,~LDLQ) is equivalent to a more efficient version of the OPTQ algorithm. 
That is, OPTQ falls under our class of adaptive rounding procedures with linear feedback, and is within-class optimal. 

\begin{theoremrep}
OTPQ~\citep{frantar2023gptq} falls within the class of adaptive rounding procedures with linear feedback as described by Eq.~\eqref{eqnVectorQuant}, and is equivalent to LDLQ in Section~\ref{secQuIPquant}.
\end{theoremrep}
\begin{proof}
OPTQ works in the following way.
After OPTQ has quantized the first $t-1$ components of the row vector $w$, it minimizes the proxy loss over the remaining $n-t+1$ elements, keeping the first $t-1$ elements fixed.
It then quantizes the $t$th element using nearest rounding to the grid and clamping.
It then proceeds to the next column.
If we let $\Delta = \hat w - w$, this proxy loss that it minimizes can be written in block form as
$$
\ell = \Delta_{1:(t-1)} H_{1:(t-1),1:(t-1)} \Delta_{1:(t-1)}^T + 2 \Delta_{1:(t-1)} H_{1:(t-1),t:n} + \Delta_{t:n} H_{t:n,t:n} \Delta_{t:n}^T
$$
and its minimum over $\Delta_{t:n}$ will occur when
$$
0 = \Delta_{1:(t-1)} H_{1:(t-1),t:n} + \Delta_{t:n} H_{t:n,t:n},
$$
i.e.
$$
\Delta_{t:n} = -\Delta_{1:(t-1)} H_{1:(t-1),t:n} \left( H_{t:n,t:n} \right)^{-1}.
$$
Now, suppose that $H = \tilde U D \tilde U^T$ is the LDL decomposition of $H$, where $\tilde U$ is unit upper triangular and $D$ is diagonal. Since $\tilde U$ is upper triangular,
$$
H_{t:n,t:n} = \tilde U_{t:n,t:n} D_{t:n,t:n} \tilde U_{t:n,t:n}^T.
$$
Similarly,
$$
H_{1:(t-1),t:n} = \tilde U_{1:(t-1),t:n} D_{t:n,t:n} \tilde U_{t:n,t:n}^T.
$$
This means that
$$
\Delta_{t:n} = -\Delta_{1:(t-1)} \tilde U_{1:(t-1),t:n} \left( \tilde U_{t:n,t:n} \right)^{-1}.
$$
Now, the only part of the value of $\Delta_{t:n}$ which matters is the first entry, since this is the one that's going to be used to make the next quantization decision. But since $\tilde U_{t:n,t:n}$ is unit upper triangular and so is its inverse, $\left( \tilde U_{t:n,t:n} \right)^{-1} e_t = e_t$, and so
$$
\Delta_t = \Delta_{t:n} e_1 = -\Delta_{1:(t-1)} \tilde U_{1:(t-1),t:n} e_t = -\Delta_{1:(t-1)} \tilde U_{1:(t-1),t} = -\Delta (\tilde U - I) e_t.
$$
Finally, we quantize the $t$-th weight as
\[
    \hat w_t
    =
    \round( w_t - (\hat W - W) (\tilde U - I) e_t ).
\]
This update is equivalent to our adaptive rounding with linear feedback procedure in Eq.~\eqref{eqnVectorQuant}, with $U$ assigned from the LDL decomposition of $H$.
\end{proof}

\textbf{Remarks.}
To the best of our knowledge, this equivalence yields the first theoretical analysis of OPTQ.
Even though the two methods are equivalent, LDLQ is more efficient.
OPTQ's implementation requires a matrix inversion of $H$, and two Cholesky decompositions.
Our implementation of LDLQ performs no matrix inversion, and only one Cholesky decomposition.

\textbf{Empirical Verification.}
The quantized outputs of the OPTQ implementation~\cite{frantar2023gptq} are shown to be exactly identical to the outputs of our LDLQ implementation.
Synthetic random data was used, with $W \sim \operatorname{Unif}[0,1]^{1000 \times 1000}$.
Full details can be found in Supplement~\ref{suppAddResults}.

\newcommand{\titleFiniteGrid}{A Bound for Rounding to a Finite Grid}
\subsection{\titleFiniteGrid}\label{secFiniteGrid}
\begin{toappendix}
\subsection*{Subsection~\ref{secFiniteGrid} (\titleFiniteGrid)}
\end{toappendix}

\begin{wrapfigure}{r}{0.4\textwidth}
\vspace{-4em}
  \begin{center}
  \includegraphics[width=\linewidth]{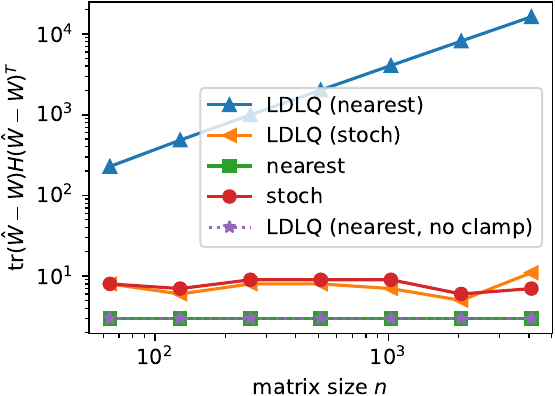}
  \end{center}
\vspace{-1em}
  \caption{LDLQ underperforms.}
  \label{figLDLQbad}
\vspace{-1em}
\end{wrapfigure}

In Section~\ref{secQuIPquant}, we saw that LDLQ (equivalently, OPTQ) is optimal for minimizing the adaptive rounding objective. However, this analysis assumed rounding to the integers. In practice, we do not want to round $W$ just to the integers, but instead to scale it, shift it, and round it a finite subset corresponding to a $b$-bit integer. To do this, the ``real'' LDLQ algorithm uses a clamp operation to restrict the range of quantized values.  Is LDLQ still optimal when this small change is made? It turns out that the answer is \emph{no}, as the following concrete example illustrates.

\textbf{Finite Grid Counterexample.}
Figure~\ref{figLDLQbad} illustrates the behavior of LDLQ and other rounding methods---when restricted via clamping to a finite 4-bit grid $[0,15]$---on a particular example where $H$ is a (cleverly chosen) small perturbation of $(I_n + \mathbf{1}_{n \times n} - e_n e_n^T)/n$, and $W$ has $m = 16$ and is a small perturbation of $\mathbf{1}_{m \times n} / 2$. Details of the setup appear in Supplement~\ref{suppAddResults}. The figure shows that clamped LDLQ with nearest rounding is asymptotically worse, and the clamping to the finite grid is what causes it to be worse in this case.
 
Note that in our experiments in practice, OPTQ has been shown to soundly beat nearest rounding.
This clamping issue does not seem to arise in practice; however, since it is \emph{possible} we do need to take it into account to prove useful end-to-end bounds.

\textbf{A Procedure With a Bound.} In order to address the above issues in theory, here we describe a method that acts to restrict the value of $| \hat W_{ij} - W_{ij} |$, so that the rounded weights will remain inside the grid if $W$ is sufficiently far inside. We do this via the optimization problem with hyperparameter $c$
\begin{align}
    \mbox{minimize: } & \trace{H R^T R} \nonumber\\
    \mbox{over: } & R \text{ unit upper triangular} \label{eqnADMMobj}\\
    \mbox{subject to: } & e_i^T R^T R e_i \le 1 + c, \; \forall i \in \{1,\ldots,n\}.\nonumber
\end{align}

\begin{toappendix}
Algorithm~\ref{alg:qcvx} presents a quantization procedure which theoretically address OPTQ's clamping issue, by incorporating a restriction of $|\hat W_{ij} - W_{ij}|$ into objective~\eqref{eqnADMMobj}.
Note that for simplicity, here we present the explicit case where only two factors are used in each Kronecker product of orthogonal matrices; however, the proof should generalize to any number of factors.
\begin{algorithm}
\caption{``Fixed'' Rounding via a Convex Program}\label{alg:qcvx}
\begin{algorithmic}
\Require $W \in \R^{m \times n}$, $H \in \R^{n \times n}$, $c > 0$, $\rho > 0$
\Require factorization $m = p_1 p_2$, $n = p_3 p_4$
\State \textbf{draw } $U_1 \in \mathbb{R}^{p_1 \times p_1}$ uniformly from the set of orthogonal matrices using seed $\mathsf{seed}(U_1)$
\State \textbf{draw } $U_2 \in \mathbb{R}^{p_2 \times p_2}$ uniformly from the set of orthogonal matrices using seed $\mathsf{seed}(U_2)$
\State \textbf{draw } $U_3 \in \mathbb{R}^{p_3 \times p_3}$ uniformly from the set of orthogonal matrices using seed $\mathsf{seed}(U_3)$
\State \textbf{draw } $U_4 \in \mathbb{R}^{p_4 \times p_4}$ uniformly from the set of orthogonal matrices using seed $\mathsf{seed}(U_4)$
\State $W \gets (U_1 \otimes U_2) W (U_3 \otimes U_4)$
\State $H \gets (U_3^T \otimes U_4^T) H (U_3 \otimes U_4)$
\State $W \gets \frac{2^b - 1}{2} \left( \frac{W}{\rho} + 1 \right)$ elementwise
\State $W \gets \operatorname{clamp}(W, \min=0, \max=2^{b}-1))$ elementwise
\State use ADMM or some other solver to solve
\begin{align*}
    \mbox{minimize: } & \trace{H L^T L} \\
    \mbox{over: } & L \text{ unit upper triangular} \\
    \mbox{subject to: } & e_i^T L^T L e_i \le 1 + c, \; \forall i \in \{1,\ldots,n\}.
\end{align*}
\State note that when $c = \infty$, $L^{-1}$ is the factor from the LDL decomposition of $H$
\State $\grave{U} \gets L^{-1} - I$
\State \textbf{for} $k \in \{1, \dots, n\}$ \textbf{do} 
$\hat W_k \gets \operatorname{clamp}(\round( W_k + (W - \hat W) \grave{U}_k ), 0, 2^b-1)$ \Comment{round with LF}
\State $\hat W \gets \rho \left( \frac{2 \hat W}{2^b - 1} - 1 \right)$
\State $\hat W \gets (U_1^T \otimes U_2^T) \hat W (U_3^T \otimes U_4^T)$
\State \Return $\hat W$ encoded as a tuple of the integer rounded values, the scale factor $\rho$, and the seeds
\end{algorithmic}
\end{algorithm}

\begin{lemma}
Suppose that for positive definite $\mu$-incoherent matrix $H \in \mathbb{R}^{n \times n}$ and scalar $c > 0$, $L$ is the solution to the optimization problem
\begin{align*}
    \mbox{minimize: } & \trace{H L^T L} \\
    \mbox{over: } & L \text{ unit upper triangular} \\
    \mbox{subject to: } & e_i^T L^T L e_i \le 1 + c, \; \forall i \in \{1,\ldots,n\}.
\end{align*}
Then the solution satisfies
\[
	\trace{H L^T L} = \frac{\mu^2}{n \cdot \min(1,c)} \trace{H^{1/2}}^2.
\]
\end{lemma}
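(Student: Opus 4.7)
The plan is to construct an explicit feasible unit upper triangular $L$ attaining the stated bound; since the lemma asks for the minimum, any feasible point gives an upper bound. I will build $L$ through the recurrence of Lemma~\ref{lemmaSigmaRecurrence}, smoothly interpolating between the LDL-optimal choice (used in Lemma~\ref{lemDincoherentBd}) and the identity. Concretely, set $\alpha = (\mu^2/n)\trace{H^{1/2}}$, pick a mixing scalar $\theta \in (0,1]$ to be chosen, and take
\[
    a_k = \frac{\theta}{\alpha} H^{1/2} e_k.
\]
Lemma~\ref{lemmaSigmaRecurrence} then identifies $L = (I+A)^{-T}$ (with $A$ strictly lower triangular, rows $a_k^T$) as a unit upper triangular matrix satisfying $L^T L = \Sigma_n$ and $\trace{H L^T L} = \trace{H \Sigma_n}$.

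For the objective, I would repeat the induction from the proof of Lemma~\ref{lemDincoherentBd}, but with the rescaled $a_k$'s. Expanding the recurrence,
\[
    \Sigma_{k+1} \preceq \gamma H^{-1/2} + \Bigl(1 - \tfrac{\gamma\theta(2-\theta)}{\alpha} + \tfrac{\gamma\theta^2}{\alpha^2}\bigl(e_k^T H^{1/2}e_k - \alpha\bigr)\Bigr) e_k e_k^T,
\]
so the incoherence bound $e_k^T H^{1/2}e_k \le \alpha$ lets the choice $\gamma = \alpha/(\theta(2-\theta))$ make the bracket non-positive and preserve the invariant $\Sigma_k \preceq \gamma H^{-1/2}$. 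This gives
\[
    \trace{H L^T L} = \trace{H \Sigma_n} \le \gamma \trace{H^{1/2}} = \frac{1}{\theta(2-\theta)} \cdot \frac{\mu^2}{n}\trace{H^{1/2}}^2.
\]

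For the constraints, the structural fact noted in Lemma~\ref{lemmaSigmaRecurrence} that $\Sigma_t$ has vanishing $t$-th row and column yields the clean identity $(\Sigma_n)_{kk} = (\Sigma_{k+1})_{kk} = 1 + a_k^T \Sigma_k a_k$. Substituting $a_k = (\theta/\alpha) H^{1/2} e_k$ and applying both $\Sigma_k \preceq \gamma H^{-1/2}$ and incoherence again gives $(\Sigma_n)_{kk} \le 1 + \theta/(2-\theta)$. Thus the feasibility constraint $e_i^T L^T L e_i \le 1+c$ holds as soon as $\theta \le 2c/(1+c)$, and I would close the argument by taking $\theta = \min(1, 2c/(1+c))$. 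A short computation then bounds $1/(\theta(2-\theta))$ by $1/\min(1,c)$: when $c \ge 1$ it equals $1$; when $c<1$ it equals $(1+c)^2/(4c)$, and the elementary inequality $(1+c)^2/(4c) \le 1/c$ (equivalent to $(1-c)^2 \ge 0$) finishes the job.

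The main obstacle is coupling the two pieces correctly. The matrix inequality $\Sigma_k \preceq \gamma H^{-1/2}$ has to be simultaneously tight enough to control the trace (via $\gamma \trace{H^{1/2}}$) and useful when contracted against $H^{1/2}e_k$ to bound the diagonal of $\Sigma_n$; both uses rely on exactly the same incoherence bound $e_k^T H^{1/2}e_k \le \alpha$. The pure LDL choice $\theta = 1$ already appears in Lemma~\ref{lemDincoherentBd} but gives no control on the diagonal constraint for $c < 1$, so the interpolation via $\theta$ is essential, and the delicate step is verifying that the two functions $\theta \mapsto \theta/(2-\theta)$ and $\theta \mapsto 1/(\theta(2-\theta))$ simultaneously admit a $\theta$ producing the $1/\min(1,c)$ scaling.
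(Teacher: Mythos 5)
Your proof is correct and follows essentially the same strategy as the paper's: build an explicit feasible $L$ from the $\Sigma$-recurrence with feedback vectors proportional to $H^{1/2}e_k$, maintain the matrix-inequality invariant $\Sigma_k \preceq \gamma H^{-1/2}$ by induction, use the incoherence bound $e_k^T H^{1/2} e_k \le (\mu^2/n)\trace{H^{1/2}}$ in both the objective and the diagonal-constraint bounds, and close by optimizing a single scalar. The only cosmetic difference is the parametrization: the paper uses a pair $(\alpha,\beta)$ with $\alpha\beta = 1$ and $\beta = \max(h, h/c)$ where $h = \max_i e_i^T H^{1/2} e_i$, invoking incoherence only at the very end; you instead normalize by the incoherence bound directly in the feedback $a_k = (\theta/\alpha)H^{1/2}e_k$ and use the mixing parameter $\theta = \min(1, 2c/(1+c))$. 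Both choices satisfy the constraints and yield the identical final bound after the elementary inequality $(1+c)^2/(4c) \le 1/c$ for $c<1$. One small point worth flagging: the lemma as stated asserts equality, but both your argument and the paper's establish only the upper bound $\trace{H L^T L} \le \frac{\mu^2}{n\min(1,c)}\trace{H^{1/2}}^2$; the ``$=$'' is a typo and should read ``$\le$''.
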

\begin{proof}
Let $\eta \in \mathbb{R}^{1 \times n}$ be a random standard Gaussian variable as a row vector, let $A$ be a matrix, and consider the recurrence relation over $x_t \in \mathbb{R}^{1 \times n}$ given by $x_0 = 0$ and
\[
	x_t = x_{t-1} - x_{t-1} A e_i e_i^T + \eta e_i e_i^T 
\]
We first note that since $x_t$ is supported only on $\{1,\ldots,t\}$, if $M$ denotes the strictly upper triangular mask, this update step is equivalent to
\[
	x_t = x_{t-1} - x_{t-1} (A \odot M) e_i e_i^T + \eta e_i e_i^T.
\]
From here, it's fairly easy to see by induction that
\[
	x_n = -x_n (A \odot M) + \eta,
\]
and so
\[
	x_n (I + A \odot M) = \eta,
\]
or
\[
	x_n = \eta (I + A \odot M)^{-1}.
\]
Now, since $I + A \odot M$ is a unit upper triangular matrix, its inverse is also a unit upper triangular matrix. If we let $L = (I + A \odot M)^{-1}$, then $L$ is a unit upper triangular matrix and
\[
	\Exv{x_n^T x_n} = L^T L.
\]
We are going to choose $A$ such that $L$ is a feasible solution to our optimization problem and has the desired objective.
Next, let $\Sigma_t = \Exv{x_t^T x_t}$, and observe that
\[
	\Sigma_t = \left( I - A e_i e_i^T \right)^T \Sigma_{t-1} \left( I - A e_i e_i^T \right) + e_i e_i^T.
\]
Let $\alpha > 0$ be some constant to be set later, and set $A = \alpha H^{1/2}$. Suppose by way of induction that for some constant $\beta > 0$ to be set later, $\Sigma_t \preceq \beta H^{-1/2}$.
The base case clearly holds since $\Sigma_0 = 0$. For the inductive step,
\begin{align*}
	\Sigma_t 
	&\preceq 
	\beta \left( I - \alpha H^{1/2} e_i e_i^T \right)^T H^{-1/2} \left( I - \alpha H^{1/2} e_i e_i^T \right) + e_i e_i^T
	\\&= 
	\beta H^{-1/2} - 2 \alpha \beta e_i e_i^T + \alpha^2 \beta e_i e_i^T H^{1/2} e_i e_i^T + e_i e_i^T.
\end{align*}
This inductive step will hold if, letting $h = \max_i e_i^T H^{1/2} e_i$,
\[
	2 \alpha \beta \ge 1 + \alpha^2 \beta h
\]
On the other hand,
\begin{align*}
	e_i^T L^T L e_i 
	&= \Exv{(x_n e_i)^2} 
	\\&=  \Exv{\left( -x_{i-1} A e_i + \eta e_i \right)^2}
	\\&=  \Exv{\left( -x_{i-1} A e_i \right)^2} + 1
	\\&= e_i^T A^T \Sigma_{i-1} A e_i + 1
	\\&= \alpha^2 e_i^T H^{1/2} \Sigma_{i-1} H^{1/2} e_i + 1
	\\&\le \alpha^2 \beta e_i^T H^{1/2} H^{-1/2} H^{1/2} e_i + 1
	\\&\le \alpha^2 \beta e_i^T H^{1/2} e_i + 1.
\end{align*}
So the constraint of our optimization problem will be satisfied if
\[
	\alpha^2 \beta h \le c.
\]
To satisfy these constraints, set $\beta = \max(h, h/c)$ and $\alpha = \beta^{-1}$. Then
\[
	2 \max(h, h/c)^{-1} \cdot \max(h, h/c) \ge 1 + \max(h, h/c)^{-2} \cdot \max(h, h/c) \cdot h,
\]
and
\[
	\max(h, h/c)^{-2} \cdot \max(h, h/c) \cdot h \le c.
\]
Also, the objective will be bounded by
\[
	\trace{H L^T L} = \trace{H \Sigma_n} \le \beta \trace{H^{1/2}} = \max(1, c^{-1}) \cdot h \cdot \trace{H^{1/2}}.
\]
Now, applying incoherence to bound $h$, where $H = U \Lambda U^T$ is the eigendecomposition of $H$,
\[
	e_i^T H^{1/2} e_i 
	= 
	\sum_{j=1}^n \lambda_j^{1/2} (e_i^T U e_j)^2
	\le
	\sum_{j=1}^n \lambda_j^{1/2} \frac{\mu^2}{n}
	=
	\frac{\mu^2}{n} \trace{H^{1/2}}.
\]
So this yields a whole bound of
\[
	\trace{H L^T L} = \frac{\mu^2}{n \cdot \min(1,c)} \trace{H^{1/2}}^2.
\]
This is what we wanted to show.
\end{proof}

\begin{lemma}
Suppose that we quantize the row vector $w \in \mathbb{R}^{1 \times n}$ using $L$ the solution to the optimization problem
\begin{align*}
    \mbox{minimize: } & \trace{H L^T L} \\
    \mbox{over: } & L \text{ unit upper triangular} \\
    \mbox{subject to: } & e_i^T L^T L e_i \le 1 + c, \; \forall i \in \{1,\ldots,n\}
\end{align*}
and
\[
	\hat w = \mathcal{Q}_{\operatorname{stoch}}\left( w - (\hat w - w) (L^{-1} - I) \right),
\]
where $\mathcal{Q}_{\operatorname{stoch}}$ denotes elementwise unbiased stochastic rounding. Then for any $u \in \mathbb{R}^n$ and any $\delta > 0$
\[
	\mathbf{P}\left( \Abs{ (\hat w - w) u } \ge \norm{L u} \sqrt{\frac{1}{2} \log\left( \frac{2}{\delta} \right) } \right) \le \delta.
\]
In particular,
\[
	\mathbf{P}\left( \Abs{ (\hat w - w) (L^{-1} - I) e_i } \ge \sqrt{\frac{c}{2} \log\left( \frac{2}{\delta} \right) } \right) \le \delta.
\]
\end{lemma}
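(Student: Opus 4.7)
The plan is to invert the implicit definition of $\hat w$ so that the rounding residual has a clean linear form in the per-coordinate stochastic rounding errors, and then apply Azuma--Hoeffding to the resulting scalar sum. Let $\eta \in \R^{1 \times n}$ collect the elementwise stochastic rounding errors, so that $\hat w = w - (\hat w - w)(L^{-1} - I) + \eta$. Rearranging gives $(\hat w - w) L^{-1} = \eta$, equivalently $\hat w - w = \eta L$, and hence $(\hat w - w) u = \eta L u = \sum_{i=1}^n (Lu)_i \, \eta_i$. This reduces the lemma to a tail bound on a weighted sum of the scalar rounding errors.

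Next I would verify the martingale-difference structure needed for Azuma--Hoeffding. Because $L^{-1} - I$ is strictly upper triangular, the value $v_i := w_i - (\hat w - w)(L^{-1} - I) e_i$ depends only on $\hat w_1, \dots, \hat w_{i-1}$, and hence only on $\eta_1, \dots, \eta_{i-1}$. Conditional on this history, $\eta_i = \mathcal{Q}_{\operatorname{stoch}}(v_i) - v_i$ is mean zero, and if we write $v_i = k_i + \alpha_i$ with $k_i \in \mathbb{Z}$ and $\alpha_i \in [0,1)$, it is supported on $\{-\alpha_i,\, 1-\alpha_i\}$, which lies in an interval of length exactly $1$. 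Therefore each term $(Lu)_i \eta_i$ is a conditionally mean-zero martingale difference lying in an interval of length $|(Lu)_i|$.

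Azuma--Hoeffding then gives
\[
    \mathbf{P}\!\left( \Abs{\eta L u} \ge t \right)
    \;\le\;
    2 \exp\!\left( - \frac{2 t^2}{\sum_{i=1}^n (Lu)_i^2} \right)
    \;=\;
    2 \exp\!\left( - \frac{2 t^2}{\norm{Lu}^2} \right),
\]
and solving $\delta = 2 \exp(-2t^2/\norm{Lu}^2)$ for $t$ yields the first claimed bound. For the ``in particular'' statement I substitute $u = (L^{-1} - I) e_i$, obtaining $L u = (I - L) e_i$. Because $L$ is unit upper triangular, $L e_i = e_i + (L - I) e_i$ is a sum of two vectors supported on disjoint index sets (the diagonal and the strict upper triangle), so $\norm{L e_i}^2 = 1 + \norm{(L - I) e_i}^2$. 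The feasibility constraint $e_i^T L^T L e_i \le 1 + c$ therefore forces $\norm{L u}^2 = \norm{(L - I) e_i}^2 \le c$, which plugged into the first bound yields the second.

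The main obstacle is stating the conditioning argument cleanly: one must resist the temptation to claim $|\eta_i| \le 1/2$ (true only for nearest rounding or when $\alpha_i = 1/2$) and use instead the weaker but sufficient fact that the conditional support of $\eta_i$ has diameter exactly $1$, which is precisely the quantity that enters Azuma--Hoeffding. Once that is in place, the remainder is essentially algebra together with a single application of a textbook martingale concentration bound.
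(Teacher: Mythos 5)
Your proof is correct and follows essentially the same route as the paper: both invert the recursion to get $\hat w - w = \eta L$ and then apply bounded-difference martingale concentration, finishing the second claim by showing $\norm{L(L^{-1}-I)e_i}^2 = e_i^T L^T L e_i - 1 \le c$. The only difference is cosmetic---you cite Azuma--Hoeffding as a black box where the paper re-derives it via Hoeffding's lemma on the MGF followed by an optimized Chernoff bound, and your discussion of why $\eta_i$ is conditionally mean-zero with conditional support of diameter $1$ (via strict upper triangularity of $L^{-1}-I$) is a bit more explicit than the paper's one-sentence version.
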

\begin{proof}
Let $\eta$ be the error of stochastic rounding, and observe that each entry is, conditioned on earlier steps, zero mean and supported on two values that differ by $1$.
Also observe that
\[
	\hat w = \left( w - (\hat w - w) (L^{-1} - I) \right) + \eta,
\]
and so
\[
	\hat w - w = \eta L
\]
and
\[
	\Exv{\exp\left( (\hat w - w) u \right)} = \Exv{\exp\left( \eta L u \right)}.
\]
From a repeated application of Hoeffding's lemma, we get
\[
	\Exv{\exp\left( (\hat w - w) u \right)} \le \exp\left( \frac{1}{8} \norm{ L u }^2 \right).
\]
Setting $u \mapsto \gamma u$ for $\gamma > 0$,
\[
	\Exv{\exp\left( \gamma (\hat w - w) u \right)} \le \exp\left( \frac{\gamma^2}{8} \norm{ L u }^2 \right).
\]
And by Markov's inequality,
\[
	\mathbf{P}\left( \exp\left( \gamma (\hat w - w) u \right) \ge \exp(\gamma R) \right) \le \exp(-\gamma R) \exp\left( \frac{\gamma^2}{8} \norm{ L u }^2 \right),
\]
i.e.
\[
	\mathbf{P}\left( (\hat w - w) u \ge R \right) \le \exp\left( -\gamma R + \frac{\gamma^2}{8} \norm{ L u }^2 \right).
\]
Minimizing the right side over $\gamma$ yields $\gamma = 4 R \norm{L u}^{-2}$ and
\[
	\mathbf{P}\left( (\hat w - w) u \ge R \right) \le \exp\left( -2 R^2 \norm{L u}^{-2} \right).
\]
By a union bound,
\[
	\mathbf{P}\left( \Abs{ (\hat w - w) u } \ge R \right) \le 2 \exp\left( -2 R^2 \norm{L u}^{-2} \right).
\]
Now setting the right side equal to $\delta$,
\[
	\mathbf{P}\left( \Abs{ (\hat w - w) u } \ge \norm{L u} \sqrt{\frac{1}{2} \log\left( \frac{2}{\delta} \right) } \right) \le \delta.
\]
This is what we wanted to show. The second statement follows from the fact that
\[
	 \norm{L (L^{-1} - I) e_i}^2
	 =
	 \norm{e_i - L e_i}^2
	 =
	e_i^T e_i - e_i^T L e_i - e_i^T L^T e_i + e_i^T L^T L e_i
	\le
	1 - 1 - 1 + (1 + c)
	=
	c.
\]
\end{proof}

\begin{lemma}
Suppose that we quantize the row vector $w \in \mathbb{R}^{1 \times n}$ using $L$ the solution to the optimization problem
\begin{align*}
    \mbox{minimize: } & \trace{H L^T L} \\
    \mbox{over: } & L \text{ unit upper triangular} \\
    \mbox{subject to: } & e_i^T L^T L e_i \le 1 + c, \; \forall i \in \{1,\ldots,n\}
\end{align*}
and
\[
	\hat w = \mathcal{Q}_{\operatorname{stoch}}\left( w - (\hat w - w) (L^{-1} - I) \right),
\]
where $\mathcal{Q}_{\operatorname{stoch}}$ denotes elementwise unbiased stochastic rounding.
Suppose that for some integer $b$, $1 \le w_{ij} \le 2^{b} - 2$.
Then if we set
\[
	c = 2 \left( \log\left( \frac{4 m n}{\delta} \right) \right)^{-1},
\]
then with probability at least $1 - \delta$, $0 \le \hat w_{ij} \le 2^{b} - 1$ and
\[
	\trace{ (\hat w - w) H (\hat w - w)^T } \le \frac{\mu^2 m}{4n} \trace{H^{1/2}}^2 \left( \log\left( \frac{4 mn}{\delta}\right)^2 \right).
\]
\end{lemma}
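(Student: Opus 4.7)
The plan is to stitch together the three preceding lemmas: the structural bound $\trace{H L^T L} \le \frac{\mu^2}{n\,\min(1,c)}\trace{H^{1/2}}^2$; the sub-Gaussian tail $\mathbf{P}(|(\hat w-w)u| \ge \norm{Lu}\sqrt{(1/2)\log(2/\delta)}) \le \delta$; and its specialization at $u=(L^{-1}-I)e_i$, which gives $\sqrt{(c/2)\log(2/\delta)}$. The budget $\delta$ will be split evenly: half for an ``in-range'' event and half for the loss bound, each handled by union bound over the $m\cdot n$ entries (taking the natural interpretation that each of the $m$ rows of $W\in\R^{m\times n}$ is quantized independently using the same $L$, which is what the mixed $w_{ij}$/$m$ notation in the statement must mean).

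\textbf{In-range step.}
First, the choice $c = 2(\log(4mn/\delta))^{-1}$ is made precisely so that $\sqrt{(c/2)\log(4mn/\delta)} = 1$. Applying the specialized bound with per-event probability $\delta/(2mn)$ and union-bounding over all coordinates $(i,j)$, with probability at least $1-\delta/2$ every $|(\hat w_i-w_i)(L^{-1}-I)e_j|$ is at most $1$. Under this event, the argument to the stochastic rounding step at column $j$ of row $i$, namely $z_{ij} = w_{ij} - (\hat w_i - w_i)(L^{-1}-I)e_j$, lies in $[w_{ij}-1, w_{ij}+1]\subseteq[0,2^b-1]$ because $1\le w_{ij}\le 2^b-2$. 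Thus no clamping is triggered, the actual (clamped) trajectory coincides with the unclamped one for which the identity $\hat w - w = \eta L$ holds, and in particular $0 \le \hat w_{ij} \le 2^b-1$.

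\textbf{Loss step and conclusion.}
For the proxy loss, eigendecompose $H = \sum_{j=1}^n \lambda_j q_j q_j^T$ and note
\[
    (\hat w_i - w_i) H (\hat w_i - w_i)^T = \sum_{j=1}^n \lambda_j \bigl((\hat w_i - w_i) q_j\bigr)^2.
\]
Apply the generic concentration inequality with $u=q_j$ at per-event probability $\delta/(2mn)$; union-bounding over the $mn$ pairs $(i,j)$ yields, with probability at least $1-\delta/2$, the uniform bound $((\hat w_i - w_i) q_j)^2 \le \norm{Lq_j}^2 \cdot (1/2)\log(4mn/\delta)$. Summing in $i$ and $j$ gives $\trace{(\hat W-W)H(\hat W-W)^T} \le (m/2)\log(4mn/\delta)\cdot\trace{HL^TL}$, and plugging in the structural bound with $1/\min(1,c) = \log(4mn/\delta)/2$ produces the claimed $\frac{\mu^2 m}{4n}\trace{H^{1/2}}^2(\log(4mn/\delta))^2$. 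A final union bound over the two $\delta/2$ events closes the argument. The main subtlety is self-consistency: the sub-Gaussian tail was derived under the assumption $\hat w - w = \eta L$, i.e.~no clamping, so one must argue that on the high-probability event the unclamped process already stays in range; that is exactly why the in-range step must be handled first, and once it is, the loss analysis applies verbatim.
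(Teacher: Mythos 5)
Your proof takes essentially the same route as the paper's: you invoke the same three preceding lemmas, allocate $\delta/2$ to an ``in-range'' event and $\delta/2$ to the proxy-loss bound, union-bound each over the $mn$ entries, observe that the choice of $c$ makes $\sqrt{(c/2)\log(4mn/\delta)}=1$, and multiply $m\trace{HL^TL}\cdot\tfrac12\log(4mn/\delta)$ by the structural bound $\trace{HL^TL}\le\frac{\mu^2}{n\min(1,c)}\trace{H^{1/2}}^2$ with $\min(1,c)=c$ to land on $\frac{\mu^2 m}{4n}\trace{H^{1/2}}^2(\log(4mn/\delta))^2$. The one place you go beyond the paper is in flagging the self-consistency issue---that the sub-Gaussian tail was derived for the unclamped recurrence $\hat w-w=\eta L$, so one must argue the trajectory stays in range before the loss analysis can be applied; the paper handles this only with the terse closing remark that the output is in range when the argument is, whereas you correctly order the argument so the in-range event is established first. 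Both proofs share the same (benign, unstated) assumption that $\log(4mn/\delta)\ge 2$ so that $c\le 1$ and $\min(1,c)=c$; flagging that would make either version airtight.
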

\begin{proof}
First, from the previous lemmas, if $U e_i$ is the $i$th eigenvector of $H$, with eigenvalue $\lambda_i$ since
\[
	\mathbf{P}\left( \lambda_i ( e_j^T (\hat w - w) U e_i )^2 \ge \lambda_i \norm{L U e_i}^2 \cdot \frac{1}{2} \log\left( \frac{2}{\delta} \right) \right) \le \delta.
\]
By the union bound,
\[
	\mathbf{P}\left( \exists i,j, \; \lambda_i ( e_j^T (\hat w - w) U e_i )^2 \ge \lambda_i \norm{L U e_i}^2 \cdot \frac{1}{2} \log\left( \frac{2 mn}{\delta} \right) \right) \le \delta.
\]
And so
\[
	\mathbf{P}\left( \sum_{i,j} \lambda_i ( e_j^T (\hat w - w) U e_i )^2 \ge \sum_{i,j} \lambda_i \norm{L U e_i}^2 \cdot \frac{1}{2} \log\left( \frac{2 mn}{\delta} \right) \right) \le \delta,
\]
which simplifies to
\[
	\mathbf{P}\left( \trace{ (\hat w - w) H (\hat w - w)^T } \ge m \trace{H L^T L} \cdot \frac{1}{2} \log\left( \frac{2 mn}{\delta} \right) \right) \le \delta.
\]
Now applying the other lemma,
\[
	\mathbf{P}\left( \trace{ (\hat w - w) H (\hat w - w)^T } \ge \frac{\mu^2 m}{2n \cdot \min(1,c)} \trace{H^{1/2}}^2 \log\left( \frac{2 mn}{\delta} \right) \right) \le \delta.
\]
And substituting $\delta \mapsto \delta/2$,
\[
	\mathbf{P}\left( \trace{ (\hat w - w) H (\hat w - w)^T } \ge \frac{\mu^2 m}{2n \cdot \min(1,c)} \trace{H^{1/2}}^2 \log\left( \frac{4 mn}{\delta} \right) \right) \le \frac{\delta}{2}.
\]
On the other hand, again by a union bound from the previous lemma,
\[
	\mathbf{P}\left( \exists i,j, \; \Abs{ e_j^T (\hat w - w) (L^{-1} - I) e_i } \ge \sqrt{\frac{c}{2} \log\left( \frac{4 m n}{\delta} \right) } \right) \le \frac{\delta}{2}.
\]
Setting
\[
	c = 2 \left( \log\left( \frac{4 m n}{\delta} \right) \right)^{-1}
\]
yields
\[
	\mathbf{P}\left( \exists i,j, \; \Abs{ e_j^T (\hat w - w) (L^{-1} - I) e_i } \ge 1 \right) \le \frac{\delta}{2}.
\]
And so by another union bound, the probability that
\[
	\trace{ (\hat w - w) H (\hat w - w)^T } \le \frac{\mu^2 m}{4n} \trace{H^{1/2}}^2 \left( \log\left( \frac{4 mn}{\delta}\right) \right)^2
\]
and
\[
	\max_{i,j} \; \Abs{ e_j^T (\hat w - w) (L^{-1} - I) e_i } \le 1
\]
is no less than $1 - \delta$. It's clear that if this second inequality holds, the value we pass in to the stochastic quantizer will be in range, and thus so will the output. This proves what we want.
\end{proof}

\begin{theorem}
Suppose that we are given an input matrix $w$ with bounded maximum entry magnitude $\norm{w}_{\infty}$ and we want to quantize it using $b$ bits. Suppose that we first re-scale the entries of $w$ by mapping
\[
	w_{ij} \mapsto \frac{2^b - 3}{2} \left( \frac{w_{ij}}{\norm{w}_{\infty}} + 1 \right) + 1;
\]
this guarantees that $1 \le w_{ij} \le 2^b - 2$. Then, suppose we quantize using the procedure described in the previous lemma. Finally, we undo the scaling. Then then with probability at least $1 - \delta$, all the quantized weights will be in range (no overflow or need for clipping) and
\[
	\trace{ (\hat w - w) H (\hat w - w)^T } \le \frac{\mu^2 m}{n (2^b - 3)^2} \trace{H^{1/2}}^2 \norm{w}_{\infty}^2 \left( \log\left( \frac{4 mn}{\delta}\right)^2 \right).
\]
\end{theorem}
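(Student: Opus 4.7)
The plan is to reduce this theorem directly to the preceding lemma, by treating the affine pre-rescaling as a change of variables and then undoing it at the end. Concretely, I will first verify that the map $w_{ij} \mapsto \tilde w_{ij} := \tfrac{2^b-3}{2}\bigl(w_{ij}/\norm{w}_\infty + 1\bigr) + 1$ lands in $[1, 2^b-2]$: since $-1 \le w_{ij}/\norm{w}_\infty \le 1$, the factor $\tfrac{2^b-3}{2}(w_{ij}/\norm{w}_\infty + 1)$ lies in $[0, 2^b-3]$, and adding $1$ yields $[1, 2^b-2]$ as claimed. This lets us invoke the previous lemma on $\tilde w$, giving both the in-range conclusion for $\hat{\tilde w}$ and the trace bound $\tfrac{\mu^2 m}{4n}\trace{H^{1/2}}^2 (\log(4mn/\delta))^2$ with probability at least $1-\delta$.

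Next I would pass the bound back through the inverse scaling. Writing the forward map as $\tilde w = \alpha w + \beta \mathbf{1}$ with $\alpha = \tfrac{2^b-3}{2\norm{w}_\infty}$ and $\beta = \tfrac{2^b-1}{2}$, the inverse map is affine with slope $1/\alpha$, so the per-entry quantization error satisfies $\hat w - w = (\hat{\tilde w} - \tilde w)/\alpha$ (the constant $\beta$ cancels). Consequently,
\[
\trace{(\hat w - w) H (\hat w - w)^T}
=
\alpha^{-2}\, \trace{(\hat{\tilde w} - \tilde w) H (\hat{\tilde w} - \tilde w)^T},
\]
and substituting $\alpha^{-2} = 4\norm{w}_\infty^2 / (2^b-3)^2$ into the lemma's bound yields exactly
\[
\frac{\mu^2 m}{n(2^b-3)^2}\trace{H^{1/2}}^2 \norm{w}_\infty^2 \bigl(\log(4mn/\delta)\bigr)^2,
\]
as desired. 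The in-range claim transfers trivially: the previous lemma gives $0 \le \hat{\tilde w}_{ij} \le 2^b - 1$ on the same high-probability event, which is precisely ``no overflow or clipping'' in the quantized domain.

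There is essentially no hard step: the proof is an exercise in bookkeeping the affine scaling and observing that $H$ is unchanged by the transformation (only $w$ is rescaled, so the same $H$ enters the proxy loss both before and after). The one thing to be careful about is that the rescaling is entrywise identical across columns and rows, so the additive shift $\beta \mathbf{1}$ genuinely cancels in $\hat w - w$; if the shift depended on position, the cancellation would fail. I would also remark that the probability-$\delta$ event on which the lemma's bound and in-range conclusion hold is the same event, so a single union bound is not needed here---the high-probability conclusions transfer jointly through the deterministic affine rescaling.
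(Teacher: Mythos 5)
Your proof is correct and follows exactly the route the paper has in mind, which dismisses the theorem as ``a straightforward consequence of the previous lemma''; you have simply filled in the bookkeeping of the affine change of variables (the shift cancels in $\hat w - w$, the factor $\alpha^{-2} = 4\|w\|_\infty^2/(2^b-3)^2$ carries through the trace, and $H$ is untouched). No gaps.
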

\begin{proof}
This is a straightforward consequence of the previous lemma.
\end{proof}

\begin{theorem}
Suppose that we are given an input matrix $w$ with bounded $\norm{w}_{F}$ and we want to quantize it using $b$ bits. Suppose that we first multiply by two-factor orthogonal matrices, and then we re-scale the entries of $w$ by mapping
\[
	w_{ij} \mapsto \frac{2^b - 3}{2} \left( \frac{w_{ij}}{\norm{w}_F \sqrt{ \frac{A^2}{mn} \log\left( \frac{2C m n}{\delta} \right)^2 }} + 1 \right) + 1;
\]
this guarantees that $1 \le w_{ij} \le 2^b - 2$. Then, suppose we quantize using the procedure described in the previous lemma. Finally, we undo the scaling and multiplication. Then then with probability at least $1 - \delta$, all the quantized weights will be in range (no overflow or need for clipping) and
\begin{align*}
	\trace{ (\hat w - w) H (\hat w - w)^T } &\le \frac{A^4}{n^2 (2^b - 3)^2} \trace{H^{1/2}}^2 \norm{w}_{F}^2 \left( \log\left( \frac{12 C m n^2}{\delta}\right) \right)^6
	\\&=
	\tilde{\mathcal{O}}\left( \frac{1}{n^2 4^b} \trace{H^{1/2}}^2 \norm{w}_{F}^2 \right).
\end{align*}
\end{theorem}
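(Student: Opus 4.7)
The plan is to reduce to the preceding $\ell_\infty$-based theorem by using Lemma~\ref{lemFastInco} to control $\norm{w}_\infty$ after the orthogonal preprocessing. First, I would apply the two-factor orthogonal transformation: replace $w$ by $\tilde w = U w V^T$ and $H$ by $\tilde H = V H V^T$, where $U$ and $V$ are Kronecker products of two independent uniformly random orthogonal factors. Since orthogonal conjugation preserves the proxy objective $\trace{(\hat w - w) H (\hat w - w)^T}$, any bound proved for $\tilde w, \tilde H$ transfers back to the original weights after undoing the transform. It also preserves $\norm{w}_F$, and by Lemma~\ref{lemFastInco} applied with $k=2$, the transformed weight matrix $\tilde w$ is $\mu_W$-incoherent with probability at least $1-\delta/3$ for $\mu_W = A \log(12Cmn/\delta)$, i.e.\ $\mu_W = \tilde{\mathcal{O}}(1)$.

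The second step is to translate incoherence of $\tilde w$ into an $\ell_\infty$ bound: by definition, $\norm{\tilde w}_\infty \le \mu_W \norm{w}_F / \sqrt{mn}$. This is exactly the quantity that appears in the denominator of the rescaling map in the statement, so the hypothesis $1 \le \tilde w_{ij} \le 2^b - 2$ of the previous theorem is satisfied by construction with the stated affine scaling (up to the factor $\sqrt{A^2 \log(\cdot)^2/(mn)} = \mu_W/\sqrt{mn}$). At this point I would invoke the previous theorem, with $w$ replaced by $\tilde w$ and $H$ replaced by $\tilde H$, and with failure probability $\delta/3$, to obtain that, with high probability, no clipping is needed and
\[
    \trace{(\hat{\tilde w} - \tilde w) \tilde H (\hat{\tilde w} - \tilde w)^T}
    \le
    \frac{\mu_W^2 m}{n(2^b-3)^2} \trace{\tilde H^{1/2}}^2 \norm{\tilde w}_\infty^2 \log(4mn/\delta)^2.
\]

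The third step is to substitute $\norm{\tilde w}_\infty^2 \le \mu_W^2 \norm{w}_F^2/(mn)$ into the previous display, absorbing $\mu_W^4 = \tilde{\mathcal{O}}(1)$ and the $\log(4mn/\delta)^2$ factor into the $\tilde{\mathcal{O}}$ notation. Because $\tilde H$ is similar to $H$ via the orthogonal matrix $V$, we have $\trace{\tilde H^{1/2}} = \trace{H^{1/2}}$, and because the transformation is orthogonal the proxy loss on the original weights equals the proxy loss on the transformed weights. Combining these observations yields
\[
    \trace{(\hat w - w) H (\hat w - w)^T}
    \le
    \frac{A^4}{n^2 (2^b-3)^2} \trace{H^{1/2}}^2 \norm{w}_F^2 \log(12 C m n^2/\delta)^6,
\]
and the "in-range" conclusion follows from the same guarantee on $\tilde w$, since the final multiplication by $(U^T \otimes U^T)$ and $(V^T \otimes V^T)$ happens only after the representable integers are stored.

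The bookkeeping, rather than any deep idea, is the only real obstacle: one must (i) verify that the affine rescaling in the statement is exactly what the previous theorem expects once the incoherence bound on $\norm{\tilde w}_\infty$ is substituted, and (ii) combine the three failure events (incoherence of $\tilde w$ via Lemma~\ref{lemFastInco}, the rounding bound, and the no-clip bound) via a union bound with each at probability $\delta/3$ so that the total failure probability is at most $\delta$. Incoherence of $\tilde H$ is not strictly needed for this theorem because the previous lemma's bound is already in terms of $\trace{\tilde H^{1/2}}$ via $\mu_H$-incoherence, but one would equally apply Lemma~\ref{lemFastInco} to $\tilde H$ to supply the incoherence hypothesis demanded by that lemma, again at cost $\delta/3$.
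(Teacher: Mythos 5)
Your proposal is correct in outline and takes the same route as the paper's proof: both reduce to the preceding $\norm{w}_\infty$-based theorem via Lemma~\ref{lemFastInco} with $k=2$, substitute the incoherence bound on $\norm{\tilde w}_\infty$, use invariance of $\trace{(\hat w - w)H(\hat w - w)^T}$, $\norm{w}_F$, and $\trace{H^{1/2}}$ under the orthogonal conjugation, and finish with a union bound after substituting $\delta \mapsto \delta/3$. One bookkeeping slip: the three events to union over are (a)~incoherence of the transformed $W$, (b)~incoherence of the transformed $H$, and (c)~the single high-probability event of the previous theorem, which already bundles the proxy bound and the no-overflow guarantee together; your enumeration treats ``the rounding bound'' and ``the no-clip bound'' as two separate events and then appends $\tilde H$-incoherence at yet another $\delta/3$, which would give four events and a total failure probability of $4\delta/3 > \delta$. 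Also, the $\mu$ in the previous theorem's bound is $\mu_H$ (the Hessian's incoherence parameter), not $\mu_W$, so after substituting $\norm{\tilde w}_\infty^2 \le \mu_W^2\norm{w}_F^2/(mn)$ the absorbed polylogarithmic factor is $\mu_H^2 \mu_W^2$, not $\mu_W^4$.
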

\begin{proof}
It is a straightforward consequence of Lemma~\ref{lemFastInco}, that unioning over the three bounds on the infinity norm of $w$, the incoherence of $H$, and the stochastic rounding, with probability at least $1 - 3 \delta$,
\begin{align*}
\trace{ (\hat w - w) H (\hat w - w)^T } &\le \frac{m}{n (2^b - 3)^2} \trace{H^{1/2}}^2 \norm{w}_{F}^2 \left( \log\left( \frac{4 mn}{\delta}\right)^2 \right) \\&\hspace{4em}\cdot A^2 \log\left(\frac{2 C n^2}{\delta}\right)^2 \cdot  \frac{A^2}{mn} \log\left( \frac{2 C n}{\delta} \right)^2.
\end{align*}
Substituting $\delta \mapsto \delta/3$, 
\begin{align*}
	\trace{ (\hat w - w) H (\hat w - w)^T } &\le \frac{1}{n (2^b - 3)^2} \trace{H^{1/2}}^2 \norm{w}_{F}^2 \left( \log\left( \frac{12 mn}{\delta}\right)^2 \right) \\&\hspace{4em}\cdot A^2 \log\left(\frac{6 C n^2}{\delta}\right)^2 \cdot  \frac{A^2}{n} \log\left( \frac{6 C n}{\delta} \right)^2.
\end{align*}
And this right side is clearly less than
\[
	\trace{ (\hat w - w) H (\hat w - w)^T } \le \frac{A^4}{n^2 (2^b - 3)^2} \trace{H^{1/2}}^2 \norm{w}_{F}^2 \left( \log\left( \frac{12 C m n^2}{\delta}\right) \right)^6.
\]
This is what we wanted to show.
\end{proof}

\end{toappendix}

Our ``fixed'' algorithm solves this convex problem (e.g. with ADMM), then runs QuIP using stochastic rounding and $U = R^{-1} - I$ in place of the LDL decomposition. Observe that for sufficiently large $c$, this is exactly equivalent to base QuIP, since the solution of that optimization problem is given by the LDL decomposition when the constraint is dropped. Doing this (the full algorithm is given in the supplemental) yields the following theorem.

\begin{theoremrep}
Suppose that we run Algorithm~\ref{alg:qcvx} (Supplement) to quantize a matrix $W \in \mathbb{R}^{m \times n}$ by solving the objective~\eqref{eqnADMMobj}. Then there exists an assignment of the algorithm's hyperparameters $c$ and $\rho$ such that with probability at least $1 - \delta$, all the quantized weights will be in range (no overflow or need for clipping) and
\[
	\trace{ (\hat W - W) H (\hat W - W)^T }
	=
	\tilde{\mathcal{O}}\left( \frac{1}{n^2 4^b} \trace{H^{1/2}}^2 \norm{W}_{F}^2 \right).
\]
\end{theoremrep}
\begin{proof}
This follows directly from the previous theorem, which says explicitly what the hyperparameter assignments should be.
\end{proof}

In practice, because clamping rarely causes issues, and because of the significant additional compute needed to solve this program, we always just use QuIP as described in the previous sections, which is equivalent to setting $c$ large and using nearest rounding.

\section{Experiments}\label{secExp}
\textbf{Overview.}
We quantize the OPT~\citep{meta2022opt} family of models (up to 66B parameters) and Llama 2 70B~\citep{meta2023llama} using various quantization and processing methods.
QuIP is superior to OPTQ and other baselines across all model sizes and evaluation tasks.
Most interestingly, incoherence processing yields excellent performance using as little as two bits per weight when paired with any of the quantization methods we consider (including nearest rounding). Two-bit quantization with QuIP is viable at even moderate model sizes (1B parameters), a regime where other two-bit quantization methods fail. At the largest model sizes, the difference between 2-bit and 16-bit weight performance becomes small.
We compare the throughput of QuIP with OPTQ's efficient implementation on language generation and show that it is not much slower.
Additional results on the effectiveness of the proxy loss, unbiased rounding, and Algorithm~\ref{alg:qcvx} are presented in the Supplement~\ref{suppAddResults}.

\textbf{Setup.} 
The experimental infrastructure is built on top of OPTQ's~\citep{frantar2023gptq} repository which is implemented in PyTorch~\citep{paszke2019pytorch}.
We quantize the HuggingFace implementations of the OPT and Llama 2 model families.
All models are quantized on a single GPU, with up to 48GB of memory.
Our calibration set is the same as OPTQ; 128 random 2048 token segments from the C4 dataset~\citep{raffel2020c4} consisting of generic text data from crawled websites.
Therefore, no task-specific data is viewed when quantizing.
Following OPTQ, quantization is performed one Transformer block at a time:
loaded into GPU memory, the Hessian computed, and then the weights quantized.
The current block's inputs are then passed through the quantized block to produce inputs for the following block.
The Hessian is computed from the quantized Transformer up to that point rather than from the full precision model;
like OPTQ, we find this improves quantization.
Further details on the setup can be found in Supplement~\ref{suppAddResults}, including a description of the computational resources used to perform the experiments.

\textbf{Methods.}
We evaluate compositions of several quantization and pre/post processing methods.
For quantization methods, we evaluate nearest rounding, LDLQ (or OPTQ), and two variations.
LDLQ-RG re-orders the weights based on $\operatorname{diag}(H)$ to modify the quantization order and adds further greedy updates to the proxy.
``Greedy'' performs the greedy updates only.
We evaluate the baseline preprocessing from OPTQ which adds $H \gets H +\alpha * \operatorname{mean}(\operatorname{diag}(H))I$
for numerical stability. 
We also evaluate our incoherence processing in Algorithms~\ref{algQuIPpre} and~\ref{algQuIPpost}, denoted as ``IncP''.
With this notation QuIP = LDLQ + IncP, and QuIP-RG = LDLQ-RG + IncP.

\textbf{Datasets.}
We evaluate 
on the following language generation tasks: WikiText2~\citep{merity2016wiki}, Penn Treebank (PTB)~\citep{marcus1994penn}, and C4.
We also evaluate on zero-shot tasks, including LAMBADA (LAMB)~\citep{paperno2016lambada}, ARC Easy (ArcE)~\citep{boratko2018arc}, PiQA~\citep{tat2003piqa}, and StoryCloze (SC)~\citep{2016storycloze}.
See Supplement~\ref{suppAddResults} for the full set of results.

\begin{figure}
\centering
\includegraphics[width=0.92\linewidth]{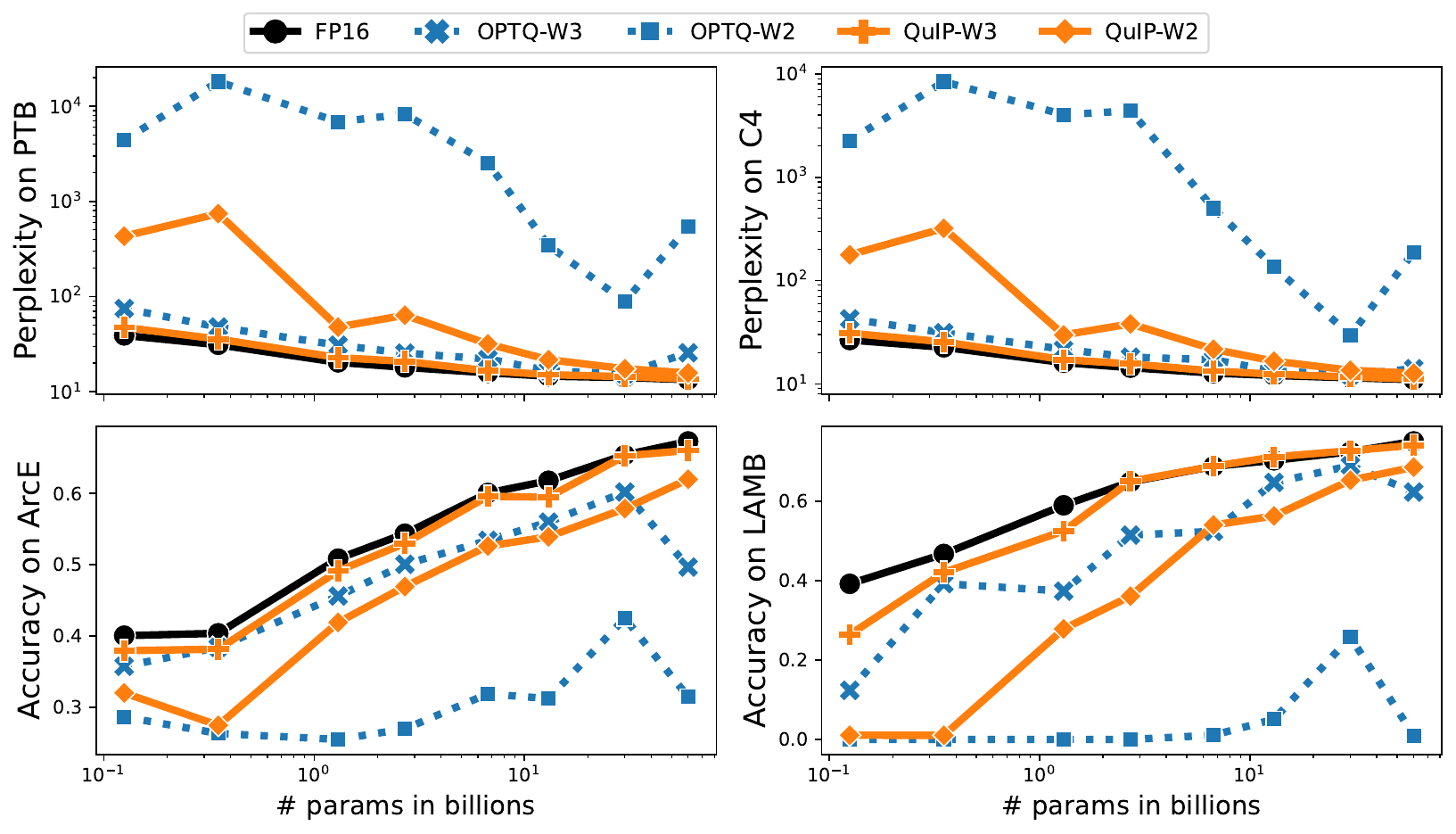}
\caption{
Quantizing OPT models up to 66B parameters.
Our method QuIP is the first PTQ procedure to achieve good quantization at 2 bits per weight, across a variety of model sizes and evaluation tasks.
}
\label{figMain}
\end{figure}

\begin{table}[]
\centering
\small
\begin{tabular}{c|ccccc|ccccc}
& \multicolumn{5}{c|}{OPTQ} & \multicolumn{5}{c}{QuIP (Ours)} \\
\hline
WBits\Tstrut\Bstrut & 
Wiki$\downarrow$ & C4$\downarrow$ & ArcE$\uparrow$ & PiQA$\uparrow$ & SC$\uparrow$ &
Wiki$\downarrow$ & C4$\downarrow$ & ArcE$\uparrow$ & PiQA$\uparrow$ & SC$\uparrow$\\
\hline
16\Tstrut & 
3.319 & 5.709 & 59.72 & 80.90 & 79.95 &
3.319 & 5.709 & 59.72 & 80.90 & 79.95 \\
\hline
4\Tstrut & 
3.596 & 5.905 & 58.96 & 80.52 & 79.12 &
3.531 & 5.869 & 59.81 & 80.47 & 79.63 \\
3 & 
4.907 & 7.099 & 54.38 & 78.56 & 77.72 &
3.853 & 6.135 & 59.81 & 80.25 & 79.31 \\
2 & 
123.908 & 70.541 & 25.34 & 50.54 & 51.75 &
\textbf{6.326} & \textbf{8.937} & \textbf{54.38} & \textbf{75.08} & \textbf{75.37} \\
\hline 
\end{tabular}
\vspace{0.2em}
\caption{
Quantizing Llama 2 70B with QuIP and OPTQ, and evaluating on language generation and zeroshot tasks.
Our incoherence processing enables a step function change in quantization at 2 bits.
}
\label{tabLlama270BRes}
\end{table}

\begin{table}[]
\centering
\small
\begin{tabular}{c|ccccc|ccccc}
& \multicolumn{5}{c|}{Baseline Processing} & \multicolumn{5}{c}{Incoherence Processing (Ours)} \\
\hline
WBits\Tstrut\Bstrut & 
Wiki$\downarrow$ & PTB$\downarrow$ & C4$\downarrow$ & ArcE$\uparrow$ & LAMB$\uparrow$ & 
Wiki$\downarrow$ & PTB$\downarrow$ & C4$\downarrow$ & ArcE$\uparrow$  & LAMB$\uparrow$\\
\hline
16\Tstrut & 9.56 & 14.04 & 11.45 & 65.40 & 72.40 & 9.56 & 14.04 & 11.45 & 65.40 & 72.40 \\
\hline
& \multicolumn{5}{c|}{\underline{OPTQ}}\Tstrut & \multicolumn{5}{c}{\underline{\textbf{QuIP}}} \\
4 & 
9.59 & 14.22 & 11.56 & 64.77 & 72.39 & 
9.60 & 14.18 & 11.50 & 65.32 & 73.20\\
3 & 
10.32 & 15.36 & 12.23 & 60.19 & 68.89 &
9.79 & 14.37 & 11.66 & 65.28 & 72.68
\\
2 & 
71.70 & 88.19 & 29.59 & 42.47 & 25.77 &
\textbf{11.48} & 17.40 & 13.55 & 57.87 & \textbf{65.24}
\\
\hline
& \multicolumn{5}{c|}{\underline{LDLQ-RG}}\Tstrut & \multicolumn{5}{c}{\underline{\textbf{QuIP-RG}}} \\
4 &
9.64 & 14.20 & 11.56 & 63.76 & 71.94 &
9.66 & 14.11 & 11.51 & 64.86 & 71.86 
\\
3 &
10.31 & 15.15 & 12.15 & 63.43 & 69.78 &
9.75 & 14.44 & 11.68 & 63.51 & 71.53 
\\
2 &
49.40 & 73.45 & 29.12 & 41.20 & 26.35 &
11.68 & \textbf{16.94} & 13.44 & \textbf{59.51} & 62.31 
\\
\hline
& \multicolumn{5}{c|}{\underline{Greedy}}\Tstrut & \multicolumn{5}{c}{\underline{Greedy + IncP}} \\
4 &
9.69 & 14.33 & 11.59 & 63.09 & 72.37 &
9.72 & 14.23 & 11.52 & 65.99 & 71.71 
\\
3 &
13.63 & 23.05 & 16.30 & 50.51 & 56.76 &
9.92 & 14.45 & 11.71 & 63.80 & 71.38 
\\
2 &
4816.6 & 3473.81 & 3183.2 & 26.30 & 0.00 &
11.59 & 17.39 & \textbf{13.30} & 58.80 & 64.47 
\\
\hline
& \multicolumn{5}{c|}{\underline{Near}}\Tstrut & \multicolumn{5}{c}{\underline{Near + IncP}} \\
4 &
10.77 & 15.41 & 13.52 & 61.28 & 70.42 &
9.77 & 14.16 & 11.53 & 64.06 & 71.41
\\
3 &
1564.9 & 1526.2 & 1808.2 & 34.47 & 1.73 &
9.89 & 14.49 & 11.74 & 64.06 & 71.41
\\
2 &
41547.8 & 34348.6 & 24815.7 & 25.80 & 0.00 &
12.04 & 18.12 & 14.11 & 56.36 & 60.64
\\
\hline
\end{tabular}
\caption{
Quantizing OPT-30b with various quantization and processing methods, and evaluating on language generation and zeroshot tasks.
Our incoherence processing enables a step function change in quantization at 2 bits, across all rounding methods.
}
\label{tabMainRes}
\end{table}

\textbf{Main Results.}
QuIP is the first PTQ procedure to achieve good quantization at two bits per weight, across a variety of LLM sizes and evaluation tasks.
In Figure~\ref{figMain} we compare QuIP and OPTQ when quantizing to 2 and 3 bits per weight (4-bit quantization works equally well for both methods); we evaluate OPT models (up to 66B) on PTB, C4, ARC Easy, and LAMBADA.
QuIP is superior to OPTQ across the model sizes and evaluation tasks.
At three bits, QuIP matches the full precision model reasonably well.
At two bits and for larger LLMs (>2B parameters), QuIP begins to approach the performance of the full precision model.
As model size increases, so does the quality of QuIP's 2-bit quantization.
We provide plots on the remaining datasets in Supplement~\ref{suppAddResults}.
Note that the dip in OPTQ on OPT-66B is documented in their paper.

Table~\ref{tabLlama270BRes} shows the results of quantizing Llama 2 70B using QuIP and OPTQ. 
Again, QuIP achieves good quantization at two bits while OPTQ does not.

\begin{table}[]
\centering
\begin{tabular}{c|cccc}
Wbits & Rescale & Incoherence & Rescale+Incoherence & Rescale+Incoherence+Quant Range \\
\hline
4\Tstrut & 
24.30 &
24.32 &
24.05 &
23.89 \\
3 &
32.62 &
42.28 &
31.32 &
26.36
\end{tabular}
\caption{
Ablating sub-steps of QuIP's incoherence processing, see Algorithm~\ref{algQuIPpre}.
Perplexities are averaged over WikiText2, PTB, and C4 for OPT-350m.
}
\label{tabAblateIP}
\end{table}

\begin{wraptable}{r}{0.35\textwidth}
\centering
\begin{tabular}{cc}
Method & Throughput  \\
\hline
QuIP\Tstrut & 81ms \\
OPTQ & 53ms \\
\end{tabular}
\caption{
Average per-token throughput (batch size 1) when generating sequences of length 128 with OPT-66B on an A6000 GPU.
}
\label{tabThroughput}
\end{wraptable}

\textbf{Incoherence Processing Ablation.}
Table~\ref{tabMainRes} shows all combinations of quantization and processing methods evaluated on OPT-30B.
At lower weight bits, QuIP's incoherence processing dramatically improves the performance of all quantization methods, across all evaluation tasks.
Remarkably, all quantization methods---even nearest---are viable at two bits with our incoherence processing.
Our modifications in QuIP-RG sometimes give an improvement over QuIP, but further study is required to evaluate these modifications.
Figures for OPT-125M to 13B are in Supplement~\ref{suppAddResults}.

\textbf{Throughput Comparison.}
We evaluate the additional overhead of our incoherence processing during model inference by modifying OPTQ's efficient forward pass.
OPTQ's implementation contains a quantized-matrix full-precision-vector product kernel and was shown to offer speedups over a FP16 baseline.
Our incoherence processing additions are performed in PyTorch.
Table~\ref{tabThroughput} shows that our QuIP implementation is about 1.5$\times$ slower than OPTQ.

\textbf{Further Ablation.}
QuIP's incoherence processing contains several sub-steps.
Table~\ref{tabAblateIP} shows their relative contributions; all are necessary for the full improvement.
Table~\ref{tabAblateRandPermute} shows that the random permutation step within the fast orthogonal multiplication also significantly reduces perplexity.

\begin{wraptable}{r}{0.35\textwidth}
\centering
\begin{tabular}{cc}
\multirow{2}{*}{Wbits} & $\Delta$Perplexity from  \\
& random permute$\downarrow$ \\
\hline
4\Tstrut & -0.22 \\
3 & -9.96 \\
2 & -74.2
\end{tabular}
\caption{
Ablating random permutation within fast orthogonal multiplication.
Differences in perplexity are averaged over WikiText2, PTB, and C4 for OPT-125m.
}
\label{tabAblateRandPermute}
\end{wraptable}

\section{Conclusion}
This paper introduced quantization with incoherence processing (QuIP), an algorithm consisting of (1) an optimal adaptive rounding procedure which minimizes a quadratic proxy of the weight error, and (2) efficient pre- and post-processing to ensure the incoherence of the weight and Hessian matrices by multiplying them by a Kronecker product of random orthogonal matrices.
We showed that QuIP quantization is optimal in a general class of adaptive rounding methods with linear feedback; this theoretical analysis is the first for any quantization algorithm that scales to LLM-sized models. 

Empirically, QuIP achieves the first viable two-bit quantization results for LLMs, especially at large model sizes,
hinting at the feasibility of accurate 2-bit inference in LLMs.

\section*{Acknowledgements and Disclosure of Funding}
This work was partially funded by the National Science Foundation under awards DGE-1922551, CAREER awards 2046760 and 2145577, by the National Institute of Health under award MIRA R35GM151243, and a gift from CISCO.

\newpage

\bibliography{references}
\bibliographystyle{plainnat}

\end{document}